\newcommand{\la}{\langle}
\newcommand{\ra}{\rangle}
\newcommand{\qvalue}{Q}
\newcommand{\vvalue}{V}
\newcommand{\reward}{r}
\newcommand*{\rom}[1]{\expandafter\@slowromancap\romannumeral #1@}
\title{\huge Logarithmic Regret for Reinforcement Learning with Linear Function Approximation}
\author
{
	Jiafan He\thanks{Department of Computer Science, University of California, Los Angeles, CA 90095, USA; e-mail: {\tt jiafanhe19@ucla.edu}} 
	~~~and~~~
	Dongruo Zhou\thanks{Department of Computer Science, University of California, Los Angeles, CA 90095, USA; e-mail: {\tt drzhou@cs.ucla.edu}} 
	~~~and~~~
	Quanquan Gu\thanks{Department of Computer Science, University of California, Los Angeles, CA 90095, USA; e-mail: {\tt qgu@cs.ucla.edu}}
}
\date{}
\begin{document}
\maketitle




\begin{abstract}%
Reinforcement learning (RL) with linear function approximation has received increasing attention recently. 
However, existing work has focused on obtaining $\sqrt{T}$-type regret bound, where $T$ is the number
of interactions with the MDP. 
In this paper, we show that logarithmic regret is attainable under two recently proposed linear MDP assumptions provided that there exists a positive sub-optimality gap for the optimal action-value function. More specifically, under the linear MDP assumption \citep{jin2019provably}, the LSVI-UCB algorithm can achieve $\tilde{O}(d^{3}H^5/\text{gap}_{\text{min}}\cdot \log(T))$ regret; and under the linear mixture 
MDP assumption \citep{ayoub2020model}, the UCRL-VTR algorithm can achieve $\tilde{O}(d^{2}H^5/\text{gap}_{\text{min}}\cdot \log^3(T))$ regret, where $d$ is the dimension of feature mapping, $H$ is the length of episode, $\text{gap}_{\text{min}}$ is the minimal sub-optimality gap, and $\tilde O$ hides all logarithmic terms except $\log(T)$. To the best of our knowledge, these are the first logarithmic regret bounds for RL with linear function approximation. We also establish gap-dependent lower bounds for the two linear MDP models.
\end{abstract}


\section{Introduction}

Designing efficient algorithms that learn and plan in sequential decision-making tasks with large state and action spaces has become a central task of modern reinforcement learning (RL) in recent years. RL often assumes the environment as a Markov Decision Process (MDP), described by a tuple of state space, action space, reward function, and transition probability function. Due to a large number of possible states and actions, traditional tabular reinforcement learning methods such as Q-learning \citep{watkins1989learning}, which directly access each state-action pair, are computationally intractable. A common approach to cope with high-dimensional state and action spaces is to utilize function approximation such as linear functions or neural networks to map states and actions to a low-dimensional space.

Recently, a large body of literature has been devoted to provide regret bounds for online RL with linear function approximation. These works can be divided into two main categories. The first category is \emph{model-free} algorithms, which directly parameterize the action-value function as a linear function of some given feature mapping. For instance, \citet{jin2019provably} studied the episodic MDPs with \emph{linear MDP} assumption, which assumes that both transition probability function and reward function can be represented as a linear function of a given feature mapping. Under this assumption, \citet{jin2019provably} showed that the action-value function is a linear function of the feature mapping and proposed a model-free LSVI-UCB algorithm to obtain an $\tilde O(\sqrt{d^3 H^3T})$ regret, where $d$ is the dimension of the feature mapping, $H$ is the length of the episode, and $T$ is the number
of interactions with the MDP. The second category is \emph{model-based} algorithms, which parameterize the underlying transition probability function as a linear function of a given feature mapping. For example, \citet{ayoub2020model} 
studied a class of MDPs where the underlying
transition probability kernel is a linear mixture model. \citet{ayoub2020model} proposed a model-based UCRL-VTR algorithm with an $\tilde O(d\sqrt{H^3T})$ regret. \citet{zhou2020provably} studied the linear kernel MDP\footnote{Linear kernel MDPs are essentially the same as linear mixture MDPs.} 
in the infinite horizon discounted setting and proposed a algorithm with $\sqrt{T}$-type regret. Although these $\sqrt{T}$-type regrets are standard and easy to interpret, they do not consider any additional \emph{problem-dependent structure} of the underlying MDPs. This motivates us to seek a tighter and instance-dependent regret analysis for RL.

There is a large body of literature on bandits, which study the instance-dependent regret bounds (See  \citet{bubeck2012regret,slivkins2019introduction,lattimore2018bandit} and references therein). Note that bandits can be seen as a special instance of RL problems. Sub-optimality gap has been playing a central role in many gap-dependent bounds for bandits, which is defined as gap between the optimal action and the rest ones. 
 For general RL, previous works \citep{simchowitz2019non, yang2020q} have considered the tabular MDP with sub-optimality gap and proved gap-dependent regret bounds. However, as far as we know, there does not exist such gap-dependent regret results for RL with linear function approximation. Therefore, a natural question arises: 

\begin{center}
    \emph{Can we derive instance/gap-dependent regret bounds for RL with linear function approximation?}
\end{center}

    
We answer the above question affirmatively in this paper. In detail, following \citet{simchowitz2019non, yang2020q}, we consider an instance-dependent quantity called \emph{$\text{gap}_{\text{min}}$}, which is the minimal sub-optimality gap for the optimal action-value function. 
Under the assumption that $\text{gap}_{\text{min}}$ is strictly positive, we show that LSVI-UCB proposed in \citet{jin2019provably} achieves a $\tilde{O}(d^{3}H^5/\text{gap}_{\min}\cdot \log(T))$ regret, and UCRL-VTR proposed by \citet{ayoub2020model} achieves a regret of order $\tilde{O}(d^2H^5/\text{gap}_{\min}\cdot \log^3(T))$. Furthermore, we prove an $\Omega(dH/\text{gap}_{\min})$ lower bound on the regret for both linear MDPs and linear mixture MDPs. To the best of our knowledge, this is the first instance-dependent $\log T$-type regret achieved by RL with linear function approximation. 
Our results suggest that the dependence on $T$ in regrets can be drastically decreased from $\sqrt{T}$ to $\log T$ when considering the problem structure for both model-free and model-based RL algorithms with linear function approximation.

\noindent\textbf{Notation} 
We use lower case letters to denote scalars, and use lower and upper case bold face letters to denote vectors and matrices respectively. For any positive integer $n$, we denote by $[n]$ the set $\{1,\dots,n\}$. For a vector $\xb\in \RR^d$ , we denote by $\|\xb\|_1$ the Manhattan norm and denote by $\|\xb\|_2$ the Euclidean norm. For a vector $\xb\in \RR^d$ and matrix $\bSigma\in \RR^{d\times d}$, we define $\|\xb\|_{\bSigma}=\sqrt{\xb^\top\bSigma\xb}$. For two sequences $\{a_n\}$ and $\{b_n\}$, we write $a_n=O(b_n)$ if there exists an absolute constant $C$ such that $a_n\leq Cb_n$. We use $\tilde O(\cdot)$ to further hide the logarithmic factors. For logarithmic regret, we use $\tilde O(\cdot)$ to hide all logarithmic terms except $\log T$.
\section{Related Work}


\noindent\textbf{Episodic tabular MDP.} There are a series of work focusing on regret or sample complexity of online RL on tabular episodic MDPs. They can be recognized as model-free methods or model-based methods, which depend on whether they explicitly estimate the model (transition probability function) or not. \citet{dann2015sample} proposed a UCFH algorithm that adopts a variant of extended value iteration and obtains a polynomial sample complexity. \citet{azar2017minimax} proposed a UCB-VI algorithm which adapts the Bernstein-style exploration bonus with a $\tilde{O}(\sqrt{HSAT})$ regret which matches the lower bound proposed in \cite{jaksch2010near, osband2016lower} up to logarithmic factors. \citet{zanette2019tighter} proposed an EULER algorithm which utilizes the problem-dependent bound and achieves a $\tilde{O}(\sqrt{HSAT})$ regret. For model-free algorithms, \citet{strehl2006pac} proposed a delayed Q-learning algorithm with a sublinear regret. Later, \citet{jin2018q} proposed a Q-learning with UCB algorithm which achieves $\tilde{O}(\sqrt{H^3SAT})$ regret. Recently, \cite{zhang2020almost} proposed a UCB-advantage algorithm with an improved regret $\tilde{O}(\sqrt{H^2SAT})$, which matches the information-theoretic lower bound \citep{jaksch2010near, osband2016lower} up to logarithmic factors.

\noindent\textbf{Logarithmic regret bound for RL.}
\noindent A line of works focus on proving $\log T$-style regret bounds for RL algorithms based on problem-dependent quantities. It has been shown that such a $\log T$ dependence is unavoidable according to the lower bound results shown in \citet{ok2018exploration}. For the upper bounds,  \citet{auer2007logarithmic} showed that the UCRL algorithm achieves logarithmic regret in the average reward setting, while the regret bound depends on both the hitting time and the policy sub-optimal gap.
\citet{tewari2008optimistic} proposed an OLP algorithm for average-reward MDP and showed that OLP achieves logarithmic
 regret $O\big(C(P)\log T\big)$ where $C(P)$ is an explicit MDP-dependent constant. Both results in \citet{auer2007logarithmic} and \citet{tewari2008optimistic} are asymptotic, which required the number of steps $T$ is large enough. For non-asymptotic bounds, \citet{jaksch2010near} proposed a UCRL2 algorithm for average-reward MDP with regret $O\big(D^2S^2A\log(T)/\Delta\big)$, where $D$ is the diameter of the MDP and $\Delta$ is the policy sub-optimal gap. 
 For episodic MDPs, \citet{simchowitz2019non} proposed a model-based StrongEuler algorithm with a logarithmic regret, 
 and proved a regret lower bound for tabular MDPs that depends on the minimal sub-optimality gap. Recently, \citet{yang2020q} showed that the model-free algorithm optimistic Q-learning achieves $O\big(SAH^6\log(SAT)/\text{gap}_{\min}\big)$ regret. However, all the above results are limited to tabular MDPs.

\noindent\textbf{Linear function approximation.}
Recently, there has emerged a large body of literature on learning MDPs with linear function approximation. 
These results can be categorized based on their assumptions on the MDPs. The first category of works consider linear MDPs \citep{yang2019sample,jin2019provably}. 
\citet{jin2019provably} proposed LSVI-UCB algorithm with $\tilde O(\sqrt{d^3H^3T})$ regret. \citet{wang2019optimism} proposed USVI-UCB algorithm in a weaker assumption called “optimistic closure” and achieved $\tilde{O}(H\sqrt{d^3T})$ regret. \citet{zanette2020learning} proposed a weaker assumption which is called low inherent Bellman error, and improved the regret to $\tilde O(dH\sqrt{T})$ by considering a global planning oracle. 
\citet{jiang2017contextual} studied a larger class of MDPs with low Bellman rank and proposed an OLIVE algorithm with polynomial sample complexity. 
The second line of works consider linear mixture MDPs \citep{jia2020model,ayoub2020model}. \citet{jia2020model} and \citet{ayoub2020model} proposed UCLR-VTR algorithm for episodic MDPs which achieves $\tilde O(d\sqrt{H^3T})$ regret. \citet{cai2019provably} proposed policy optimization algorithm OPPO which achieves $\tilde{O}(\sqrt{d^2H^3T})$ regret.
\citet{zhou2020provably} focused on infinite-horizon discounted setting and proposed a UCLK algorithm, which achieves $\tilde O(d\sqrt{T}/(1-\gamma)^2)$ regret.

\vskip -0.3in
\section{Preliminaries}
In this paper, we consider episodic  Markov Decision Processes (MDP) which can be denoted by a tuple $\cM(\cS, \cA, H, \{\reward_h\}_{h=1}^H, \{\PP_h\}_{h=1}^H)$. Here, $\cS$ is the state space, $\cA$ is the finite action space,  $H$ is the length of each episode, $\reward_h: \cS \times \cA \rightarrow [0,1]$ is the reward function at step $h$ and $\PP_h(s'|s,a) $ is the transition probability function at step $h$ which denotes the probability for state $s$ to transfer to state $s'$ with action $a$ at step $h$. 

A  policy $\pi: \cS \times [H] \rightarrow \cA$ is a function which maps a state $s$ and the 
step number 
$h$ to an action $a$. 
For any policy $\pi$ and step $h\in [H]$, we denote the action-value function $\qvalue_h^{\pi}(s,a)$ and value function $\vvalue_h^{\pi}(s)$ as follows
\begin{align}
\qvalue^{\pi}_h(s,a) &=\reward_h(s,a) + \EE\bigg[\sum_{h'=h+1}^\infty \reward_{h'}\big(s_{h'}, \pi(s_{h'},h')\big)\bigg],\notag\\
    \vvalue_h^{\pi}(s) &= \qvalue_h^{\pi}(s, \pi(s,h)),\notag
\end{align}
where $s_h=s,a_h=a$ and $s_{h'+1}\sim \PP_h(\cdot|s_{h'},a_{h'})$.
We define the optimal value function $V_h^*$ and the optimal action-value function $\qvalue_h^*$ as $V_h^*(s) = \sup_{\pi}\vvalue_h^{\pi}(s)$ and $\qvalue^*(s,a) = \sup_{\pi}\qvalue^{\pi}(s,a)$. By definition, the value function $\vvalue_h^{\pi}(s)$ and action-value function $\qvalue_h^{\pi}(s,a)$ are bounded in $[0,H]$. 
For simplicity, for any function $\vvalue: \cS \rightarrow \RR$, we denote $[\PP_h \vvalue](s,a)=\EE_{s' \sim \PP_h(\cdot|s,a)}\vvalue(s')$. Therefore, for each $h\in[H]$ and policy $\pi$, we have the following Bellman equation, as well as the Bellman optimality equation:
\begin{align}
    \qvalue_h^{\pi}(s,a) &= \reward_h(s,a) + [\PP_h\vvalue_{h+1}^{\pi}](s,a),\notag\\
    \qvalue_h^{*}(s,a) &= \reward_h(s,a) + [\PP_h\vvalue_{h+1}^{*}](s,a),\label{eq:bellman}
\end{align}
where $\vvalue^{\pi}_{H+1}=\vvalue^{*}_{H+1}=0$. 
In the \emph{online learning setting}, for eack episode $k\ge 1$, at the beginning of the episode $k$, the agent determine a policy $\pi_k$ to be followed in this episode. At each step $h\in[H]$, the agent observe the state $s_h^k$, choose an action following the policy $\pi_k$ and observe the next state with $s_{h+1}^k \sim \PP_h(\cdot|s_h^k,a_h^k)$. Furthermore, we define the total regret in the first $K$ episodes as follows. 
\begin{definition}\label{def:Regret} 
For any algorithm, we define its regret on MDP $M(\cS, \cA, H, \reward, \PP)$ in the first $K$ episodes as the sum of the suboptimality for epsiode $k = 1,\ldots, K$, i.e.,
\begin{align}
    \text{Regret}(K) = \sum_{k=1}^K \vvalue_1^*(s_1^k) - \vvalue_1^{\pi_k}(s_1^k),\notag
\end{align}
where $\pi_k$ is the policy in the episodes $k$.
\end{definition}
In this paper, we focus on the minimal sub-optimality gap condition \citep{simchowitz2019non,du2019provably,du2020agnostic,yang2020q,mou2020sample} and linear function approximation \citep{jin2019provably,ayoub2020model,jia2020model,zhou2020provably}.
\begin{definition}[Minimal sub-optimality gap]
For each $s\in\cS, a\in\cA$ and step $h\in[H]$, the sub-optimality gap $\text{gap}_h(s,a)$ is defined as
\begin{align}
    \text{gap}_h(s,a)=\vvalue_h^*(s)-\qvalue^*_h(s,a),\notag
\end{align}
and the minimal sub-optimality gap is defined as
\begin{align}
    \text{gap}_{\min}=\min_{h,s,a}\big\{\text{gap}_h(s,a): \text{gap}_h(s,a)\ne 0\big\}.\label{definition-gap-min}
\end{align}
\end{definition}

\noindent In this paper, we assume the minimal sub-optimality gap is strictly positive.
\begin{assumption}\label{assumption:gap}
The minimal sub-optimality gap is strictly positive, i.e., $\text{gap}_{\min}>0$.
\end{assumption}

\section{Model-free RL}
In this section, we focus on model-free RL algorithms with linear function approximation.
We make the following linear MDP assumption \citep{jin2019provably, yang2019sample} where the probability transition kernels and the reward functions are assumed to be linear with respect to a given feature mapping $\bphi: \cS \times \cA \rightarrow \RR^d$. 
\begin{assumption}\label{assumption-linear}
MDP $\cM(\cS, \cA, H, \{\reward_h\}_{h=1}^H, \{\PP_h\}_{h=1}^H)$ is a linear MDP such that for any step $h\in[H]$, there exists an unknown vector $\bmu_h$, unknown measures $\btheta_h=\big(\btheta_h^{(1)},..,\btheta_h^{(d)}\big)$ and a known feature mapping $\bphi: \cS \times \cA \rightarrow \RR^d$, where for each $(s,a) \in \cS \times \cA$ and $s' \in \cS$, 
\begin{align}
    \PP_h(s'|s,a) = \big\la \bphi(s,a), \btheta_h(s')\big\ra,\reward_h(s,a)=\big\la \bphi(s,a),\bmu_h\big\ra.\notag
\end{align}
\end{assumption}
For simplicity, we assume that the unknown vector $\bmu_h$ and feature $\bphi(s,a)$ satisfy $\|\bphi(s,a)\|_2 \leq 1$, $\|\bmu_h\|_2 \leq \sqrt{d}$ and $\big\|\btheta_h(\cS)\big\|\leq \sqrt{d}.$
\begin{remark}\label{remark:linearq}
Under Assumption \ref{assumption-linear}, by the Bellman equation \eqref{eq:bellman}, it can be shown that for any policy $\pi$, the action-value function $\qvalue^\pi_h(s,a)$ is a linear function $\la \bphi(s,a), \btheta^\pi_h\ra$ with respect to the feature mapping $\bphi$, where $\btheta_h^\pi$ is a vector decided by the policy $\pi$. This suggests to estimate the unknown optimal action-value function $\qvalue^*_h$, we only need to estimate its corresponding parameter $\btheta^*_h$.
\end{remark}
\begin{remark}
Though the probability transition kernel and the reward function are linear with respect to $\bphi(s,a)$, the degree of freedom of measure $\btheta_h$ is $|\cS|\times d$. Therefore, when $\cS$ is large, it is computationally intractable to directly estimate the probability transition kernel $\PP_h$. 
\end{remark}
\subsection{Algorithm}

We analyze the LSVI-UCB algorithm proposed in \citet{jin2019provably}, which is showed in Algorithm \ref{algorithm1}.  At a high level, Algorithm \ref{algorithm1} treats the optimal action-value function $Q^*_h$ as a linear function of the feature $\bphi$ and an unknown parameter $\btheta_h^*$. The goal of Algorithm \ref{algorithm1} is to estimate $\btheta_h^*$.
Algorithm~\ref{algorithm1} directly estimates the action-value function, and that is why it is ``model-free".
Algorithm \ref{algorithm1} uses the least-square value iteration to estimate the $\btheta_h^*$ for each $h$ with additional exploration bonuses. In Line \ref{algorithm:line1}, Algorithm \ref{algorithm1} computes $\wb_h^k$, the estimate of $\btheta_h^*$, by solving a regularized least-square problem:
\begin{align}
\wb_h^k&\leftarrow \argmin_{\wb_h^k\in \RR^d}\lambda\|\wb_h^k\|_2^2+\sum_{i=1}^{k-1}\big(\bphi(s_h^i,a_h^i)^\top\wb_h^k-\reward_h(s_h^i,a_h^i)-\max_{a}\qvalue_{h+1}^{k}(s_{h+1}^{i},a)\big)^2.\notag
\end{align}
In Line \ref{algorithm:line3}, Algorithm \ref{algorithm1} computes the action-value function $\qvalue_h^k(s,a)$ by $\wb_h^k$ and adds a $\text{UCB}$ bonus to make sure the estimate of action-value function $\qvalue_h^k(s,a)$ is an upper bound of the optimal action-value function $\qvalue_h^*(s,a)$. 
In Line \ref{algorithm:line4}, 
a greedy policy with respect to estimated action-value function $\qvalue_h^k(s,a)$ is used to choose action and transit to the next state.
\begin{algorithm*}[t]
    \caption{Least Square Value-iteration with UCB (LSVI-UCB) \citep{jin2019provably}}\label{algorithm1}
    \begin{algorithmic}[1]
\FOR{ episodes $k=1,\ldots,K$}
\STATE Received the initial state $s_1^k$. 
    \FOR{ step $h=H,\ldots,1$}
        \STATE $\Lambda_h^k=\sum_{i=1}^{k-1}\bphi(s_h^i,a_h^i)\bphi(s_h^i,a_h^i)^{\top}+\lambda \cdot \Ib$ \label{algorithm:line2}
        \STATE 
        $\wb_h^k=(\Lambda_h^k)^{-1}\sum_{i=1}^{k-1}\bphi(s_h^i,a_h^i)\big[\reward_h(s_h^i,a_h^i)+\max_{a}\qvalue_{h+1}^{k}(s_{h+1}^{i},a)\big]$\label{algorithm:line1}
        \STATE 
        $\qvalue_h^k(s,a)=\min\big\{\beta\sqrt{\bphi(s,a)^{\top}(\Lambda_h^k)^{-1}\bphi(s,a)}+\wb_h^{\top}\bphi(s,a),H\big\}$\label{algorithm:line3}
    \ENDFOR
    \FOR{ step $h=1,\ldots,H$}
    \STATE Take action $a_h^k\leftarrow \argmax_{a} \qvalue_h^k(s_h^k,a)$ and receive next state $s_{h+1}^k$\label{algorithm:line4}
    \ENDFOR
\ENDFOR
    \end{algorithmic}
\end{algorithm*}

\subsection{Regret Analysis}\label{section:main}
In this subsection, we present our regret analysis for LSVI-UCB. For simplicity, we denote $T = KH$, which is the total number of steps.
\begin{theorem}\label{thm:1} Under Assumptions \ref{assumption:gap} and  \ref{assumption-linear}, there exists a constant $C$ such that, if we set $\lambda=1$, $\beta=78dH\sqrt{\log(2dT/\delta)}$ in Algorithm \ref{algorithm1}, then with probability at least $1-2(K+1)H\log(H/\text{gap}_{\min})\delta-\log T\delta$, the  regret for Algorithm \ref{algorithm1} in first $T$ steps is upper bounded by
\begin{align}
    \text{Regret}(K)\leq \frac{9Cd^3H^5 \log(2dT/\delta)}{\text{gap}_{\min}} \iota+\frac{16H^2\log \delta}{3},\notag
\end{align}
where $\iota$ is defined as follows:
\begin{align}
     \iota = \log \bigg(\frac{Cd^3H^4 \log(2dT/\delta)}{\text{gap}^2_{\min}}\bigg).\notag
\end{align}
\end{theorem}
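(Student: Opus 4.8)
The plan is to convert the standard $\sqrt{T}$-type regret analysis of LSVI-UCB into a logarithmic one by exploiting the sub-optimality gap, following the clipping idea of \citet{simchowitz2019non}. First I would recall the two standard facts from \citet{jin2019provably} that hold on a good event of probability at least $1-2(K+1)H\log(H/\text{gap}_{\min})\delta$ (the $\log(H/\text{gap}_{\min})$ factor anticipates a union bound over a dyadic decomposition, see below): (i) optimism, namely $\qvalue_h^k(s,a)\ge \qvalue_h^*(s,a)$ for all $k,h,s,a$; and (ii) the per-step Bellman error is controlled by the bonus, $0\le \qvalue_h^k(s_h^k,a_h^k) - \reward_h - [\PP_h \vvalue_{h+1}^k](s_h^k,a_h^k)\le 2\beta\|\bphi(s_h^k,a_h^k)\|_{(\Lambda_h^k)^{-1}}$. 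Writing $\delta_h^k := \vvalue_h^k(s_h^k) - \vvalue_h^{\pi_k}(s_h^k)$ and unrolling the recursion as in their Lemma, $\text{Regret}(K)\le \sum_{k,h} 2\beta\|\bphi(s_h^k,a_h^k)\|_{(\Lambda_h^k)^{-1}} + \text{(martingale terms)}$.

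The key new ingredient is the clipping step. On the good event, because $\qvalue_h^k$ is optimistic and the greedy action $a_h^k$ maximizes $\qvalue_h^k(s_h^k,\cdot)$, one shows that $\delta_h^k$ is either $0$-ish or else the selected action has a genuine gap: more precisely, following \citet{simchowitz2019non}, the ``surplus'' $E_h^k := \qvalue_h^k(s_h^k,a_h^k) - \reward_h - [\PP_h\vvalue_{h+1}^k](s_h^k,a_h^k)$ satisfies that whenever $\text{gap}_h(s_h^k,a_h^k)>0$, the accumulated surplus along the trajectory from step $h$ must ``pay for'' at least $\text{gap}_{\min}$, so that $\sum_h E_h^k$ can be compared to $\max(\text{gap}_{\min}, C\sum_h E_h^k)$ up to constants, and effectively each $2\beta\|\bphi(s_h^k,a_h^k)\|_{(\Lambda_h^k)^{-1}}$ term may be replaced by $\text{clip}[\,2\beta\|\bphi(s_h^k,a_h^k)\|_{(\Lambda_h^k)^{-1}} \mid \text{gap}_{\min}/(2H)\,]$, i.e. it is either zero or at least a constant multiple of $\text{gap}_{\min}/H$. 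Concretely I would argue that if all bonuses $2\beta\|\bphi\|_{(\Lambda_h^k)^{-1}}$ along an episode were smaller than $\text{gap}_{\min}/(2H)$, then $\vvalue_1^k(s_1^k) - \vvalue_1^{\pi_k}(s_1^k) < \text{gap}_{\min}$, which by optimism forces $\pi_k$ to take only zero-gap actions on that trajectory, contributing $0$ to the regret.

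Once the per-step terms are clipped, the sum $\sum_{k,h} \text{clip}[\,2\beta\|\bphi(s_h^k,a_h^k)\|_{(\Lambda_h^k)^{-1}}\mid \text{gap}_{\min}/(2H)\,]$ is bounded by a pigeonhole/potential argument: using $x \le x^2/(\text{threshold})$ whenever $x\ge \text{threshold}$, it is at most $\frac{2H}{\text{gap}_{\min}}\sum_{k,h}(2\beta)^2\|\bphi(s_h^k,a_h^k)\|_{(\Lambda_h^k)^{-1}}^2$, and then the elliptical-potential (``log-determinant'') lemma gives $\sum_{k=1}^K \|\bphi(s_h^k,a_h^k)\|_{(\Lambda_h^k)^{-1}}^2 = O(d\log K)$ for each fixed $h$. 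Plugging in $\beta = 78dH\sqrt{\log(2dT/\delta)}$, and noting that $\log K = O(\log(d^3H^4\log(2dT/\delta)/\text{gap}_{\min}^2))$ after absorbing a cruder $\sqrt{T}$ bound on $K$ into the logarithm (this is where a self-bounding trick is used: the regret is also $O(\sqrt{T})$, so $\log K$ can be rewritten in terms of the regret bound itself), yields $\text{Regret}(K) = O(d^3H^5\log(2dT/\delta)\iota/\text{gap}_{\min})$. The martingale remainder, handled separately by a Freedman/Azuma bound that is itself clipped to be zero on episodes with no gap, contributes the additive $\frac{16H^2\log\delta}{3}$ term and the extra $\log T\,\delta$ failure probability.

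The main obstacle I anticipate is the clipping argument itself: carefully showing that the episode-level suboptimality $\delta_1^k$ telescopes into a sum of per-step surpluses each of which can be individually clipped at threshold $\text{gap}_{\min}/(2H)$ without losing more than constant factors, and in particular handling the coupling between steps (the surplus at step $h$ feeds the value estimate at step $h-1$ through $[\PP_h\vvalue_{h+1}^k]$). This requires the ``clipped Bellman recursion'' of \citet{simchowitz2019non} adapted to the linear-MDP bonus, and one has to be careful that the threshold division by $H$ is enough to absorb the $H$-fold accumulation along an episode. A secondary nuisance is bookkeeping the dyadic decomposition of the confidence radius that produces the $\log(H/\text{gap}_{\min})$ factor in the failure probability, and ensuring the final $\iota$ has exactly the claimed form after the self-bounding substitution for $\log K$.
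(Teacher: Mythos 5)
Your overall plan (optimism, bonus-controlled Bellman errors, elliptical potential) is sound, but the step you yourself flag as the main obstacle --- the clipping argument --- contains a genuine gap, and it is precisely the step where the paper takes a different and simpler route. Your justification for clipping each bonus at $\text{gap}_{\min}/(2H)$ is: ``if all bonuses along an episode were smaller than $\text{gap}_{\min}/(2H)$, then $\vvalue_1^k(s_1^k)-\vvalue_1^{\pi_k}(s_1^k)<\text{gap}_{\min}$, which by optimism forces $\pi_k$ to take only zero-gap actions on that trajectory, contributing $0$ to the regret.'' This inference fails for two reasons. First, $\vvalue_1^*(s_1^k)-\vvalue_1^{\pi_k}(s_1^k)=\EE_{\pi_k}\big[\sum_h \text{gap}_h(s_h,a_h)\big]$ is an \emph{expectation} over random trajectories; each realized sum is either $0$ or at least $\text{gap}_{\min}$, but the expectation can lie strictly between $0$ and $\text{gap}_{\min}$ (e.g.\ a gap of exactly $\text{gap}_{\min}$ is hit with probability $1/2$), so ``$<\text{gap}_{\min}$'' does not imply ``$=0$.'' Second, the bonuses you propose to threshold are evaluated along the single realized trajectory, while the quantity $\vvalue_1^k(s_1^k)-\vvalue_1^{\pi_k}(s_1^k)$ that must be certified small is again an expectation; relating the two reintroduces the martingale terms you propose to ``clip to zero,'' which is not a legitimate operation on a martingale difference sequence. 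Making this rigorous is exactly the content of the clipped Bellman recursion of Simchowitz--Jamieson (clipping at $\max(\text{gap}_{\min},\text{gap}_h(s_h,a_h))/(2H+2)$ with a half-clipping argument at the level of expected surpluses), and your sketch does not supply it. A secondary issue: applying the elliptical potential lemma over all $K$ episodes yields $\log K$, not the claimed $\iota$; the self-bounding substitution you invoke only works if the potential lemma is applied to a restricted subsequence.

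The paper sidesteps clipping entirely. It first converts the regret into the \emph{realized} gap sum $\sum_{k,h}\text{gap}_h(s_h^k,a_h^k)$ via a Freedman-type bound (Lemma \ref{lemma: concentration}), so that each summand is pointwise either $0$ or at least $\text{gap}_{\min}$ --- no expectation ambiguity. It then peels the range $[\text{gap}_{\min},H]$ into dyadic levels and, for each level $2^n\text{gap}_{\min}$ and each fixed $h$, \emph{counts} the bad episodes: on the subsequence of $K'$ episodes with $\vvalue_h^*(s_h^k)-\qvalue_h^{\pi_k}(s_h^k,a_h^k)\ge 2^n\text{gap}_{\min}$, optimism gives the lower bound $2^n\text{gap}_{\min}K'\le\sum_i(\qvalue_h^{k_i}-\qvalue_h^{\pi_{k_i}})$, while the standard telescoping plus the elliptical potential lemma \emph{applied only to that subsequence} (Lemma \ref{lemma:sum} is stated for arbitrary subsets of $[K]$) gives an upper bound of order $\beta H\sqrt{dK'\log(K'+1)}$. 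Solving for $K'$ yields Lemma \ref{lemma:gap-number}, and summing over dyadic levels and $h$ gives the theorem with the stated $\iota$. If you want to salvage your route, either import the Simchowitz--Jamieson clipping lemma in full (which would likely save a factor of $H$), or switch to the paper's counting argument, whose only nonstandard ingredient is restricting the potential lemma to the bad subsequence.
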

\begin{remark} If we set the $\delta$ in Theorem \ref{thm:1} as $\delta=1/(2K(K+1)H^3)$ and define the high probability event $\Omega$ as: $\{\text{Theorem } \ref{thm:1} \text{ holds}\}$. Then, for the expected regret, we have
\begin{align} 
    \EE\big[\text{Regret}(K)\big]&\leq\EE\big[\text{Regret}(K)|\Omega\big]\Pr[\Omega]+T\Pr[\bar{\Omega}]\notag\\
    &\leq \frac{9Cd^3H^5 \log(2dT/\delta)}{\text{gap}_{\min}} \iota +\frac{16H^2\log \delta}{3} +T\Pr[\bar{\Omega}]
    \notag\\
&=\tilde O(d^3H^5/\text{gap}_{\min} \log T)\notag.
\end{align}
\end{remark}
The regret bound in Theorem \ref{thm:1} is independent of the size of the state space $\cS$, action space $\cA$, and is only logarithmic in the number of steps $T$, which suggests that Algorithm \ref{algorithm1} is sample efficient for MDPs with large state and action spaces. To our knowledge, this is the first theoretical result that achieves logarithmic regret for model-free RL with linear function approximation.
Besides, the UCB bonus parameter $\beta$ depends on $T$ logarithmically. 
When the number of steps $T$ is unknown at the beginning, we can use the ``doubling trick'' \citep{besson2018doubling} to learn $T$ adaptively, and the regret will only be increased by a constant factor.

The following theorem gives a lower bound of the regret for any algorithm learning linear MDPs.
\begin{theorem}\label{thm:3}
Suppose $\text{gap}_{\min}\leq 1/(3dH), H\ge 3,$ then for any algorithm,  there exist a linear MDP such that expected regret is lower bounded by
\begin{align}
    \EE\big[\text{Regret(K)}\big]\ge\Omega\bigg(\frac{Hd}{\text{gap}_{\min}}\bigg).\notag
\end{align}
\end{theorem}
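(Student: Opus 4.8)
The plan is to construct a hard instance from a collection of independent multi-armed bandit problems embedded into a linear MDP, so that the lower bound reduces to the well-known $\Omega(\sum_a 1/\Delta_a)$ gap-dependent lower bound for stochastic bandits. First I would describe the state/action structure: fix a single nontrivial step (say $h=1$), where the agent sits at a fixed initial state $s_1$ and must choose among actions indexed by a set of size roughly $d$; the remaining $H-1$ steps are used only to amplify the per-episode suboptimality by a factor of order $H$ (e.g.\ the chain deterministically revisits reward-bearing states, or equivalently the reward at $s_1$ is effectively collected $\Theta(H)$ times along the trajectory). The feature map $\bphi(s,a)\in\RR^d$ is taken to be (scaled) standard basis vectors $\eb_j$ on the relevant state-action pairs, so that $\PP_h(s'|s,a)=\la\bphi(s,a),\btheta_h(s')\ra$ and $\reward_h(s,a)=\la\bphi(s,a),\bmu_h\ra$ hold trivially and the norm constraints $\|\bphi\|_2\le 1$, $\|\bmu_h\|_2\le\sqrt d$, $\|\btheta_h(\cS)\|\le\sqrt d$ are met; this is exactly the standard ``tabular-inside-linear'' embedding used e.g.\ in the lower bounds of \citet{jin2019provably}.

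Next I would set the rewards so that every arm is suboptimal by exactly $\text{gap}_{\min}$ except the single optimal arm: the optimal arm yields (amortized over the episode) a value $V_1^*(s_1)$, and pulling any other arm $a$ loses $\text{gap}_{\min}$ per pull, hence $\Theta(H\cdot\text{gap}_{\min})$ over the episode — this is where the hypothesis $\text{gap}_{\min}\le 1/(3dH)$ enters, guaranteeing the per-episode loss stays in $[0,H]$ and the reward parameters remain valid stochastic quantities. Each arm's per-step reward is a Bernoulli random variable with mean separated by $\Theta(\text{gap}_{\min})$ from the best, so distinguishing the optimal arm from a given suboptimal arm $a$ requires $\Omega(1/\text{gap}_{\min}^2)$ pulls in the worst case (Le Cam / KL divergence argument: the KL between the two reward distributions is $O(\text{gap}_{\min}^2)$). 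Summing the regret contributed by each of the $\Theta(d)$ suboptimal arms gives $\Omega(d)$ arms each contributing $\Omega\big(H\,\text{gap}_{\min}\cdot 1/\text{gap}_{\min}^2\big)=\Omega(H/\text{gap}_{\min})$, for a total of $\Omega(Hd/\text{gap}_{\min})$.

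Concretely, the steps in order are: (i) define the MDP family $\{\cM_{j^*}\}_{j^*\in[\Theta(d)]}$ where in $\cM_{j^*}$ arm $j^*$ is optimal and all others are worse by $\text{gap}_{\min}$, verify Assumption \ref{assumption-linear} and the norm bounds, and check the sub-optimality gap of every $(s,a,h)$ equals $0$ or $\text{gap}_{\min}$ so $\text{gap}_{\min}$ is as claimed; (ii) lower bound the single-step regret by reducing to a best-arm-identification/regret lower bound for $\Theta(d)$ Bernoulli arms with gap $\Theta(\text{gap}_{\min})$, invoking the standard change-of-measure inequality (as in \citet{lattimore2018bandit}, Ch.~15--16): for any algorithm, averaging over $j^*$, the expected number of pulls of a fixed suboptimal arm before the horizon is $\Omega(1/\text{gap}_{\min}^2)$ unless the algorithm has already incurred large regret; (iii) multiply by the per-pull loss $\text{gap}_{\min}$ and the episode-amplification factor $\Theta(H)$, and sum over the $\Theta(d)$ suboptimal arms. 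The main obstacle I anticipate is the bookkeeping in step (iii): one must ensure the $H$-fold amplification is genuine — that a suboptimal choice at the nontrivial step really costs $\Theta(H\,\text{gap}_{\min})$ in $V_1^*-V_1^{\pi_k}$ and not merely $\Theta(\text{gap}_{\min})$ — while simultaneously keeping all transition and reward parameters within the linear-MDP normalization; balancing these two constraints (which pull in opposite directions, and which is precisely why the hypothesis $\text{gap}_{\min}\le 1/(3dH)$ appears) is the delicate part, whereas the bandit lower bound itself is classical.
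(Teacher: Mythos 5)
There is a genuine gap in your proposal, on two counts. First, you hide the unknown parameter in stochastic Bernoulli \emph{rewards}, but under Assumption \ref{assumption-linear} the reward is the deterministic function $\reward_h(s,a)=\la\bphi(s,a),\bmu_h\ra$ of the state--action pair, and the learner observes it noiselessly: a single pull of an arm reveals its mean exactly, so the $\Omega(1/\text{gap}_{\min}^2)$ sample-complexity step of your Le Cam argument collapses. The hardness must instead be encoded in the transition kernel, which is exactly what the paper does: it reuses the hard instances of \citet{zhou2020nearly}, a chain $s_1,\dots,s_{H+2}$ with hypercube action set $\{-1,1\}^{d-1}$ in which action $\ab$ at a non-absorbing state reaches a rewarding absorbing state with probability $\delta+\la\bmu_h,\ab\ra$, $\delta=1/H$; the condition $\text{gap}_{\min}\le 1/(3dH)$ is what keeps these probabilities valid. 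The proof then verifies $\text{gap}_{\min}=2\Delta$ and simply evaluates the known minimax bound $\text{Regret}(K)\ge \tfrac{H^2}{20}\Delta d\big(K-\sqrt{2}K^{3/2}\Delta/\sqrt{\delta}\big)$ at the critical horizon $K=\delta/(32\Delta^2)$.

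Second, and more fundamentally, the $H$-fold ``amplification'' you flag as the delicate step cannot work inside your framework, and the obstruction is structural rather than bookkeeping. By the decomposition \eqref{eq:112}, $\EE[\text{Regret}(K)]=\EE\big[\sum_{k,h}\text{gap}_h(s_h^k,a_h^k)\big]$; if there is a single nontrivial decision per episode and every suboptimal action there has gap exactly $\text{gap}_{\min}$ (as you arrange in step (i)), then each of the $\Theta(d)$ arms contributes at most (number of pulls)$\times\text{gap}_{\min}\approx \text{gap}_{\min}^{-1}$, for a total of only $\Omega(d/\text{gap}_{\min})$. If instead you amplify so that a wrong choice at $h=1$ costs $\Theta(H\,\text{gap}_{\min})$ in $\vvalue_1^*-\qvalue_1^*$, then the instance's minimal sub-optimality gap \emph{is} $\Theta(H\,\text{gap}_{\min})$, and the bound you prove is again only $\Omega(d/\text{gap}_{\min}')$ in terms of the true gap. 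The paper escapes this dichotomy because its construction has a wide spread of gaps: the minimal gap $2\Delta$ is realized only at the last nontrivial step (where the downstream value difference is $1$), while the regret is driven by the early steps, where wrong actions cost up to $\Theta(Hd\Delta)$ yet remain statistically hard to avoid because the per-episode KL between transition kernels is only $\Theta(\Delta^2/\delta)$. Reproducing the $Hd/\text{gap}_{\min}$ rate therefore requires this multi-step, transition-based construction (or something equivalent); the ``$d$-armed bandit with equal gaps times $H$'' reduction cannot deliver it.
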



\section{Model-based RL}
In this section we focus on model-based RL with linear function approximation. We make the following linear mixture MDP assumption \citep{jia2020model,ayoub2020model,zhou2020provably}, which assumes that the unknown transition probability function is an aggregation of several known basis models. 
\begin{assumption}\label{assumption-linear-mixture}
MDP $\cM(\cS, \cA, H, \{\reward\}_{h=1}^{H}, \{\PP_h\}_{h=1}^{H})$ is called a linear mixture MDP if there exists an unknown vector $\btheta^*_h \in \RR^d$ with $\|\btheta^*_h\|_2\leq C_{\btheta}$ and a known feature mapping $\bphi(s'|s,a): \cS \times \cA \times \cS \rightarrow \RR^d$, such that
\begin{itemize} 
    \item For any state-action-next-state triplet $(s,a,s') \in \cS \times \cA \times \cS$, we have $\PP_h(s'|s,a) = \la \bphi(s'|s,a), \btheta_h^*\ra$; Moreover, the reward function $r$ is \emph{deterministic and known}.
    \item For any bounded function $\vvalue: \cS \rightarrow [0,1]$ and any tuple $(s,a)\in \cS \times \cA$, we have $\|\bphi_{{\vvalue}}(s,a)\|_2 \leq 1$, where $\bphi_{{\vvalue}}(s,a) = \sum_{s'\in\cS}\bphi(s'|s,a)\vvalue(s') \in \RR^d$. 
\end{itemize}
\end{assumption}
\subsection{Algorithm} 
In this subsection, we analyze the model-based UCRL with the Value-Targeted Model Estimation (UCRL-VTR) algorithm proposed in \citet{jia2020model, ayoub2020model}, which is shown in Algorithm \ref{algorithm2}. It is worth noting that the original UCRL-VTR algorithm is designed for the time-homogeneous MDP, where the transition probability functions $\PP_h$ are identical across different step $h$. In this paper, we consider the time-inhomogeneous MDP and therefore propose the following time-inhomogeneous version of UCRL-VTR algorithm, which is slightly different from the original algorithm.
At a high level, unlike Algorithm \ref{algorithm1} which treats the action-value function as a linear function, Algorithm \ref{algorithm2} treats the transition probability function as a linear function of the feature mapping $\bphi(\cdot|\cdot,\cdot)$ and an unknown parameter $\btheta^*$. The goal of Algorithm \ref{algorithm2} is to estimate $\btheta^*$, which makes Algorithm \ref{algorithm2} a model-based algorithm since it directly estimates the underlying transition model. To estimate $\btheta^*$, Algorithm \ref{algorithm2} computes the estimate $\btheta_{k+1}$ by solving the following regularized least-square problem in Line \ref{algorithm2:line11}:
\begin{align}
\btheta_{k+1}&\leftarrow \argmin_{\btheta\in \RR^d}\lambda\|\btheta\|_2^2+\sum_{i=1}^{k}\big(\bphi_{\vvalue_{h+1}^i}(s_h^i,a_h^i)^\top\btheta-\vvalue_{h+1}^i(s_{h+1}^i)\big)^2,\notag
\end{align}
where for any value function $\vvalue: \cS \rightarrow \RR$, we denote $\bphi_{{\vvalue}}(s,a) = \sum_{s'\in \cS}\bphi(s'|s,a)\vvalue(s') \in \RR^d$. The close-form solution to $\btheta_{k+1}$ can be computed by considering the accumulated covariance matrix $\bSigma_1^{k+1}$ in Line \ref{algorithm2:line12} and \ref{algorithm2:line13}. 
To guarantee exploration, in Line \ref{algorithm2:line3}, Algorithm \ref{algorithm2} computes the action-value function $\qvalue_h^{k+1}(s,a)$ by $\btheta_{k+1}$ and adds a $\text{UCB}$ bonus to make sure the estimate of action-value function $\qvalue_h^{k+1}(s,a)$ is an upper bound of the optimal action-value function $\qvalue_h^*(s,a)$. Algorithm \ref{algorithm2} then follows the greedy policy induced by the estimated action-value function $\qvalue_h^k(s,a)$ in Line \ref{algorithm2:line3}.


\begin{algorithm}[t]
    \caption{ UCRL with Value-Targeted Model Estimation (UCRL-VTR) \citep{jia2020model, ayoub2020model}}
    \begin{algorithmic}[1]\label{algorithm2}
    \STATE Set $\bSigma_h^1=\lambda\Ib$, $\bbb_h^1=\zero$
\FOR{ episodes $k=1,\ldots,K$}
    \STATE Compute $ \btheta_{k,h}\leftarrow (\bSigma_{h}^{k})^{-1}\bbb_{h}^{k}$\label{algorithm2:line11}
    \FOR{ step $h=H,\ldots,1$}
        \STATE 
        $\qvalue_h^{k}(s,a)=\reward(s,a)+\bphi_{\vvalue_{h+1}^{k}}(s,a)^{\top}\btheta_{k,h}+\beta_{k}\sqrt{\big(\bphi_{\vvalue_{h+1}^{k}}(s,a)\big)^{\top}(\bSigma_{h}^{k})^{-1}\bphi_{\vvalue_{h+1}^k}(s,a)}$\label{algorithm2:line3}
    \ENDFOR
\STATE Received the initial state $s_1^k$ 
    \FOR{ step $h=1,\ldots,H$}
    \STATE Take action $a_h^k\leftarrow \argmax_{a} \qvalue_h^k(s_h^k,a)$ and receive next state $s_{h+1}^k$\label{algorithm2:line4}
    \STATE Update value matrix $\bSigma$ and vector $\bbb$: 
    \STATE $\bSigma_{h}^{k+1}\leftarrow \bSigma_{h}^{k}+\big(\bphi_{\vvalue_{h+1}^k}(s_h^k,a_h^k)\big)^{\top}\bphi_{\vvalue_{h+1}^k}(s_h^k,a_h^k) $\label{algorithm2:line12}
    \STATE  $\bbb_{h}^{k+1}=\bbb_{h}^{k}+\vvalue_{h+1}^k(s_{h+1}^k)\cdot \bphi_{\vvalue_{h+1}^k}(s_h^k,a_h^k)$\label{algorithm2:line13}
    \ENDFOR
\ENDFOR
    \end{algorithmic}
\end{algorithm}

\subsection{Regret Analysis}
In this subsection, we propose our regret analysis for UCRL-VTR. For simplicity, we denote $T = KH$, which is the total number of steps.
\begin{theorem}\label{thm:2}
Suppose Assumption \ref{assumption:gap} and Assumption \ref{assumption-linear-mixture} hold. If we set $\lambda=H^2d$ and $\beta_k=4C_{\theta}H\sqrt{d\log(1+Hk)\log^2\big((k+1)^2H/\delta\big)}$ in Algorithm \ref{algorithm2}, then with probability at least $1-2(K+1)H\log(H/\text{gap}_{\min})\delta-\log T\delta$, the  regret for Algorithm \ref{algorithm2} in first $T$ steps is upper bounded by 
\begin{align} 
    \text{Regret}(K)\leq \frac{4097C^2_{\btheta}d^2H^5 \log^3(2dT/\delta)}{\text{gap}_{\min}}\iota+\frac{16H^2\log \delta}{3},\notag
\end{align}
where $\iota$ is defined as follows: 
\begin{align}
     \iota = \log \bigg(\frac{512C^2_{\btheta}d^2H^4 \log^3(2dT/\delta)}{\text{gap}^2_{\min}}\bigg).\notag
\end{align}
\end{theorem}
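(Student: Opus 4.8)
The plan is to follow the standard optimism-based regret decomposition but exploit the sub-optimality gap to convert a $\sqrt{T}$ bound into a $\log T$ bound, mirroring the tabular argument of \citet{simchowitz2019non} adapted to the linear mixture setting. First I would establish the good event on which the confidence sets are valid: with the stated choice of $\beta_k$, a self-normalized concentration argument (the Bernstein-type martingale bound used in \citet{ayoub2020model}/\citet{zhou2020provably}) gives $\|\btheta_{k,h}-\btheta_h^*\|_{\bSigma_h^k}\le\beta_k$ for all $k,h$ with probability at least $1-\log T\,\delta$. On this event, an induction on $h$ (from $H+1$ down to $1$, using $\|\bphi_\vvalue(s,a)\|_2\le1$ for bounded $\vvalue$) shows that $\qvalue_h^k(s,a)\ge\qvalue_h^*(s,a)$ for all $(s,a)$, so the algorithm's value estimates are optimistic; the truncation-free form here needs the boundedness of $\vvalue_{h+1}^k$ in $[0,H]$ plus the factor $H$ in $\beta_k$.

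Next I would set up the per-episode regret decomposition. Letting $\delta_h^k=\vvalue_h^k(s_h^k)-\vvalue_h^{\pi_k}(s_h^k)$, optimism gives $\text{Regret}(K)\le\sum_{k=1}^K\delta_1^k$, and the Bellman equations unfold this into a sum of bonus terms $\sum_{k,h}\min\{2\beta_k\|\bphi_{\vvalue_{h+1}^k}(s_h^k,a_h^k)\|_{(\bSigma_h^k)^{-1}},H\}$ plus a martingale difference sequence whose sum I would control by Azuma--Hoeffding, contributing the additive $O(H^2\log\delta)$ term. The novel step is the gap-based clipping: following the ``clipping trick,'' at any $(k,h)$ where the chosen action is not a genuine sub-optimal move the per-step surplus $\qvalue_h^k(s_h^k,a_h^k)-\qvalue_h^*(s_h^k,a_h^k)$ is either zero or at least $\text{gap}_{\min}$; combining this with a surplus bound of $H$ times the bonus and a recursive argument (the $H$-power comes from propagating the clip through the horizon) lets me replace each bonus term $b_h^k$ by $b_h^k\cdot\mathbbm{1}\{b_h^k\gtrsim\text{gap}_{\min}/H\}$ up to losing a $\text{Regret}(K)/2$ term that I absorb on the left-hand side.

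Then I would bound $\sum_{k,h}\|\bphi_{\vvalue_{h+1}^k}(s_h^k,a_h^k)\|^2_{(\bSigma_h^k)^{-1}}\le 2d\log(1+Hk/(\lambda d))$ by the elliptical potential lemma. The key quantitative move is that once each surviving bonus is at least $c\,\text{gap}_{\min}/H$, the number of episodes at step $h$ for which the bonus can exceed this threshold is at most $O\!\big(\beta_K^2H^2/\text{gap}_{\min}^2\cdot d\log T\big)$, again by the potential lemma applied only to the ``large'' rounds. Multiplying the number of surviving rounds by the maximal surplus $O(H\beta_K)$ per round and summing over $h\in[H]$ yields a bound of order $\beta_K^2 d H^4/\text{gap}_{\min}$ times logarithmic factors; substituting $\beta_K=O(C_\btheta H\sqrt{d\log(1+HK)}\log((K+1)^2H/\delta))$ produces the claimed $d^2H^5\log^3(2dT/\delta)/\text{gap}_{\min}$ leading term, with $\iota$ absorbing the $\log$ of that same quantity. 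Finally I would assemble the failure probabilities: $\log T\,\delta$ from the confidence event and $2(K+1)H\log(H/\text{gap}_{\min})\delta$ from the union bound over the recursive clipping levels (there are $O(\log(H/\text{gap}_{\min}))$ dyadic scales and $O(KH)$ state visits), matching the theorem statement.

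The main obstacle I anticipate is making the clipping/recursion rigorous in the function-approximation setting: unlike the tabular case, the surplus at $(s_h^k,a_h^k)$ does not decompose cleanly per state, so I must carefully track how the event $\{\text{surplus}\ge\text{gap}_{\min}\}$ propagates backward through $H$ Bellman steps while keeping the per-round surplus bound $O(H\beta_K)$ and ensuring the ``small-bonus'' rounds contribute at most half the total regret; getting the exact constants ($9C$, $4097C_\btheta^2$) and the precise power of $H$ right is where the delicate bookkeeping lies.
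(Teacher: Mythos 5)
Your scaffolding (confidence sets, optimism, elliptical potential, counting ``large'' rounds) is sound, but the load-bearing step --- the gap-based clipping --- is where the proposal breaks down, and it is also where you diverge from the paper. You justify clipping by asserting that the per-step surplus $\qvalue_h^k(s_h^k,a_h^k)-\qvalue_h^*(s_h^k,a_h^k)$ is ``either zero or at least $\text{gap}_{\min}$.'' That dichotomy holds for the gap $\vvalue_h^*(s)-\qvalue_h^*(s,a)$ by definition, but not for the surplus or the bonus: a small bonus at a visited pair says nothing about whether the action taken there was optimal, so you cannot discard the small-bonus terms without the full Simchowitz--Jamieson clipped-regret lemma (which bounds $\vvalue_1^*-\vvalue_1^{\pi_k}$ by twice the sum of \emph{expected} surpluses clipped at roughly $\text{gap}_h(s_h,a_h)/(2H+2)$, and whose proof is itself nontrivial and does not transfer to realized quantities for free). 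Your ``absorb $\text{Regret}(K)/2$ on the left'' is exactly this lemma, asserted rather than proved. The subsequent accounting also does not close: multiplying the count of surviving rounds, which is $O\big(dH^2\beta_K^2\log T/\text{gap}_{\min}^2\big)$, by a per-round surplus of $O(H\beta_K)$ gives $O\big(dH^3\beta_K^3/\text{gap}_{\min}^2\big)$ --- the wrong powers of both $\beta_K$ and $\text{gap}_{\min}$; the correct step is Cauchy--Schwarz over the surviving sub-sequence. Finally, the $\log(H/\text{gap}_{\min})$ factor in the failure probability cannot come from ``recursive clipping levels'' in your scheme, since clipping uses a single threshold.

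The paper avoids clipping entirely. It first shows, via the Bellman equation and Freedman's inequality with a peeling over the magnitude of the regret (Lemma \ref{lemma: concentration}), that $\text{Regret}(K)\le 2\sum_{k,h}\text{gap}_h(s_h^k,a_h^k)+O(H^2\log(1/\delta))$, so it suffices to control the realized gaps. Then, for each fixed $h$ and dyadic level $n$, it counts $K'=\sum_k\ind\big[\vvalue_h^*(s_h^k)-\qvalue_h^{\pi_k}(s_h^k,a_h^k)\ge 2^n\text{gap}_{\min}\big]$ by a self-bounding argument: optimism gives the lower bound $\sum_i\big(\qvalue_h^{k_i}-\qvalue_h^{\pi_{k_i}}\big)\ge 2^n\text{gap}_{\min}K'$, while telescoping through steps $h,\dots,H$ restricted to those $K'$ episodes and applying the elliptical potential lemma to that sub-sequence (Lemma \ref{lemma:jia-sum}) plus Azuma gives an upper bound of order $H\beta_K\sqrt{dK'\log K'}+\sqrt{K'H^2\log(1/\delta)}$; solving $aK'\le b\sqrt{K'}$ yields $K'\lesssim d^2H^4\beta_K^2/(4^n\text{gap}_{\min}^2)$, and summing $2^n\text{gap}_{\min}K'_n$ over the $N=\lceil\log(H/\text{gap}_{\min})\rceil$ levels (whence the union-bound factor) gives the theorem. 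To keep your route you would need to state and prove a clipped-surplus decomposition valid in the linear mixture setting; otherwise the paper's count-the-large-gaps argument is the more direct path.
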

\begin{remark} If we set the $\delta$ in Theorem \ref{thm:2} as $\delta=1/(2K(K+1)H^3)$ and define the high probability event $\Omega$ as: $\{\text{Theorem } \ref{thm:2} \text{ holds}\}$. Then, for the expected regret, we have
\begin{align} 
    \EE\big[\text{Regret}(K)\big]
    &\leq\EE\big[\text{Regret}(K)|\Omega\big]\Pr[\Omega]+T\Pr[\bar{\Omega}]\notag\\
    &\leq \frac{4097C^2_{\btheta}d^2H^5 \log^3(2dT/\delta)}{\text{gap}_{\min}}\iota +\frac{16H^2\log \delta}{3} +T\Pr[\bar{\Omega}]
    \notag\\
&=\tilde O(d^2H^5/\text{gap}_{\min}\log T)\notag.
\end{align}
\end{remark}

The regret bound in Theorem \ref{thm:2} depends on $\text{gap}_{\min}$ inversely. It is independent of the size of the state, action space $\cS,\cA$, and is logarithmic in the number of steps $T$, similar to that of Theorem~\ref{thm:1}. This suggests that model-based RL with linear function approximation also enjoys a $\log T$-type regret considering the problem structure.

Similar to the model-free setting, the following theorem gives a lower bound of the regret for any algorithm learning linear mixture MDPs.
\begin{theorem}\label{thm:4}
Suppose $\text{gap}_{\min}\leq 1/(3dH), H\ge 3,$ then for any algorithm,  there exist a linear mixture MDP such that $C_{\theta}=2$ and the lower bounded of the expected regret is bounded by
\begin{align}
    \EE\big[\text{Regret(K)}\big]\ge\Omega\bigg(\frac{Hd}{\text{gap}_{\min}}\bigg).\notag
\end{align}
\end{theorem}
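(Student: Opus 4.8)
The plan is to construct a hard instance of a linear mixture MDP along the lines of the classical lower-bound constructions for stochastic bandits, embedded into the episodic setting, and then push the standard minimax argument through while tracking the gap. First I would reduce the problem to a multi-armed bandit-type instance: take $H = 3$, let the first step have a single state with $d$ available actions (or rather, let the feature dimension $d$ encode roughly $d$ distinguishable ``arms'' via a hard subset of $\{\pm 1\}$-vectors), and make steps $2$ and $3$ trivial pass-through layers so that the episodic regret collapses to a bandit regret over $K$ rounds. The transition kernel is chosen so that each action $a$ leads to a ``good'' absorbing state with probability $p_a \in \{1/2, 1/2 + \varepsilon\}$ and to a ``bad'' state otherwise, with only one action being the $\varepsilon$-better one; one then checks that $\PP(s'\mid s,a) = \langle \bphi(s'\mid s,a), \btheta^*\rangle$ for a suitable $d$-dimensional feature map and parameter vector with $\|\btheta^*\|_2 = C_\theta = 2$, which is where the constraint $C_\theta = 2$ in the statement is used, and that $\|\bphi_V(s,a)\|_2 \le 1$ for all $V: \cS \to [0,1]$.

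Next I would set the sub-optimality gap. With $H = 3$ and the reward placed appropriately (reward $1$ collected upon reaching the good state), the per-episode value difference between the optimal action and any suboptimal action is exactly $\Theta(H\varepsilon)$, so we choose $\varepsilon \asymp \mathrm{gap}_{\min}/H$; the hypothesis $\mathrm{gap}_{\min} \le 1/(3dH)$ guarantees $\varepsilon$ is small enough (in particular $\varepsilon \le 1/(3d) \le 1/3$) that all transition probabilities stay in $[0,1]$ and the hard family is well-defined with $d$ near-optimal-looking arms. Then the standard argument gives: over $K$ episodes, the learner must incur regret $\Omega(H\varepsilon \cdot (\text{number of pulls of suboptimal arms}))$, and a suboptimal arm must be pulled $\Omega(1/\varepsilon^2)$ times before it can be told apart from the optimum, which yields $\EE[\text{Regret}(K)] \ge \Omega(d \cdot H\varepsilon \cdot 1/\varepsilon^2) = \Omega(dH/\varepsilon) = \Omega(dH^2/\mathrm{gap}_{\min})$. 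I would either invoke a known $\Omega(\sqrt{dKH^3})$-type minimax lower bound for linear mixture MDPs (e.g.\ from \citet{zhou2020provably}) and convert it to the gap-dependent form by optimizing over $\varepsilon$, or redo the divergence-decomposition (Bretagnolle–Huber / KL) computation directly; the direct route is cleaner for controlling constants and the exact $C_\theta = 2$ bookkeeping.

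The main obstacle will be verifying the linear mixture structure with the prescribed norm bounds simultaneously with the gap calibration: one needs a single feature map $\bphi(s'\mid s,a)$ and a single $\btheta^*$ with $\|\btheta^*\|_2 \le 2$ that realizes all $2^{d-1}$-or-so instances in the hard family (the feature map is fixed, only $\btheta^*$ varies), while keeping $\|\bphi_V\|_2 \le 1$; this typically forces a careful normalization where the ``arm identity'' coordinates carry weight $\Theta(\varepsilon)$ and a constant coordinate carries the baseline $1/2$, and one must check the resulting parameter norm is $\le 2$ using $\varepsilon d \lesssim 1$. A secondary technical point is handling the episodic-to-bandit reduction rigorously when the algorithm is allowed to be history-dependent and adaptive across episodes; this is routine but the bookkeeping of the high-probability-to-expectation conversion and the $\log(H/\mathrm{gap}_{\min})$ factors should be matched against the upper bound's form. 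I expect the rest — the KL computation, the pigeonhole over the $d$ arms, and the final substitution $\varepsilon = \Theta(\mathrm{gap}_{\min}/H)$ — to be standard.
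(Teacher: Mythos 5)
Your overall strategy is viable but takes a genuinely different route from the paper. The paper's proof is essentially a black-box reduction: it reuses the hard-to-learn family of \citet{zhou2020nearly} (a chain of $H$ states plus two absorbing states, action set $\{-1,1\}^{d-1}$, transitions perturbed by $\la \bmu_h,\ab\ra$ with $\bmu_h\in\{-\Delta,\Delta\}^d$ and $\delta=1/H$), verifies that the minimal sub-optimality gap of that family is exactly $2\Delta$ (attained at step $H-1$, where the value difference between the two absorbing states is $1$), quotes their Theorem 5.6 regret lower bound $\tfrac{H^2}{20}\Delta d\big(K-\sqrt{2}K^{3/2}\Delta/\sqrt{\delta}\big)$, and chooses $K=\delta/(32\Delta^2)$ to obtain $\Omega(Hd/\text{gap}_{\min})$. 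Your fallback option (``invoke a known lower bound for linear mixture MDPs and convert by optimizing over $\varepsilon$'') is precisely this. Your primary route --- a fresh $d$-armed bandit embedding with a single informative Bernoulli$(1/2$ vs.\ $1/2+\varepsilon)$ transition per episode and a direct KL/Bretagnolle--Huber argument --- is different and more self-contained; as you compute it, it would give $\Omega(dH^2/\text{gap}_{\min})$, which is stronger than the stated bound and hence suffices. The paper's route buys brevity (no new information-theoretic work, and $C_\theta=2$ comes for free from the cited construction); yours buys an extra factor of $H$ at the cost of having to verify the feature normalization and the adaptive-algorithm reduction yourself.

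There is one concrete inconsistency you must repair: you cannot literally ``take $H=3$.'' The theorem asserts, for each given $H\ge 3$, the existence of a horizon-$H$ instance with regret $\Omega(Hd/\text{gap}_{\min})$; a three-step MDP is not such an instance, and if you pad with trivial steps the good state pays off for only $O(1)$ steps, so the gap is $\Theta(\varepsilon)$ rather than $\Theta(H\varepsilon)$ and your final bound degrades to $\Omega(d/\text{gap}_{\min})$, losing the $H$ factor entirely. The fix is what your own gap calculation already presupposes: keep the full horizon, make the good state absorbing with reward $1$ at every remaining step so that the step-$1$ gap (the only nonzero gap) is $(H-1)\varepsilon$, set $\text{gap}_{\min}=(H-1)\varepsilon$, and then the accounting $\Omega\big(d\cdot \varepsilon^{-2}\cdot (H-1)\varepsilon\big)=\Omega(dH/\varepsilon)=\Omega(dH^2/\text{gap}_{\min})$ goes through. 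You should also carry out the normalization check you flag as the ``main obstacle'': a single fixed feature map $\bphi(s'|s,a)$ must realize all $d$ alternative instances by varying only $\btheta^*$, with $\|\btheta^*\|_2\le 2$ and $\|\bphi_V(s,a)\|_2\le 1$; the hypothesis $\text{gap}_{\min}\le 1/(3dH)$ gives $\varepsilon\le 1/(3d)$, which is exactly what that normalization needs.
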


\section{Proof of the Main Results}\label{section: proof-of-main}

In this section, we give a proof outline of Theorem \ref{thm:1}, along with the proofs of the key technical lemmas.

\subsection{Proof of Theorem \ref{thm:1}}\label{section:second-1}
 The proof can be divided into three main steps.

\noindent\textbf{Step 1: Regret decomposition} 
 
Our goal is to upper bound the total regret $\text{Regret}(K)$. Following the regret decomposition procedure proposed in \citet{simchowitz2019non, yang2020q}, for a given policy $\pi$, we rewrite the sub-optimality $\vvalue_h^*(s_h)-\vvalue_h^{\pi_k}(s_h)$ as follows:
\begin{align}
    \vvalue_h^*(s_h)-\vvalue_h^{\pi}(s_h) &=\big(\vvalue_h^*(s_h)-\qvalue_h^*(s_h,a_h)\big)+\big(\qvalue_h^*(s_h,a_h)-\vvalue_h^{\pi_k}(s_h)\big)\notag\\
    & =\text{gap}_h(s_h,a_h)+\EE_{s'\sim \PP_h(\cdot| s_h,a_h)}\big[\vvalue_{h+1}^*(s')-\vvalue_{h+1}^{\pi_k}(s')\big],\label{eq:seperate}
\end{align}
where $a_h=\pi(s_h,h)$ and  $\text{gap}_h(s,a)=\vvalue_h^*(s)-\qvalue_h^*(s,a)$. Taking expectation on both sides of \eqref{eq:seperate} with respect to the randomness of state-transition and taking summation over all $h\in[H]$, for any policy $\pi$ and initial state $s_1^k$, we have
\begin{align}
    \vvalue_1^*(s_1^k)-\vvalue_1^{\pi}(s_1^k)=\EE \bigg[\sum_{h=1}^H \text{gap}_h(s_h,a_h)\bigg],\label{eq:111}
\end{align}
where $s_1=s_1^k$ and for each $h\in[H]$, $a_h=\pi(s_h,h)$,$s_{h+1} \sim \PP_h(\cdot|s_h,a_h)$. Taking summation of \eqref{eq:111} over all $k\in[K]$ with $\pi=\pi_k$, we have
\begin{align}
    \EE \big[\text{Regret}(K)\big]=\EE \bigg[\sum_{k=1}^K\sum_{h=1}^H \text{gap}_h(s_h^k,a_h^k)\bigg]\label{eq:112}.
\end{align}
Furthermore, we have
\begin{lemma}\label{lemma: concentration}
For each MDP $\cM(\cS, \cA, H, \reward_h, \PP_h)$ and any $\tau>0$, with probability at least $1-me^{-\tau}$, we have
\begin{align}
    \text{Regret}(K)\leq 2\sum_{k=1}^K\sum_{h=1}^H\text{gap}_h(s_h^k,a_h^k)+\frac{16H^2\tau}{3}+2.\notag
\end{align}
where $m=\lceil \log T \rceil$.
\end{lemma}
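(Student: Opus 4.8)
The plan is to convert the almost-sure identity \eqref{eq:112} — which expresses the \emph{expected} regret as the expected cumulative gap — into a high-probability bound relating the \emph{realized} regret to the \emph{realized} cumulative gap. The bridge between the two is a martingale concentration argument, and since the per-episode regret is bounded by $H$ while the cumulative gap over an episode is bounded by $H^2$ (summing $H$ gaps each at most $H$), one expects a Freedman/Bernstein-type inequality rather than Azuma, so that the additive cost scales only logarithmically in $T$ through $\tau$.

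First I would fix, for each episode $k$, the quantity $R_k := \vvalue_1^*(s_1^k) - \vvalue_1^{\pi_k}(s_1^k)$ and $G_k := \EE\big[\sum_{h=1}^H \text{gap}_h(s_h,a_h) \,\big|\, s_1 = s_1^k\big]$, the conditional expectation of the in-episode cumulative gap given the start state and the policy $\pi_k$ (both measurable with respect to the history before episode $k$). By \eqref{eq:111} applied with $\pi = \pi_k$, we have $R_k = G_k$ deterministically. Next I would define the realized in-episode cumulative gap $\widehat G_k := \sum_{h=1}^H \text{gap}_h(s_h^k, a_h^k)$ and consider the martingale difference sequence $\xi_k := G_k - \widehat G_k = R_k - \widehat G_k$, adapted to the filtration generated by the trajectories up to and including episode $k$; its conditional mean is zero by construction of $G_k$. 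The goal is then to show $\sum_{k=1}^K \xi_k \le \sum_{k=1}^K \widehat G_k + (\text{small additive term})$, equivalently $\sum_k R_k \le 2\sum_k \widehat G_k + \ldots$, which is exactly the claimed inequality $\text{Regret}(K) = \sum_k R_k \le 2\sum_{k}\sum_h \text{gap}_h(s_h^k,a_h^k) + \tfrac{16H^2\tau}{3} + 2$.

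The main technical step is the concentration itself. Each $\widehat G_k \in [0, H^2]$, so $\xi_k \in [-H^2, H^2]$ and its conditional variance is at most $\EE[\widehat G_k^2 \mid \cdot] \le H^2 \cdot \EE[\widehat G_k \mid \cdot] = H^2 G_k$. Applying a Freedman-type inequality to $\sum_k \xi_k$ with this variance control gives, with probability at least $1 - e^{-\tau}$, a bound of the form $\sum_k \xi_k \lesssim \sqrt{H^2 (\sum_k G_k)\,\tau} + H^2\tau$, and then the elementary inequality $\sqrt{ab} \le a/2 + b/2$ (or $2\sqrt{ab}\le \epsilon a + b/\epsilon$ with the right constant) absorbs the square-root term into $\tfrac12\sum_k G_k + O(H^2\tau)$; since $G_k = R_k = \widehat G_k + \xi_k$, one rearranges to bound $\sum_k R_k$ by $2\sum_k \widehat G_k + O(H^2\tau)$, chasing the constants to land on the stated $\tfrac{16}{3}$. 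The reason $m = \lceil \log T\rceil$ appears with an extra additive $2$, and why the failure probability is $m e^{-\tau}$ rather than $e^{-\tau}$, is presumably a peeling/stratification over dyadic ranges of $\sum_k G_k$ (whose magnitude is a priori unknown, up to $KH^2 = HT$): one runs the Freedman bound on each of the $\lceil \log T\rceil$ scales and takes a union bound, with the ``$+2$'' covering the smallest scale. The main obstacle is getting these constants and the peeling bookkeeping exactly right; the conceptual content — zero-mean martingale differences from \eqref{eq:111}, variance bounded by $H^2$ times the conditional mean, Freedman plus AM–GM — is routine.
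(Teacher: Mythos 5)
Your proposal is correct and follows essentially the same route as the paper's proof: the zero-mean martingale differences $X_k=\sum_h\text{gap}_h(s_h^k,a_h^k)-\big(\vvalue_1^*(s_1^k)-\vvalue_1^{\pi_k}(s_1^k)\big)$ (your $-\xi_k$), the self-bounding variance estimate $\EE[X_k^2\mid\cF_k]\le H^2\big(\vvalue_1^*(s_1^k)-\vvalue_1^{\pi_k}(s_1^k)\big)$, Freedman's inequality with dyadic peeling over $\text{Regret}(K)\in(1,T]$ (which is exactly your $\sum_k G_k$; note it is bounded by $KH=T$, not $KH^2$, which is why $m=\lceil\log T\rceil$ scales suffice), a separate treatment of the case $\text{Regret}(K)\le 1$ yielding the additive $2$, and the final rearrangement via $x\le a\sqrt{x}+b\Rightarrow x\le a^2+2b$. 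The constants work out exactly as you describe.
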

\noindent Lemma \ref{lemma: concentration} and \eqref{eq:111} suggest that the total (expected) regret can be represented as a summation of $\text{gap}_h(s_h^k,a_h^k)$ over time step $h$ and episode $k$. Therefore, to bound the total regret, it suffices to bound each $\text{gap}_h(s_h^k,a_h^k)$ separately, which leads to our next proof step.

\noindent\textbf{Step 2: Bound the number of sub-optimalities} 
 
 Recall 
 the range of sub-optimality gap $\text{gap}_h(s_h^k,a_h^k)$ is $[\text{gap}_{\min},H]$. Therefore, to bound the summation of $\text{gap}_h(s_h^k,a_h^k)$, it suffices to divide the range $[\text{gap}_{\min},H]$ into several intervals and count the number of $\text{gap}_h(s_h^k,a_h^k)$ falling into each interval. Such a division is also used in \citet{yang2020q} which is similar to the ``peeling technique" widely used in local Rademacher complexity analysis \citep{bartlett2005local}. 
 Formally speaking, we divide the interval  $[\text{gap}_{\min},H]$ to $N=\big\lceil \log(H/ \text{gap}_{\min})\big\rceil$ intervals $\big[2^{i-1}\text{gap}_{\min},2^i\text{gap}_{\min}\big) \big(i\in[N]\big)$. Therefore, for each $\text{gap}_h(s_h^k,a_h^k)$ falling into $\big[2^{i-1}\text{gap}_{\min},2^i\text{gap}_{\min}\big) \big(i\in[N]\big)$, it can be upper bounded by $2^i\text{gap}_{\min}$. Meanwhile, we have the following inequality by considering $\vvalue_h^*(s_h^k)-\qvalue_h^{\pi_k}(s_h^k,a_h^k)$, which is the upper bound of $\text{gap}_h(s_h^k,a_h^k)$:
\begin{align}
  \vvalue_h^*(s_h^k)-\qvalue_h^{\pi_k}(s_h^k,a_h^k)\ge\text{gap}_h(s_h^k,a_h^k)\ge 2^{i-1}\text{gap}_{\min}\notag,
\end{align}
which suggests that to count how many $\text{gap}_h(s_h^k,a_h^k)$ belong to the interval $\big[2^{i-1}\text{gap}_{\min},2^i\text{gap}_{\min}\big) \big(i\in[N]\big)$, we only need to count the number of sub-optimalities  $\vvalue_h^*(s_h^k)-\qvalue_h^{\pi_k}(s_h^k,a_h^k)$ belonging to the interval. 
The following lemma is our main technical lemma. It is inspired by \citet{jin2019provably}, and it shows that the number of sub-optimalities can indeed be upper bounded. 
\begin{lemma}\label{lemma:gap-number}
There exist a constant $C$ such that, for any  $h\in[H]$, $n\in N$, with probability at least $1-(K+1)\delta$, we have
\begin{align}
    &\sum_{k=1}^K  \ind\big[\vvalue_h^*(s_h^k)-\qvalue_h^{\pi_k}(s_h^k,a_h^k)\ge 2^n\text{gap}_{\min}\big] \leq \frac{Cd^3H^4 \log(2dT/\delta)}{4^n\text{gap}^2_{\min}} \log \bigg(\frac{Cd^3H^4 \log(2dT/\delta)}{4^n\text{gap}^2_{\min}}\bigg).\notag
\end{align}
\end{lemma}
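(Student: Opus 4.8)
The plan is to establish this counting bound by combining three ingredients: (i) the optimism property of LSVI-UCB, which guarantees that $\qvalue_h^k(s,a)$ is an upper bound on $\qvalue_h^*(s,a)$ with high probability (this is Lemma B.5 or its analogue in \citet{jin2019provably}); (ii) the self-bounding inequality that relates the event $\{\vvalue_h^*(s_h^k)-\qvalue_h^{\pi_k}(s_h^k,a_h^k)\ge 2^n\text{gap}_{\min}\}$ to the size of the exploration bonus $\beta\|\bphi(s_h^k,a_h^k)\|_{(\Lambda_h^k)^{-1}}$ at the visited state-action pair; and (iii) the elliptical potential / pigeonhole argument bounding how many times the bonus can be large. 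More precisely, first I would argue that on the good event, optimism gives $\vvalue_h^k(s_h^k)\ge \vvalue_h^*(s_h^k)$, so that
\begin{align}
\vvalue_h^*(s_h^k)-\qvalue_h^{\pi_k}(s_h^k,a_h^k)\le \qvalue_h^k(s_h^k,a_h^k)-\qvalue_h^{\pi_k}(s_h^k,a_h^k).\notag
\end{align}
Then, unrolling the Bellman-type recursion for $\qvalue_h^k-\qvalue_h^{\pi_k}$ down the trajectory and using the standard LSVI-UCB bound $\qvalue_h^k(s,a)-\qvalue_h^*(s,a)\le 2\beta\|\bphi(s,a)\|_{(\Lambda_h^k)^{-1}}$, one gets that the left-hand-side indicator can only be $1$ when a sum of future bonus terms is at least $2^n\text{gap}_{\min}$; in particular, at least one of the $H$ bonus terms along the rollout must be $\gtrsim 2^n\text{gap}_{\min}/H$, so $\|\bphi(s_h^k,a_h^k)\|_{(\Lambda_h^k)^{-1}}\gtrsim 2^n\text{gap}_{\min}/(\beta H)$ (more carefully, one tracks the bonus at step $h$ of episode $k$ specifically, or sums over all $h$ — this is where I expect to have to be slightly careful about whether the bound is stated per-$h$ or aggregated over $h$).

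Next, I would invoke the elliptical potential lemma (e.g. Lemma D.2 in \citet{jin2019provably} / Lemma 11 of \citet{abbasi2011improved}): for any fixed $h$, $\sum_{k=1}^K \min\{1,\|\bphi(s_h^k,a_h^k)\|^2_{(\Lambda_h^k)^{-1}}\}\le 2d\log(1+K/\lambda)=O(d\log(2dT/\delta))$ after plugging $\lambda=1$. Combining this with the threshold $\|\bphi(s_h^k,a_h^k)\|_{(\Lambda_h^k)^{-1}}\gtrsim 2^n\text{gap}_{\min}/(\beta H)$ forces
\begin{align}
\sum_{k=1}^K \ind\big[\vvalue_h^*(s_h^k)-\qvalue_h^{\pi_k}(s_h^k,a_h^k)\ge 2^n\text{gap}_{\min}\big]\lesssim \frac{d\log(2dT/\delta)\cdot \beta^2 H^2}{4^n\text{gap}_{\min}^2},\notag
\end{align}
and substituting $\beta=78dH\sqrt{\log(2dT/\delta)}$ yields $\beta^2 H^2 = O(d^2H^4\log(2dT/\delta))$, giving the leading $Cd^3H^4\log(2dT/\delta)/(4^n\text{gap}_{\min}^2)$ factor. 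The extra logarithmic factor in the lemma statement comes from the standard trick of noting that the number of "big-bonus" rounds $N_n$ satisfies $N_n \le c\,d\log(2dT/\delta)\cdot\beta^2H^2\cdot 4^{-n}\text{gap}_{\min}^{-2}$ but, because the covariance matrix only grows on those rounds, one can bootstrap to replace a $\log K$ factor inside by $\log(N_n)\le\log(Cd^3H^4\log(2dT/\delta)/(4^n\text{gap}_{\min}^2))$ — i.e. the self-referential improvement of the elliptical bound when counting only a subset of rounds.

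The high-probability statement with failure probability $(K+1)\delta$ comes from a union bound over the single good event needed for optimism / the concentration of the least-squares estimator (one event, probability $\ge 1-\delta$ per... actually typically the concentration over all $k$ costs $\delta$, and optimism is deterministic given that; the $(K+1)$ factor allows room for a union over episodes in the self-normalized martingale bound), so I would just track constants to match. The main obstacle I anticipate is \textbf{the rollout/telescoping step}: carefully relating $\vvalue_h^*(s_h^k)-\qvalue_h^{\pi_k}(s_h^k,a_h^k)$ to the bonus \emph{at the specific pair $(s_h^k,a_h^k)$} — the cleanest route is to not unroll at all and instead use $\qvalue_h^k(s_h^k,a_h^k)-\qvalue_h^*(s_h^k,a_h^k)\le 2\beta\|\bphi(s_h^k,a_h^k)\|_{(\Lambda_h^k)^{-1}}$ together with the fact that on the good event $\qvalue_h^{\pi_k}\le \qvalue_h^*$ is \emph{false} in general, so one genuinely needs either (a) the $H$-step rollout with a per-step bonus and a pigeonhole over the $H$ steps, absorbing an $H^2$ factor as above, or (b) an inductive argument on $h$ showing the count at level $n$ and step $h$ is controlled by counts at level $n-O(1)$ and step $h+1$. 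Option (a) is what I'd pursue first since it matches the stated $H^4$ dependence; reconciling the exact power of $H$ and the constant $C$ with the pigeonhole loss is the fiddly part.
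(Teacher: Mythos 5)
Your high-level ingredients are the right ones (optimism, reduction to bonus terms, and the ``restricted'' elliptical potential over only the bad rounds that yields $\log K'$ rather than $\log K$ and hence the self-referential $\iota$ factor), but the step you yourself flag as the main obstacle is in fact where the argument breaks. After unrolling the recursion for a \emph{single} episode $k$ you get
\begin{align}
\qvalue_h^k(s_h^k,a_h^k)-\qvalue_h^{\pi_k}(s_h^k,a_h^k)\le \sum_{h'=h}^{H}\epsilon_{h'}^k+\sum_{h'=h}^{H}2\beta\big\|\bphi(s_{h'}^k,a_{h'}^k)\big\|_{(\Lambda_{h'}^k)^{-1}},\notag
\end{align}
where $\epsilon_{h'}^k=[\PP_{h'}(\vvalue_{h'+1}^k-\vvalue_{h'+1}^{\pi_k})](s_{h'}^k,a_{h'}^k)-\big(\vvalue_{h'+1}^k(s_{h'+1}^k)-\vvalue_{h'+1}^{\pi_k}(s_{h'+1}^k)\big)$ are transition-noise terms, each of magnitude up to $O(H)$. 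For an individual episode these noise terms need not be small — they only concentrate after summing over many episodes — so the indicator being $1$ does \emph{not} force any bonus along that rollout to exceed $2^n\text{gap}_{\min}/H$; the noise alone could account for the whole gap in any given episode. Hence the per-episode pigeonhole, and the subsequent ``count the rounds with a big bonus'' step, cannot be carried out as stated (and even granting it, the pigeonhole only locates a large bonus at \emph{some} step $h'\ge h$, not at $(s_h^k,a_h^k)$).

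The fix, which is what the paper does, is to never localize to a single episode: let $K'$ be the number of bad episodes and sum the unrolled inequality over exactly that subset $\{k_1,\dots,k_{K'}\}$. Optimism and the greedy action choice give the lower bound $\sum_{i=1}^{K'}\big(\qvalue_h^{k_i}(s_h^{k_i},a_h^{k_i})-\qvalue_h^{\pi_{k_i}}(s_h^{k_i},a_h^{k_i})\big)\ge 2^n\text{gap}_{\min}K'$; the aggregated noise is a martingale sum bounded by $\sqrt{2K'H^2\log(1/\delta)}$ via Azuma–Hoeffding (this is where the union bound over $k\in[K]$ produces the $(K+1)\delta$ failure probability); and the aggregated bonuses are bounded by $2H\beta\sqrt{2K'd\log(K'+1)}$ via Cauchy–Schwarz together with the elliptical potential lemma applied to the subsequence — your ``bootstrap'' observation, made rigorous as Lemma \ref{lemma:sum}. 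Both upper-bound terms scale as $\sqrt{K'}$, so dividing through by $\sqrt{K'}$ and solving the resulting inequality gives the claimed bound on $K'$ directly, with no pigeonhole loss over $H$ and no need to attribute the large bonus to a specific state-action pair.
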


\noindent\textbf{Step 3: Summation of total error} 
 
Lemma \ref{lemma:gap-number} gives an upper bound on the number of $\text{gap}_h(s_h^k,a_h^k)$ in each interval $\big[2^{i-1}\text{gap}_{\min},2^i\text{gap}_{\min}\big)$. 
We further give the following upper bound for the $\text{gap}_h(s_h^k,a_h^k)$ within each interval:
\begin{align}
    \sum_{\text{gap}_h(s_h^k,a_h^k) \in \big[2^{i-1}\text{gap}_{\min},2^i\text{gap}_{\min}\big)}
    \text{gap}_h(s_h^k,a_h^k)
    & \leq \sum_{k=1}^K 2^i\text{gap}_{\min} \ind\Big[\text{gap}_h(s_h^k,a_h^k) \in \big[2^{i-1}\text{gap}_{\min},2^i\text{gap}_{\min}\big)\Big]\notag\\
    &\leq \sum_{k=1}^K 2^i\text{gap}_{\min} \ind\big[\vvalue_h^*(s_h^k)-\qvalue_h^{\pi_k}(s_h^k,a_h^k)\ge 2^{i-1}\text{gap}_{\min}\big].\notag
\end{align}
Thus, by using the upper bound on the number of $\text{gap}_h(s_h^k,a_h^k)$ in Lemma \ref{lemma:gap-number}, we have the following lemma:
\begin{lemma}\label{lemma:gap-sum}
There exist a constant $C$ such that, for $h\in[H]$, with probability at least $1-2(K+1)\log(H/\text{gap}_{\min})\delta$, we have
\begin{align}
    &\sum_{k=1}^K  \big(\vvalue_h^*(s_h^k)-\qvalue_h^{*}(s_h^k,a_h^k)\big) \leq \frac{4Cd^3H^4 \log(2dT/\delta)}{\text{gap}_{\min}}\log \bigg(\frac{Cd^3H^4 \log(2dT/\delta)}{\text{gap}^2_{\min}}\bigg).\notag
\end{align}
\end{lemma}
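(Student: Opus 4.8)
The plan is to combine the counting bound of Lemma \ref{lemma:gap-number} with the peeling decomposition described just before the lemma statement. First I would note that since $\vvalue_h^*(s_h^k)-\qvalue_h^*(s_h^k,a_h^k)=\text{gap}_h(s_h^k,a_h^k)$ is either zero or at least $\text{gap}_{\min}$, only the episodes with nonzero gap contribute, and each such gap lies in exactly one of the $N=\lceil\log(H/\text{gap}_{\min})\rceil$ dyadic intervals $[2^{i-1}\text{gap}_{\min},2^i\text{gap}_{\min})$ for $i\in[N]$. Hence
\begin{align}
    \sum_{k=1}^K\big(\vvalue_h^*(s_h^k)-\qvalue_h^*(s_h^k,a_h^k)\big)
    &=\sum_{i=1}^N\sum_{\substack{k:\ \text{gap}_h(s_h^k,a_h^k)\in[2^{i-1}\text{gap}_{\min},2^i\text{gap}_{\min})}}\text{gap}_h(s_h^k,a_h^k)\notag\\
    &\leq \sum_{i=1}^N 2^i\text{gap}_{\min}\sum_{k=1}^K\ind\big[\vvalue_h^*(s_h^k)-\qvalue_h^{\pi_k}(s_h^k,a_h^k)\geq 2^{i-1}\text{gap}_{\min}\big],\notag
\end{align}
where the last inequality uses the display immediately preceding the lemma (upper bounding each gap in the $i$-th interval by $2^i\text{gap}_{\min}$ and noting $\vvalue_h^*-\qvalue_h^{\pi_k}\geq\text{gap}_h\geq 2^{i-1}\text{gap}_{\min}$ on that event).

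Next I would apply Lemma \ref{lemma:gap-number} with $n=i-1$ to each of the $N$ inner sums. Taking a union bound over the $N$ applications (one per interval) for the fixed step $h$ accounts for the failure probability $1-2(K+1)\log(H/\text{gap}_{\min})\delta$ in the statement (since $N\le 2\log(H/\text{gap}_{\min})$ absorbing the ceiling, matching the factor of $2$). This yields, on the good event,
\begin{align}
    \sum_{k=1}^K\big(\vvalue_h^*(s_h^k)-\qvalue_h^*(s_h^k,a_h^k)\big)
    &\leq \sum_{i=1}^N 2^i\text{gap}_{\min}\cdot\frac{Cd^3H^4\log(2dT/\delta)}{4^{i-1}\text{gap}_{\min}^2}\log\bigg(\frac{Cd^3H^4\log(2dT/\delta)}{4^{i-1}\text{gap}_{\min}^2}\bigg)\notag\\
    &\leq \frac{Cd^3H^4\log(2dT/\delta)}{\text{gap}_{\min}}\log\bigg(\frac{Cd^3H^4\log(2dT/\delta)}{\text{gap}_{\min}^2}\bigg)\sum_{i=1}^N\frac{2^i}{4^{i-1}}.\notag
\end{align}
Here I used that the $\log(\cdot/4^{i-1})$ factor is maximized at $i=1$, so it can be pulled out as $\log(Cd^3H^4\log(2dT/\delta)/\text{gap}_{\min}^2)$; if that argument happens to be below $1$ the whole bound is vacuous anyway. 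The remaining geometric series satisfies $\sum_{i=1}^N 2^i/4^{i-1}=\sum_{i=1}^N 4\cdot 2^{-(i-1)}\le 4\sum_{j\ge 0}2^{-j}=8$, but a cruder bound of $\le 4$ (keeping only $i=1$) is not enough—I would retain the full sum and bound it by $8$... actually the stated constant is $4$, so I would instead observe $\sum_{i=1}^N 2^i 4^{-(i-1)} = \sum_{i=1}^N 2^{-i+2} = 2 - 2^{-N+2}\cdot(\tfrac12)\le 4$, giving exactly the claimed factor of $4$.

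The step I expect to require the most care is the handling of the logarithmic factor when summing over intervals: naively each term carries its own $\log(\cdot/4^{i-1})$, and one must check that pulling out the $i=1$ value (the largest) is legitimate and that the geometric decay in the prefactor $4^{-(i-1)}$ dominates the sum so that the total is only a constant multiple of the $i=1$ term. The other mild subtlety is bookkeeping the failure probability: Lemma \ref{lemma:gap-number} is stated per $(h,n)$ with probability $1-(K+1)\delta$, so summing over the $N\le\lceil\log(H/\text{gap}_{\min})\rceil$ values of $n$ for a fixed $h$ and using $\lceil\log(H/\text{gap}_{\min})\rceil\le 2\log(H/\text{gap}_{\min})$ (valid once $H/\text{gap}_{\min}\ge 2$) gives the claimed $1-2(K+1)\log(H/\text{gap}_{\min})\delta$. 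Everything else is the routine geometric-series estimate above.
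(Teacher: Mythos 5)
Your proposal is correct and follows essentially the same route as the paper: dyadic peeling over $N=\lceil\log(H/\text{gap}_{\min})\rceil$ intervals, replacing the gap indicator by the $\vvalue_h^*-\qvalue_h^{\pi_k}$ indicator, applying Lemma \ref{lemma:gap-number} with $n=i-1$, pulling the logarithmic factor out at its $i=1$ value, and summing the geometric series $\sum_i 2^i 4^{-(i-1)}\le 4$. The only blemishes are cosmetic (the intermediate expression $2-2^{-N+2}\cdot\tfrac12$ for the partial geometric sum should be $4-2^{2-N}$, though the bound of $4$ still holds), so no substantive gap remains.
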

Lemma \ref{lemma:gap-sum} suggests that with high probability, the summation of $\text{gap}_h(s_h^k,a_h^k)$ over episode $k$ at step $h$ is logarithmic in the number of steps $ T =KH$ and its dependency in $\text{gap}_{\min}$ is $1 / \text{gap}_{\min}$. This leads to our final proof of our main theorem. 

\begin{proof}[Proof of Theorem \ref{thm:1}]
\noindent We define the high probability event $\Omega$ as follows.
\begin{align}
    \Omega &= \{\text{Lemma \ref{lemma:gap-sum} holds for all }  h\in[H], \text{ and Lemma \ref{lemma: concentration} holds on for }          \tau=\lceil\log (1/\delta)\rceil\}.\notag
\end{align}
According to Lemma \ref{lemma:gap-sum} and Lemma \ref{lemma: concentration}, we have $\Pr[\Omega]\ge 1-2(K+1)H\log(H/\text{gap}_{\min})\delta-\delta\log T  $.
Given the event $\Omega$, we have
\begin{align}
   \text{Regret}(K)
    &\leq 2\sum_{k=1}^K\sum_{h=1}^H\text{gap}_h(s_h^k,a_h^k)+\frac{16H^2\log \delta}{3}+2 \notag\\
    &= 2\sum_{k=1}^K\sum_{h=1}^H\vvalue_h^*(s_h^k)-\qvalue_h^{*}(s_h^k,a_h^k)+\frac{16H^2\log \delta}{3}+2\notag\\
    &\leq  \frac{9Cd^3H^5 \log(2dHK/\delta)}{\text{gap}_{\min}}  \log \bigg(\frac{Cd^3H^4 \log(2dHK/\delta)}{\text{gap}^2_{\min}}\bigg) +\frac{16H^2\log \delta}{3},\notag
\end{align}
where the first inequality holds due to Lemma \ref{lemma: concentration} and the last inequality holds due to Lemma \ref{lemma:gap-sum}. Thus, we complete the proof.
\end{proof}

\subsection{Proof of the Key Technical Lemma}\label{section:third}
In this subsection, we propose the proof to the main technical lemma, Lemma \ref{lemma:gap-number}. Our proof follows the idea of error decomposition proposed in \citet{dong2019q, yang2020q}, that is, to upper bound the summation of sub-optamalities by considering their summation of the exploration bonuses. The key difference between our proof and that of \citet{dong2019q, yang2020q} is the choice of exploration bonus. \citet{dong2019q, yang2020q} considered the tabular MDP setting and adapted a $1/\sqrt{n}$-type bonus term, while we consider the linear function approximation setting and adapt a linear bandit-style exploration bonus \citep{dani2008stochastic, abbasi2011improved, li2010contextual} as suggested in Line \ref{algorithm:line3}. The following lemmas guarantee that our constructed $Q_h^k$ is indeed the UCB of the optimal action-value function:
\begin{lemma}[Lemma B.4 in \citealt{jin2019provably}]\label{lemma:transition}
With probability at least $1-\delta$, for any policy $\pi$ and all $s\in \cS, a\in \cA, h\in[H], k\in[K],$ we have
\begin{align}
    \big\la\bphi(s,a),\wb_h^k\big\ra-\qvalue_h^{\pi}(s,a)=\big[\PP_h(\vvalue_{h+1}^{k} -\vvalue_{h+1}^{\pi})\big](s,a)+\Delta,\notag
\end{align}
where $|\Delta|\leq \beta \sqrt{\bphi(s,a)^{\top}(\Lambda_h^k)^{-1}\bphi(s,a)}$
\end{lemma}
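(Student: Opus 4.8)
The plan is to exploit the linear MDP structure (Assumption~\ref{assumption-linear}) so that both the regression estimate $\wb_h^k$ and the target $\qvalue_h^\pi$ live in the span of $\bphi$, and then to perform an exact algebraic decomposition of $\wb_h^k$ through the normal equations that define it. By Remark~\ref{remark:linearq} and the Bellman equation~\eqref{eq:bellman}, for any policy $\pi$ we may write $\qvalue_h^\pi(s,a)=\la\bphi(s,a),\btheta_h^\pi\ra$ with $\btheta_h^\pi=\bmu_h+\sum_{s'\in\cS}\btheta_h(s')\vvalue_{h+1}^\pi(s')$, since $\reward_h(s,a)=\la\bphi(s,a),\bmu_h\ra$ and $[\PP_h\vvalue_{h+1}^\pi](s,a)=\la\bphi(s,a),\sum_{s'}\btheta_h(s')\vvalue_{h+1}^\pi(s')\ra$. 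Writing $\bphi_h^i:=\bphi(s_h^i,a_h^i)$, $\vvalue_{h+1}^k(s):=\max_a\qvalue_{h+1}^k(s,a)$, and $\epsilon_h^i(\vvalue):=\vvalue(s_{h+1}^i)-[\PP_h\vvalue](s_h^i,a_h^i)$, the closed form in Line~\ref{algorithm:line1} reads $\wb_h^k=(\Lambda_h^k)^{-1}\sum_{i=1}^{k-1}\bphi_h^i[\reward_h(s_h^i,a_h^i)+\vvalue_{h+1}^k(s_{h+1}^i)]$.

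Next I would subtract $\btheta_h^\pi$ using the identity $\Lambda_h^k\btheta_h^\pi=\lambda\btheta_h^\pi+\sum_{i=1}^{k-1}\bphi_h^i(\bphi_h^i)^\top\btheta_h^\pi$ together with $(\bphi_h^i)^\top\btheta_h^\pi=\qvalue_h^\pi(s_h^i,a_h^i)=\reward_h(s_h^i,a_h^i)+[\PP_h\vvalue_{h+1}^\pi](s_h^i,a_h^i)$. The reward terms cancel, leaving $\wb_h^k-\btheta_h^\pi=(\Lambda_h^k)^{-1}\bigl[-\lambda\btheta_h^\pi+\sum_{i=1}^{k-1}\bphi_h^i\bigl(\vvalue_{h+1}^k(s_{h+1}^i)-[\PP_h\vvalue_{h+1}^\pi](s_h^i,a_h^i)\bigr)\bigr]$. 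I then split each summand as $\epsilon_h^i(\vvalue_{h+1}^k)$ (a bounded martingale-difference noise) plus $[\PP_h(\vvalue_{h+1}^k-\vvalue_{h+1}^\pi)](s_h^i,a_h^i)$. The latter equals $(\bphi_h^i)^\top\mathbf{u}$ with $\mathbf{u}:=\sum_{s'}\btheta_h(s')(\vvalue_{h+1}^k-\vvalue_{h+1}^\pi)(s')$, so its contribution telescopes: $(\Lambda_h^k)^{-1}\sum_i\bphi_h^i(\bphi_h^i)^\top\mathbf{u}=(\Lambda_h^k)^{-1}(\Lambda_h^k-\lambda\Ib)\mathbf{u}=\mathbf{u}-\lambda(\Lambda_h^k)^{-1}\mathbf{u}$. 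Projecting onto $\bphi(s,a)$ and noting $\la\bphi(s,a),\mathbf{u}\ra=[\PP_h(\vvalue_{h+1}^k-\vvalue_{h+1}^\pi)](s,a)$ recovers exactly the claimed main term, while all remaining pieces are collected into
\[
\Delta=\left\langle\bphi(s,a),\,(\Lambda_h^k)^{-1}\Bigl(-\lambda\btheta_h^\pi-\lambda\mathbf{u}+\textstyle\sum_{i=1}^{k-1}\bphi_h^i\,\epsilon_h^i(\vvalue_{h+1}^k)\Bigr)\right\rangle.
\]

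To bound $\Delta$ I would apply Cauchy--Schwarz in the $(\Lambda_h^k)^{-1}$-inner product, which factors out $\|\bphi(s,a)\|_{(\Lambda_h^k)^{-1}}=\sqrt{\bphi(s,a)^\top(\Lambda_h^k)^{-1}\bphi(s,a)}$, leaving $|\Delta|\le\|\bphi(s,a)\|_{(\Lambda_h^k)^{-1}}\,\bigl\|-\lambda\btheta_h^\pi-\lambda\mathbf{u}+\sum_i\bphi_h^i\epsilon_h^i(\vvalue_{h+1}^k)\bigr\|_{(\Lambda_h^k)^{-1}}$. It then suffices to show the second factor is at most $\beta$. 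The two regularization terms are routine: since $\Lambda_h^k\succeq\lambda\Ib$ we have $(\Lambda_h^k)^{-1}\preceq\lambda^{-1}\Ib$, and the norm bounds $\|\bmu_h\|_2\le\sqrt d$, $\|\btheta_h(\cS)\|\le\sqrt d$ with $0\le\vvalue\le H$ give $\|\btheta_h^\pi\|_2,\|\mathbf{u}\|_2=O(H\sqrt d)$, so these contribute $O(\sqrt\lambda\,H\sqrt d)$, which is absorbed into $\beta$.

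The main obstacle is the noise term $\|\sum_{i=1}^{k-1}\bphi_h^i\epsilon_h^i(\vvalue_{h+1}^k)\|_{(\Lambda_h^k)^{-1}}$. For a \emph{fixed} value function $\vvalue$, the increments $\epsilon_h^i(\vvalue)$ form a bounded martingale-difference sequence, so the self-normalized concentration inequality of \citet{abbasi2011improved} bounds this norm by $O\bigl(H\sqrt{d\log((k+\lambda)/\delta)}\bigr)$ with probability $1-\delta$. The difficulty is that $\vvalue_{h+1}^k$ is \emph{data-dependent}, so this cannot be applied to it directly. I would resolve this exactly as in \citet{jin2019provably}: every realizable $\vvalue_{h+1}^k$ has the parametric form $\min\bigl\{\max_a[\,\wb^\top\bphi(s,a)+\beta\sqrt{\bphi(s,a)^\top M^{-1}\bphi(s,a)}\,],\,H\bigr\}$, so one builds an $\varepsilon$-net over this class in the parameters $(\wb,M)$, bounds its $\log$-covering number by $O\bigl(d^2\log(dT/(\delta\varepsilon))\bigr)$, takes a union bound of the self-normalized inequality over the net, and controls the $\varepsilon$-discretization error. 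The covering term replaces $\log(1/\delta)$ inside the square root by $O(d^2\log(\cdot))$, so after optimizing $\varepsilon$ the noise factor is of order $dH\sqrt{\log(dT/\delta)}$, matching the choice $\beta=78dH\sqrt{\log(2dT/\delta)}$. Hence $|\Delta|\le\beta\sqrt{\bphi(s,a)^\top(\Lambda_h^k)^{-1}\bphi(s,a)}$; since the covering event holds simultaneously for all $s,a,h,k$ and all $\pi$, the claim follows.
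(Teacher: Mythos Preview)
Your proposal is correct and follows exactly the approach of \citet{jin2019provably}, which is all the paper does here: the lemma is stated as ``Lemma B.4 in \citealt{jin2019provably}'' and not re-proved. Your algebraic decomposition via the normal equations, the telescoping of the $[\PP_h(\vvalue_{h+1}^k-\vvalue_{h+1}^\pi)]$ term through $\mathbf{u}$, and the handling of the data-dependent noise $\sum_i\bphi_h^i\epsilon_h^i(\vvalue_{h+1}^k)$ by a covering argument over the parametric class of optimistic value functions are precisely the ingredients of the original proof, and your accounting for why the covering number contributes an extra factor of $d$ (yielding $\beta\asymp dH\sqrt{\log(dT/\delta)}$) is accurate.
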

\begin{lemma}[Lemma B.5 in \citealt{jin2019provably}]\label{lemma:UCB}
With probability at least $1-\delta$, for all $s\in \cS, a\in \cA, h\in[H], k\in[K],$  we have
\begin{align}
    \qvalue_h^k(s,a)\ge \qvalue_h^*(s,a).\notag
\end{align}
\end{lemma}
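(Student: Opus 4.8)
The plan is to establish optimism $\qvalue_h^k \ge \qvalue_h^*$ by backward induction on the step index $h$, from $h=H+1$ down to $h=1$, with the entire argument conditioned on the $1-\delta$ probability event on which Lemma~\ref{lemma:transition} holds. The base case is immediate: by the convention $\vvalue_{H+1}^k = \vvalue_{H+1}^* = 0$, we have $\qvalue_{H+1}^k \equiv \qvalue_{H+1}^* \equiv 0$, so the inequality holds trivially. Since a single event from Lemma~\ref{lemma:transition} covers all $(s,a,h,k)$ simultaneously, no union bound over the induction is needed and the final probability remains $1-\delta$.

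For the inductive step, I would first note that the hypothesis $\qvalue_{h+1}^k(s,a) \ge \qvalue_{h+1}^*(s,a)$ propagates to the value functions: because $\vvalue_{h+1}^k(s) = \max_a \qvalue_{h+1}^k(s,a)$ and $\vvalue_{h+1}^*(s) = \max_a \qvalue_{h+1}^*(s,a)$, taking the maximum over actions preserves the inequality, giving $\vvalue_{h+1}^k(s) \ge \vvalue_{h+1}^*(s)$ for every $s$. I would then invoke Lemma~\ref{lemma:transition} at the \emph{specific} choice $\pi = \pi^*$, so that $\qvalue_h^{\pi} = \qvalue_h^*$ and $\vvalue_{h+1}^{\pi} = \vvalue_{h+1}^*$, which yields
\begin{align}
    \la \bphi(s,a), \wb_h^k \ra - \qvalue_h^*(s,a) = \big[\PP_h(\vvalue_{h+1}^k - \vvalue_{h+1}^*)\big](s,a) + \Delta, \notag
\end{align}
with $|\Delta| \le \beta\sqrt{\bphi(s,a)^\top (\Lambda_h^k)^{-1} \bphi(s,a)}$.

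The monotonicity $\vvalue_{h+1}^k \ge \vvalue_{h+1}^*$ makes the transition term nonnegative, since $\PP_h$ applied to a nonnegative function is nonnegative. Dropping this term and using $\Delta \ge -|\Delta|$ gives
\begin{align}
    \la \bphi(s,a), \wb_h^k \ra + \beta\sqrt{\bphi(s,a)^\top (\Lambda_h^k)^{-1} \bphi(s,a)} \ge \qvalue_h^*(s,a), \notag
\end{align}
so the unclipped estimate already upper bounds $\qvalue_h^*$. It then remains to handle the truncation at $H$ in Line~\ref{algorithm:line3}: if the unclipped estimate is at most $H$, then $\qvalue_h^k$ equals it and the display finishes the step; otherwise $\qvalue_h^k(s,a) = H \ge \qvalue_h^*(s,a)$, where the last inequality uses that $\qvalue_h^*$ is bounded in $[0,H]$. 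Either way $\qvalue_h^k(s,a) \ge \qvalue_h^*(s,a)$, closing the induction.

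I do not expect a serious obstacle, since Lemma~\ref{lemma:transition} already packages the self-normalized concentration that controls $\Delta$; what remains is essentially a bookkeeping induction. The only two points requiring care are (i) applying Lemma~\ref{lemma:transition} at $\pi^*$ rather than a generic $\pi$, which is exactly what identifies the left-hand side with $\qvalue_h^*$ and lets the nonnegative transition term work in our favor, and (ii) checking that the clipping at $H$ cannot destroy optimism, which follows from $\qvalue_h^* \le H$.
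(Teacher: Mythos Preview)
Your proposal is correct and is precisely the standard backward-induction argument that underlies Lemma~B.5 in \citet{jin2019provably}. Note that the present paper does not actually supply its own proof of this lemma---it simply cites it---so there is nothing further to compare; your write-up matches the original source's approach.
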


We also need the following technical lemma, which gives us a slightly stronger upper bound for the summation of exploration bonuses:
\begin{lemma}\label{lemma:sum}
 For any subset $C=\{c_1,..,c_k\} \subseteq [K]$ and any $h\in[H]$, we have
\begin{align}
    \sum_{i=1}^{k} (\bphi_h^{c_i})^{\top}(\Lambda_h^{c_i})^{-1}\bphi_h^{c_i}\leq 2d\log\bigg(\frac{\lambda+k}{\lambda}\bigg),\notag
\end{align}
where $\bphi_h^{c_i}$ is the abbreviation of $\bphi_h^{c_i}(s_h^{c_i},a_h^{c_i})$.
\end{lemma}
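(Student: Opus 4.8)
\textbf{Proof plan for Lemma \ref{lemma:sum}.} The plan is to recognize this as the classical elliptical-potential (log-determinant) bound, after first disposing of the fact that the sum runs only over a subset $C\subseteq[K]$ while the matrices $\Lambda_h^{c_i}$ are built from \emph{all} earlier episodes. Without loss of generality I would assume $c_1<c_2<\cdots<c_k$, since permuting the indices changes neither side. For $i\in[k]$ introduce the \emph{restricted} covariance matrices $\widetilde\Lambda_i=\lambda\Ib+\sum_{j=1}^{i-1}\bphi_h^{c_j}(\bphi_h^{c_j})^{\top}$. Because $c_1,\dots,c_{i-1}$ are distinct integers all smaller than $c_i$, we have $\{c_1,\dots,c_{i-1}\}\subseteq\{1,\dots,c_i-1\}$, so $\Lambda_h^{c_i}$ contains all the rank-one terms of $\widetilde\Lambda_i$ plus additional PSD terms; hence $\widetilde\Lambda_i\preceq\Lambda_h^{c_i}$, which gives $(\Lambda_h^{c_i})^{-1}\preceq\widetilde\Lambda_i^{-1}$ and therefore
\begin{align}
    \sum_{i=1}^{k}(\bphi_h^{c_i})^{\top}(\Lambda_h^{c_i})^{-1}\bphi_h^{c_i}\le\sum_{i=1}^{k}(\bphi_h^{c_i})^{\top}\widetilde\Lambda_i^{-1}\bphi_h^{c_i}.\notag
\end{align}
It thus suffices to bound the right-hand side, which is a genuine self-normalized sum in the $\widetilde\Lambda_i$'s.

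For that sum I would run the standard argument. Since $\|\bphi_h^{c_i}\|_2\le1$ and $\widetilde\Lambda_i\succeq\lambda\Ib\succeq\Ib$ (using $\lambda\ge1$, and $\lambda=1$ in Algorithm \ref{algorithm1}), each term $(\bphi_h^{c_i})^{\top}\widetilde\Lambda_i^{-1}\bphi_h^{c_i}$ lies in $[0,1]$, so no $\min\{1,\cdot\}$ truncation is needed and the elementary inequality $x\le2\log(1+x)$ on $[0,1]$ applies. Combining this with the matrix determinant lemma $\det(\widetilde\Lambda_{i+1})=\det(\widetilde\Lambda_i)\big(1+(\bphi_h^{c_i})^{\top}\widetilde\Lambda_i^{-1}\bphi_h^{c_i}\big)$ and telescoping yields
\begin{align}
    \sum_{i=1}^{k}(\bphi_h^{c_i})^{\top}\widetilde\Lambda_i^{-1}\bphi_h^{c_i}\le2\sum_{i=1}^{k}\log\!\Big(1+(\bphi_h^{c_i})^{\top}\widetilde\Lambda_i^{-1}\bphi_h^{c_i}\Big)=2\log\frac{\det(\widetilde\Lambda_{k+1})}{\lambda^{d}}.\notag
\end{align}
Finally, AM--GM on the eigenvalues of $\widetilde\Lambda_{k+1}$ gives $\det(\widetilde\Lambda_{k+1})\le\big(\mathrm{tr}(\widetilde\Lambda_{k+1})/d\big)^{d}\le\big((\lambda d+k)/d\big)^{d}\le(\lambda+k)^{d}$, using $\sum_{i=1}^{k}\|\bphi_h^{c_i}\|_2^2\le k$. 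Plugging in, the whole chain collapses to $2\log\big((\lambda+k)^{d}/\lambda^{d}\big)=2d\log\big((\lambda+k)/\lambda\big)$, which is exactly the claim.

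I do not expect a real obstacle here: the only ingredient not found verbatim in \citet{abbasi2011improved} is the PSD comparison $\widetilde\Lambda_i\preceq\Lambda_h^{c_i}$, which converts the ``subset'' version into the textbook one and is essentially the observation that dropping data only enlarges the self-normalized norm. The one small thing worth double-checking is that each per-term quantity is indeed in $[0,1]$ so that $x\le2\log(1+x)$ can be applied directly — this is where $\|\bphi\|_2\le1$ and $\lambda\ge1$ are used — but even without that one could insert $\min\{1,\cdot\}$ and lose nothing since the left-hand terms are already bounded by it.
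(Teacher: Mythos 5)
Your proposal is correct and follows essentially the same route as the paper: introduce the restricted covariance matrices built only from the subsequence $c_1<\cdots<c_k$, use the PSD comparison $\widetilde\Lambda_i\preceq\Lambda_h^{c_i}$ to reduce to a genuine self-normalized sum, and then apply the elliptical-potential/log-determinant bound together with a trace (or operator-norm) bound on $\det(\widetilde\Lambda_{k+1})$. The only cosmetic difference is that you unpack the elliptical-potential step from first principles, whereas the paper cites Lemma 11 of \citet{abbasi2011improved} directly.
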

With the lemmas above, we begin to prove Lemma \ref{lemma:gap-number}. 
\begin{proof}[Proof of Lemma \ref{lemma:gap-number}]
\noindent We fix $h$ in this proof. Let $k_0=0$, and for $i\in[N]$, we denote $k_i$ as the minimum index of the episode where the sub-optimality at step $h$ is no less than $2^n\text{gap}_{\text{min}}$:
\begin{align}
    k_i&=\min \big\{k: k>k_{i-1}, \vvalue_h^*(s_h^k)-\qvalue_h^{\pi_k}(s_h^k,a_h^k)\ge 2^n\text{gap}_{\min}\big\}.\label{eq:ti}
\end{align}
For simplicity, we denote by $K'$ the number of episodes such that the sub-optimality of this episode at step $h$ is no less than $2^n\text{gap}_{\text{min}}$. Formally speaking, we have
\begin{align}
    K'=\sum_{k=1}^K  \ind\big[\vvalue_h^*(s_h^k)-\qvalue_h^{\pi_k}(s_h^k,a_h^k)\ge 2^n\text{gap}_{\min}\big].\notag
\end{align}
From now we only consider the episodes whose sub-optimality is no less than $2^n\text{gap}_{\text{min}}$. We first lower bound the summation of difference between the estimated action-value function $Q_h^{k_i}$ and the action-value function induced by the policy $\pi_{k_i}$, which can be represented as follows:
\begin{align}
    \sum_{i=1}^{K'}\big(\qvalue_h^{k_i}(s_h^{k_i},a_h^{k_i})-\qvalue_h^{\pi_{k_i}}(s_h^{k_i},a_h^{k_i})\big)
    & \ge \sum_{i=1}^{K'}\Big(\qvalue_h^{k_i}\big(s_h^{k_i},\pi_h^*(s_h^{k_i},h)\big)-\qvalue_h^{\pi_{k_i}}(s_h^{k_i},a_h^{k_i})\Big)\notag\\
    & \ge \sum_{i=1}^{K'}\Big(\qvalue_h^*\big(s_h^{k_i},\pi_h^*(s_h^{k_i},h)\big)-\qvalue_h^{\pi_{k_i}}(s_h^{k_i},a_h^{k_i})\Big)\notag\\
    & = \sum_{i=1}^{K'}\big(\vvalue_h^*(s_h^{k_i})-\qvalue_h^{\pi_{k_i}}(s_h^{k_i},a_h^{k_i})\big)\notag\\
    & \ge 2^n\text{gap}_{\min} K',\label{eq:lower-Regret}
\end{align}
where the first inequality holds due to the definition of policy $\pi_{k_i}$, the second inequality holds due to Lemma \ref{lemma:UCB} and the last inequality holds due to the definition of $k_i$ in \eqref{eq:ti}. On the other hand, we upper bound $\sum_{i=1}^{K'}\big(\qvalue_h^{k_i}(s_h^{k_i},a_h^{k_i})-\qvalue_h^{\pi_{k_i}}(s_h^{k_i},a_h^{k_i})\big)$ as follows. For any $h'\in [H], k\in [K]$, we have
\begin{align}
    &\qvalue_{h'}^k(s_{h'}^k,a_{h'}^k)-\qvalue_{h'}^{\pi_k}(s_{h'}^k,a_{h'}^k)\notag\\
    & =\big\la \bphi(s_{h'}^k,a_{h'}^k),\wb_{h'}^k \big\ra -\qvalue_{h'}^{\pi_k}(s_{h'}^k,a_{h'}^k)+\beta \sqrt{\bphi(s_{h'}^k,a_{h'}^k)^{\top}(\Lambda_{h'}^k)^{-1}\bphi(s_{h'}^k,a_{h'}^k)} \notag\\
    & \leq \big[\PP_h(\vvalue_{h'+1}^{k} -\vvalue_{h'+1}^{\pi_{k}})\big](s_{h'}^k,a_{h'}^k) +2\beta \sqrt{\bphi(s_{h'}^k,a_{h'}^k)^{\top}(\Lambda_{h'}^k)^{-1}\bphi(s_{h'}^k,a_{h'}^k)}\notag\\
    & = \vvalue_{h'+1}^{k}(s_{h'+1}^k) -\vvalue_{h'+1}^{\pi_{k}}(s_{h'+1}^k)+\epsilon_{h'}^k +2\beta \sqrt{\bphi(s_{h'}^k,a_{h'}^k)^{\top}(\Lambda_{h'}^k)^{-1}\bphi(s_{h'}^k,a_{h'}^k)}\notag\\
    &= \qvalue_{h'+1}^k(s_{h'+1}^k,a_{h'+1}^k)-\qvalue_{h'+1}^{\pi_k}(s_{h'+1}^k,a_{h'+1}^k)+\epsilon_{h'}^k  +2\beta \sqrt{\bphi(s_{h'}^k,a_{h'}^k)^{\top}(\Lambda_{h'}^k)^{-1}\bphi(s_{h'}^k,a_{h'}^k)}\label{eq:tele},
\end{align}
where \begin{align}
    \epsilon_{h'}^{k}&= \big[\PP_h(\vvalue_{h'+1}^{k} -\vvalue_{h'+1}^{\pi_{k}})\big](s_{h'}^k,a_{h'}^k)  -\big(\vvalue_{h'+1}^{k}(s_{h'+1}^k) -\vvalue_{h'+1}^{\pi_{k}}(s_{h'+1}^k)\big),\notag
\end{align} and the inequality holds due to Lemma \ref{lemma:transition}. Taking summation for \eqref{eq:tele} over all $k_i$ and $h\leq h'\leq H$, we have
\begin{align}
    &\sum_{i=1}^{K'}\big(\qvalue_h^{k_i}(s_h^{k_i},a_h^{k_i})-\qvalue_h^{\pi_{k_i}}(s_h^{k_i},a_h^{k_i})\big)- \underbrace{\sum_{i=1}^{K'}\sum_{h'=h}^{H}\epsilon_{h'}^{k_i}}_{I_1}  
     \leq \underbrace{\sum_{i=1}^{K'}\sum_{h'=h}^{H}2\beta \sqrt{\bphi(s_{h'}^{k_i},a_{h'}^{k_i})^{\top}(\Lambda_{h'}^{k_i})^{-1}\bphi(s_{h'}^{k_i},a_{h'}^{k_i})}}_{I_2} .\label{eq:upper-Regret}
\end{align}
It therefore suffices to bound $I_1$ and $I_2$ separately.  
For $I_1$, by Lemma \ref{lemma:azuma}, for each episode $k\in [K]$, with probability at least $1-\delta$, we have
\begin{align}
    &\sum_{i=1}^k \sum_{j=h}^{H}\Big(\big[\PP_j(\vvalue^{k_i}_{j+1}-\vvalue^{\pi_{k_i}}_{j+1})\big](s^{k_i}_{j},a^{k_i}_{j}) - \big(\vvalue^{k_i}_{j+1}(s^{k_i}_{j+1})-\vvalue^{\pi_{k_i}}_{j+1}(s^{k_i}_{j+1})\big)\Big) \leq \sqrt{2kH^2\log(1/\delta)},\notag
\end{align}
where we use the fact that $\big[\PP_j(\vvalue^{k_i}_{j+1}-\vvalue^{\pi_{k_i}}_{j+1})\big](s^{k_i}_{j},a^{k_i}_{j}) -\big(\vvalue^{k_i}_{j+1}(s^{k_i}_{j+1})-\vvalue^{\pi_{k_i}}_{j+1}(s^{k_i}_{j+1})\big)$ forms a martingale difference sequence. Taking a union bound for all $k\in[K]$ gives that, with probability at least $1-K\delta$, 
\begin{align}
        \sum_{i=1}^{K'} \sum_{j=h}^{H}\big[\PP_j(\vvalue^{k_i}_{j+1}-\vvalue^{\pi_{k_i}}_{j+1})\big](s^{k_i}_{j},a^{k_i}_{j}) -\sum_{i=1}^k \sum_{j=h}^{H}\big(\vvalue^{k_i}_{j+1}(s^{k_i}_{j+1})-\vvalue^{\pi_{k_i}}_{j+1}(s^{k_i}_{j+1})\big)\leq \sqrt{2K'H^2\log(1/\delta)}.\label{eq:I_2}
\end{align}
For $I_2$, we have
\begin{align}
    I_1&=\sum_{i=1}^{K'}\sum_{h'=h}^{H}2\beta \sqrt{\bphi(s_{h'}^{k_i},a_{h'}^{k_i})^{\top}(\Lambda_{h'}^{k_i})^{-1}\bphi(s_{h'}^{k_i},a_{h'}^{k_i})}\notag\\
    &\leq 2\beta\sqrt{K'}\sum_{h'=h}^{H} \sqrt{\sum_{i=1}^{K'}\bphi(s_{h'}^{k_i},a_{h'}^{k_i})^{\top}(\Lambda_{h'}^{k_i})^{-1}\bphi(s_{h'}^{k_i},a_{h'}^{k_i}) }\notag\\
    &\leq 2H\beta \sqrt{K'}\sqrt{2d\log(K'+1)},\label{eq:I_1}
\end{align}
where the first inequality holds due to Cauchy-Schwarz inequality and the second inequality holds due to Lemma~\ref{lemma:sum}. 

Substituting \eqref{eq:I_1} and \eqref{eq:I_2} into \eqref{eq:upper-Regret}, we obtain that with probability at least $1-(K+1)\delta$, 
\begin{align}
     &\sum_{i=1}^{K'}\big(\qvalue_h^{k_i}(s_h^{k_i},a_h^{k_i})-\qvalue_h^{\pi_{k_i}}(s_h^{k_i},a_h^{k_i})\big)\leq  \sqrt{2K'H^2\log(1/\delta)}+ 2H\beta \sqrt{K'}\sqrt{2d\log(K'+1)}.\label{eq:Upper}
\end{align}
By now, we have obtained both the lower and upper bounds for $\sum_{i=1}^{K'}\big(\qvalue_h^{k_i}(s_h^{k_i},a_h^{k_i})-\qvalue_h^{\pi_{k_i}}(s_h^{k_i},a_h^{k_i})\big)$ from \eqref{eq:lower-Regret} and \eqref{eq:Upper}. Finally, combining \eqref{eq:lower-Regret} and \eqref{eq:Upper}, we can derive the following constraint on $K'$:
\begin{align}
    2^n\text{gap}_{\min}K'&\leq \sqrt{2K'H^2\log(1/\delta)} +2H\beta \sqrt{2K'd\log(K'+1)}.\label{eq:ppp}
\end{align}
Solving out $K'$ from \eqref{eq:ppp}, we conclude that there exists a constant $C$ such that
\begin{align}
  K'&\leq \frac{Cd^3H^4 \log(2dHK/\delta)}{4^n\text{gap}^2_{\min}} \times\log \bigg(\frac{Cd^3H^4 \log(2dHK/\delta)}{4^n\text{gap}^2_{\min}}\bigg), \notag
\end{align}
which ends our proof.
\end{proof}


\section{Conclusion}
In this paper, we analyze the RL algorithms with function approximation by considering a specific problem-dependent quantity $\text{gap}_{\text{min}}$. We show that two existing algorithms LSVI-UCB and UCRL-VTR attain $\log T$-type regret instead of $\sqrt{T}$-type regret under their corresponding linear function approximation assumptions. It remains unknown whether the dependence of the length of the episode $H$ and dimension $d$ is optimal or not, and we leave it as future work.

\appendix


\section{Additional Proofs of the Main Results}\label{section:jia}
\subsection{Proof of Theorem \ref{thm:3}}
In this section, we give a proof of Theorem \ref{thm:3} and the lower bound is based on previous work \citep{zhou2020nearly}.
\begin{proof}[Proof of Theorem \ref{thm:3}]
To prove the lower bound, we construct a series of hard instances based on the hard-to-learn MDPs introduced by \citet{zhou2020provably,zhou2020nearly}. To be more specific, the state space $\cS$ consists of state $s_1,..,s_{H+2}$, where $s_{H+1}$ and $s_{H+2}$ are absorbing states. The action space $\cA=\{-1,1\}^{d-1}$ consists of $2^{d-1}$ different actions. For each action $\ab\in \cA$ and step $h\in[H]$, the reward function satisfies that $\reward_h(s_{h'},\ba)=0(1\leq h'\leq H+1)$ and $\reward_h(s_{H+2},\ba)=1$. For the transition probability function $\PP_h$, $s_{H+1}$ and $s_{H+2}$ are absorbing states, which will always stay at the same state, and for other state $s_{h'}(1\leq h'\leq H)$, we have
\begin{align}
    &\PP_h(s_{h'+1}|s_{h'},\ab)=1-\delta-\la\bmu_{h},\ab\ra,\notag\\
    &\PP_h(s_{H+2}|s_{h'},\ab)=\delta+\la\bmu_h,\ab\ra\notag.
\end{align}
where each $\bmu_h\in \{-\Delta,\Delta\}^{d}$ and $\delta=1/H$.
Remark 5.8 \citep{zhou2020nearly} shows that these MDPs can be illustrated as a linear MDP with certain parameters $\bphi(s,a)$,$\btheta_h(s')$ and $\bmu_h$. Furthermore, for each step $h\in[H-1]$ and non-absorbing state $s_{h}'(1\leq h'\leq H)$, the optimal policy $\pi^*$ is picking the action $\ab^*=\bmu_{h}/\Delta$ and for each non-optimal action $\ab\ne \ab^*$, we have
\begin{align}
    \qvalue_{h}^*(s_{h'},\ab^*)-\qvalue_{h}^*(s_{h'},a)
    &=\big(\PP_h(s_{h'+1}|s_{h'},\ab^*)-\PP_h(s_{h'+1}|s_{h'},\ab)\big)\vvalue_{h+1}^*(s_{h'+1})\notag\\
    &\qquad +\big(\PP_h(s_{H+2}|s_{h'},\ab^*)-\PP_h(s_{H+2}|s_{h'},\ab)\big)\vvalue_{h+1}^*(s_{H+2})\notag\\
    &=\la\bmu_h,\ab^*-\ab\ra\big(\vvalue_{h+1}^*(s_{H+2})-\vvalue_{h+1}^*(s_{h'+1})\big)\notag\\
    &\ge 2\Delta \big(\vvalue_{h+1}^*(s_{H+2})-\vvalue_{h+1}^*(s_{h'+1})\big)\notag\\
    &\ge 2\Delta,\notag
\end{align}
where the first inequality holds due to the definition of optimal action $\ab^*$, the second inequality holds due to $\vvalue_{h+1}^*(s_{H+2})=H-h$ and $\vvalue_{h+1}^*(s_{h'+1})=\reward_{h+1}\big(s_{h'+1},\pi^*(s_{h'+1})\big)+[\PP_{h+1}\vvalue_{h+2}]\big(s_{h'+1},\pi^*(s_{h'+1})\big)\leq H-h-1$. Furthermore, for step $h=H$ or absorbing states $s_{H+1},s_{H+2}$, the reward and future reward will not change whatever the action $\ab$ is chosen, which means the value function $\qvalue_{h}(s,\ab)$ remains same for all action $\ab$. Thus, the minimal sub-optimality gap of these hard-to-learn MDPs is $\text{gap}_{\min}=2\Delta$.

For the regret of these hard-to-learn MDPs, Theorem 5.6 \citep{zhou2020nearly} shows that for any algorithm, if $H\ge 3, 0\leq \delta\leq 1/3 $, $3(d-1)\Delta\leq \delta $ and $K\ge d^2/(2\delta)$, then there exist a parameter $\bmu^*=\{\bmu_1^*,..,\bmu_H^*\}$ and the regret of the corresponding hard-to-learn MDP is bounded by
\begin{align}
    \text{Regret(K)}\geq \frac{H^2}{20}\Delta d\big(K-\sqrt{2}K^{3/2}\Delta/\sqrt{\delta}\big).\notag
\end{align}
Furthermore, we can choose the number of episodes as $K=\delta/(32\Delta^2)$ and under this case, the regret is bounded by
\begin{align}
    \text{Regret(K)}\ge \frac{H^2}{20}\frac{(d-1)\delta}{64\Delta}=\Omega\Big(\frac{Hd}{\text{gap}_{\min}}\Big).\notag
\end{align}
Therefore, we finish the proof of Theorem \ref{thm:3}.
\end{proof}

\subsection{Proof of Theorem \ref{thm:2}}

\begin{lemma}\label{lemma: confidence-set}
With probability at least $1-\delta$, for all $h\in[H]$,$k\in[K]$, we have $\btheta_h\in B_{h,k}$, where the confidence set $B_k$ is denoted by
\begin{align}
    B_{h,k}=\big\{\btheta|(\btheta-\btheta_{k,h})^\top \bSigma_h^k (\btheta-\btheta_{k,h})\leq \beta^2_k\big\}\notag.
\end{align}
\end{lemma}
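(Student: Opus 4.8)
The plan is to derive Lemma~\ref{lemma: confidence-set} from the self-normalized concentration inequality for vector-valued martingales \citep{abbasi2011improved}, applied separately at each step $h$ and then combined by a union bound over $h\in[H]$; the "for all $k$" part requires no extra work, since that inequality already holds uniformly over $k$.

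First I would rewrite the least-squares estimator in a form that isolates the noise. Abbreviating $\bphi_i := \bphi_{\vvalue_{h+1}^i}(s_h^i,a_h^i)$, Lines~\ref{algorithm2:line12}--\ref{algorithm2:line13} give $\bSigma_h^k = \lambda\Ib + \sum_{i=1}^{k-1}\bphi_i\bphi_i^\top$ and $\bbb_h^k = \sum_{i=1}^{k-1}\vvalue_{h+1}^i(s_{h+1}^i)\bphi_i$. Since Assumption~\ref{assumption-linear-mixture} yields $[\PP_h\vvalue_{h+1}^i](s_h^i,a_h^i) = \la\bphi_i,\btheta_h^*\ra$, defining $\eta_h^i := \vvalue_{h+1}^i(s_{h+1}^i) - [\PP_h\vvalue_{h+1}^i](s_h^i,a_h^i)$ gives $\bbb_h^k = (\bSigma_h^k-\lambda\Ib)\btheta_h^* + \sum_{i=1}^{k-1}\eta_h^i\bphi_i$, hence
\[
\btheta_{k,h}-\btheta_h^* = (\bSigma_h^k)^{-1}\sum_{i=1}^{k-1}\eta_h^i\bphi_i \;-\; \lambda(\bSigma_h^k)^{-1}\btheta_h^*.
\]
Taking the $\bSigma_h^k$-norm and the triangle inequality bounds $\|\btheta_{k,h}-\btheta_h^*\|_{\bSigma_h^k}$ by the stochastic term $\big\|\sum_{i=1}^{k-1}\eta_h^i\bphi_i\big\|_{(\bSigma_h^k)^{-1}}$ plus a regularization bias term which, using $\lambda_{\min}(\bSigma_h^k)\ge\lambda$, is at most $\sqrt{\lambda}\,\|\btheta_h^*\|_2\le\sqrt{\lambda}\,C_\btheta$.

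The crucial structural point, and the reason no covering/discretization of the value-function class is required here (in contrast to the model-free analysis of Lemma~\ref{lemma:gap-number}), is that $\vvalue_{h+1}^i$ is computed from $\bSigma_h^i,\bbb_h^i$, which depend only on data collected before episode~$i$; so in the natural filtration in which $(s_h^i,a_h^i)$, and hence $\bphi_i$, are revealed before $s_{h+1}^i\sim\PP_h(\cdot\mid s_h^i,a_h^i)$, the vector $\bphi_i$ is predictable while $\eta_h^i$ has conditional mean zero. As $\vvalue_{h+1}^i$ is bounded in $[0,H]$, $\eta_h^i$ is bounded by $H$, hence $H$-subgaussian, and $\|\bphi_i\|_2\le H$ by the (rescaled) bound on $\bphi_\vvalue$ in Assumption~\ref{assumption-linear-mixture}. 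The self-normalized inequality of \citet{abbasi2011improved} then yields, with probability at least $1-\delta/H$ and simultaneously for all $k$,
\[
\Big\|\sum_{i=1}^{k-1}\eta_h^i\bphi_i\Big\|_{(\bSigma_h^k)^{-1}}^2 \;\le\; 2H^2\log\!\left(\frac{\det(\bSigma_h^k)^{1/2}}{\det(\lambda\Ib)^{1/2}}\cdot\frac{H}{\delta}\right),
\]
and the AM--GM/trace bound $\det(\bSigma_h^k)\le\big(\lambda+(k-1)H^2/d\big)^d$ together with the choice $\lambda = H^2 d$ reduces the log-determinant to a factor of order $d$ times logarithmic terms in $k$ and $H$.

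Combining the two contributions, $\|\btheta_{k,h}-\btheta_h^*\|_{\bSigma_h^k}$ is at most $C_\btheta H\sqrt{d}$ plus a term of order $H\sqrt{d\log(1+k)+\log(H/\delta)}$, which is bounded by the stated $\beta_k = 4C_\btheta H\sqrt{d\log(1+Hk)\log^2\big((k+1)^2H/\delta\big)}$ once the absolute constants and the slack in the logarithmic factors are absorbed into the leading constant $4$ (taking $C_\btheta\ge1$ without loss of generality if needed). A final union bound over $h\in[H]$ — which is why each event above was assigned failure probability $\delta/H$ — gives $\btheta_h^*\in B_{h,k}$ for all $h\in[H]$ and $k\in[K]$ with probability at least $1-\delta$. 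I expect the only genuinely delicate step to be the careful specification of the filtration under which $\bphi_i$ is predictable and $(\eta_h^i)$ is a martingale difference sequence; the remainder is routine bookkeeping of constants and logarithmic factors.
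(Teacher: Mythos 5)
Your proposal is correct and follows essentially the same route as the paper: the paper also identifies $\eta_h^i=\vvalue_{h+1}^i(s_{h+1}^i)-\la\bphi_{\vvalue_{h+1}^i}(s_h^i,a_h^i),\btheta_h^*\ra$ as a bounded ($H$-subgaussian) martingale difference sequence with predictable features $\|\bphi_{\vvalue_{h+1}^i}\|_2\le H$, invokes the confidence-ellipsoid result (Theorem 2 of \citealt{abbasi2011improved}) to get $\|\btheta_h^*-\btheta_{k,h}\|_{\bSigma_h^k}\le H\sqrt{d\log((H+kH^3/\lambda)/\delta)}+\sqrt{\lambda}C_{\theta}$ uniformly in $k$, and closes with a union bound over $h\in[H]$ at level $\delta/H$. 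The only cosmetic difference is that you unpack that theorem into its self-normalized term plus regularization bias rather than citing it directly.
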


\begin{lemma}\label{lemma:jia-transition}
With probability at least $1-\delta$, for  all $s\in \cS, a\in \cA, h\in[H], k\in[K],$ we have
\begin{align}
    \qvalue_h^k(s,a)-\qvalue_h^{\pi_k}(s,a)\leq\big[\PP_h(\vvalue_{h+1}^{k} -\vvalue_{h+1}^{\pi_k})\big](s,a)+2\beta_k \sqrt{\bphi(s,a)^{\top}(\Lambda_1^k)^{-1}\bphi(s,a)}.\notag
\end{align}
Furthermore, we have $\qvalue_h^k(s,a)\ge \qvalue_h^*(s,a).$
\end{lemma}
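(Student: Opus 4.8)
The plan is to obtain both assertions as deterministic consequences of Lemma \ref{lemma: confidence-set}, so that the probability $1-\delta$ is inherited without any further union bound. Throughout I work on the event that $\btheta_h^*\in B_{h,k}$ for every $h\in[H]$ and $k\in[K]$; Lemma \ref{lemma: confidence-set} gives this event probability at least $1-\delta$, and it is precisely the uniform-in-$(h,k)$ control needed below. The whole argument parallels the model-free proof of Lemmas \ref{lemma:transition}--\ref{lemma:UCB}, with the self-normalized confidence ellipsoid $B_{h,k}$ playing the role of the parameter estimate and the weighted norm $\|\cdot\|_{(\bSigma_h^k)^{-1}}$ playing the role of $\|\cdot\|_{(\Lambda_h^k)^{-1}}$ (the $\Lambda_1^k$ and $\bphi(s,a)$ appearing in the statement being shorthand for the per-step covariance $\bSigma_h^k$ and the value-weighted feature $\bphi_{\vvalue_{h+1}^k}(s,a)$ actually maintained by Algorithm \ref{algorithm2}).

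For the first inequality I would expand the one-step difference directly. Using the Bellman equation $\qvalue_h^{\pi_k}(s,a)=\reward(s,a)+[\PP_h\vvalue_{h+1}^{\pi_k}](s,a)$, the linear-mixture identity $[\PP_h\vvalue](s,a)=\bphi_{\vvalue}(s,a)^\top\btheta_h^*$, and the definition of $\qvalue_h^k$ in Line \ref{algorithm2:line3},
\[
  \qvalue_h^k(s,a)-\qvalue_h^{\pi_k}(s,a)=\bphi_{\vvalue_{h+1}^k}(s,a)^\top\btheta_{k,h}-\bphi_{\vvalue_{h+1}^{\pi_k}}(s,a)^\top\btheta_h^*+\beta_k\big\|\bphi_{\vvalue_{h+1}^k}(s,a)\big\|_{(\bSigma_h^k)^{-1}}.
\]
Adding and subtracting $\bphi_{\vvalue_{h+1}^k}(s,a)^\top\btheta_h^*$ and using that $\vvalue\mapsto\bphi_{\vvalue}$ is linear, the first two terms become $\bphi_{\vvalue_{h+1}^k}(s,a)^\top(\btheta_{k,h}-\btheta_h^*)+[\PP_h(\vvalue_{h+1}^k-\vvalue_{h+1}^{\pi_k})](s,a)$. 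Cauchy--Schwarz in $\bSigma_h^k$ bounds $|\bphi_{\vvalue_{h+1}^k}(s,a)^\top(\btheta_{k,h}-\btheta_h^*)|$ by $\|\bphi_{\vvalue_{h+1}^k}(s,a)\|_{(\bSigma_h^k)^{-1}}\cdot\|\btheta_{k,h}-\btheta_h^*\|_{\bSigma_h^k}\le\beta_k\|\bphi_{\vvalue_{h+1}^k}(s,a)\|_{(\bSigma_h^k)^{-1}}$ on the good event, by Lemma \ref{lemma: confidence-set}; merging this with the explicit bonus term yields the factor $2\beta_k$ and the claimed bound.

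For the optimism claim $\qvalue_h^k(s,a)\ge\qvalue_h^*(s,a)$ I would induct backwards on $h$. The base case $h=H+1$ is immediate since $\vvalue_{H+1}^k=\vvalue_{H+1}^*=0$. Assuming $\qvalue_{h+1}^k\ge\qvalue_{h+1}^*$ pointwise, maximizing over actions gives $\vvalue_{h+1}^k\ge\vvalue_{h+1}^*$ pointwise. Repeating the decomposition with $\qvalue_h^*=\reward+[\PP_h\vvalue_{h+1}^*]$ in place of $\qvalue_h^{\pi_k}$,
\[
  \qvalue_h^k(s,a)-\qvalue_h^*(s,a)=\bphi_{\vvalue_{h+1}^k}(s,a)^\top(\btheta_{k,h}-\btheta_h^*)+[\PP_h(\vvalue_{h+1}^k-\vvalue_{h+1}^*)](s,a)+\beta_k\big\|\bphi_{\vvalue_{h+1}^k}(s,a)\big\|_{(\bSigma_h^k)^{-1}}.
\]
On the good event the first term is at least $-\beta_k\|\bphi_{\vvalue_{h+1}^k}(s,a)\|_{(\bSigma_h^k)^{-1}}$ (Cauchy--Schwarz plus Lemma \ref{lemma: confidence-set}), hence cancelled by the bonus term, and the middle term is nonnegative because $\PP_h$ is a probability kernel and $\vvalue_{h+1}^k\ge\vvalue_{h+1}^*$ by the induction hypothesis. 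Thus the difference is nonnegative and the induction closes.

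Both halves are essentially routine once Lemma \ref{lemma: confidence-set} is in hand; I do not expect a genuine obstacle. The only care-requiring points are bookkeeping: using the same good event uniformly over $(h,k)$, which Lemma \ref{lemma: confidence-set} already provides, and making sure $\bphi_{\vvalue_{h+1}^k}(s,a)$ obeys the norm bound of Assumption \ref{assumption-linear-mixture}, which is stated for value functions valued in $[0,1]$ and therefore forces a normalization by $H$ (equivalently, the truncation of $\qvalue_h^k$ to $[0,H]$), absorbed into the choice of $\beta_k$. If anything is delicate, it is merely reconciling the $\Lambda_1^k$/$\bphi(s,a)$ notation of the statement with the per-step covariance and value-weighted feature actually used in Algorithm \ref{algorithm2}.
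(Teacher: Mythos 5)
Your proof is correct and follows essentially the same route as the paper's: expand $\qvalue_h^k$ via Line \ref{algorithm2:line3}, add and subtract $\bphi_{\vvalue_{h+1}^k}(s,a)^\top\btheta_h^*$, apply Cauchy--Schwarz together with Lemma \ref{lemma: confidence-set} to absorb the estimation error into the bonus (yielding the two-sided bound and hence the $2\beta_k$ term), and then establish optimism by backward induction from $\vvalue_{H+1}^k=\vvalue_{H+1}^*=0$. Your side remarks — that $\Lambda_1^k$ and $\bphi(s,a)$ in the statement should be read as $\bSigma_h^k$ and $\bphi_{\vvalue_{h+1}^k}(s,a)$, and that the feature norm bound scales with $H$ — are consistent with what the paper actually does.
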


\begin{lemma}\label{lemma:jia-sum}
 For any subset $C=\{c_1,..,c_k\} \subseteq [K]$ and any $h\in[H]$, we have
\begin{align}
    \sum_{i=1}^{k} (\bphi_h^{c_i})^{\top}(\bSigma_h^{c_i})^{-1}\bphi_h^{c_i}\leq 2d\log(1+k),\notag
\end{align}
where $\bphi_h^{c_i}$ is the abbreviation of $\bphi_{\vvalue_{h+1}^{c_i}}(s_h^{c_i},a_h^{c_i})$.
\end{lemma}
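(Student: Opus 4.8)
The plan is to prove Lemma \ref{lemma:jia-sum} by the standard elliptical potential (determinant--trace) argument, exactly parallel to Lemma \ref{lemma:sum} in the model-free analysis. The one point that requires a little care is that the sum runs over an arbitrary subsequence $C=\{c_1,\dots,c_k\}\subseteq[K]$, whereas each matrix $\bSigma_h^{c_i}$ accumulates rank-one updates from \emph{all} earlier episodes, not just those indexed by $C$; the fix is to replace $\bSigma_h^{c_i}$ by the smaller matrix built only from the preceding elements of $C$ and use the Loewner order.

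First I would pass to the subsequence matrices: for $i\in[k]$ set $\widetilde{\bSigma}_i:=\lambda\Ib+\sum_{j<i}\bphi_h^{c_j}(\bphi_h^{c_j})^\top$ with $\lambda=H^2d$ (so $\widetilde{\bSigma}_1=\lambda\Ib$). Since $\{c_1,\dots,c_{i-1}\}\subseteq\{1,\dots,c_i-1\}$ and $\bSigma_h^{c_i}$ equals $\lambda\Ib$ plus a sum of positive semidefinite rank-one matrices indexed by $\{1,\dots,c_i-1\}$, we get $\bSigma_h^{c_i}\succeq\widetilde{\bSigma}_i$, hence $(\bSigma_h^{c_i})^{-1}\preceq\widetilde{\bSigma}_i^{-1}$ and therefore $(\bphi_h^{c_i})^\top(\bSigma_h^{c_i})^{-1}\bphi_h^{c_i}\le(\bphi_h^{c_i})^\top\widetilde{\bSigma}_i^{-1}\bphi_h^{c_i}$, so it suffices to bound $\sum_{i=1}^k(\bphi_h^{c_i})^\top\widetilde{\bSigma}_i^{-1}\bphi_h^{c_i}$. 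Next I would check each summand is at most $1$: since $\vvalue_{h+1}^{c_i}$ is bounded in $[0,H]$, applying Assumption \ref{assumption-linear-mixture} to $\vvalue_{h+1}^{c_i}/H$ and using linearity of $\vvalue\mapsto\bphi_{\vvalue}$ gives $\|\bphi_h^{c_i}\|_2=\|\bphi_{\vvalue_{h+1}^{c_i}}(s_h^{c_i},a_h^{c_i})\|_2\le H$; combined with $\widetilde{\bSigma}_i\succeq\lambda\Ib$ this yields $(\bphi_h^{c_i})^\top\widetilde{\bSigma}_i^{-1}\bphi_h^{c_i}\le H^2/\lambda=1/d\le1$.

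Then, using the rank-one update identity $\det\widetilde{\bSigma}_{i+1}=\det\widetilde{\bSigma}_i\cdot\big(1+(\bphi_h^{c_i})^\top\widetilde{\bSigma}_i^{-1}\bphi_h^{c_i}\big)$ together with the elementary inequality $x\le2\log(1+x)$ valid for $x\in[0,1]$, I would telescope to obtain
\begin{align}
\sum_{i=1}^k(\bphi_h^{c_i})^\top\widetilde{\bSigma}_i^{-1}\bphi_h^{c_i}\le2\sum_{i=1}^k\log\Big(1+(\bphi_h^{c_i})^\top\widetilde{\bSigma}_i^{-1}\bphi_h^{c_i}\Big)=2\log\frac{\det\widetilde{\bSigma}_{k+1}}{\det\widetilde{\bSigma}_1}.\notag
\end{align}
Finally, by the AM--GM inequality on the eigenvalues of $\widetilde{\bSigma}_{k+1}$ we have $\det\widetilde{\bSigma}_{k+1}\le\big(\mathrm{tr}(\widetilde{\bSigma}_{k+1})/d\big)^d$ with $\mathrm{tr}(\widetilde{\bSigma}_{k+1})=\lambda d+\sum_{i=1}^k\|\bphi_h^{c_i}\|_2^2\le\lambda d+H^2k$, while $\det\widetilde{\bSigma}_1=\lambda^d$, so $\det\widetilde{\bSigma}_{k+1}/\det\widetilde{\bSigma}_1\le\big(1+H^2k/(\lambda d)\big)^d=(1+k/d^2)^d\le(1+k)^d$; plugging this into the display gives the claimed bound $2d\log(1+k)$.

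The argument is essentially routine, so there is no substantive obstacle; the two places that need attention are (i) the Loewner ordering $\bSigma_h^{c_i}\succeq\widetilde{\bSigma}_i$, which is what makes the sum over the \emph{subsequence} $C$ controllable at all, and (ii) pinning the final constant to be exactly $2d\log(1+k)$ rather than a $\lambda$-dependent quantity such as $2d\log((\lambda+k)/\lambda)$ — this uses the specific choice $\lambda=H^2d$ (so that the factor $H^2$ coming from $\|\bphi_h^{c_i}\|_2\le H$ cancels) and $d\ge1$.
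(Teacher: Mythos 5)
Your proof is correct and follows essentially the same route as the paper's: pass to the subsequence matrices $\widetilde{\bSigma}_i$ via the Loewner ordering $\bSigma_h^{c_i}\succeq\widetilde{\bSigma}_i$, apply the elliptical potential argument, and bound the determinant ratio by the trace, using $\lambda=H^2d$ and $\|\bphi_{\vvalue_{h+1}^{c_i}}(s_h^{c_i},a_h^{c_i})\|_2\le H$ to land on exactly $2d\log(1+k)$. The only difference is that you unfold the proof of the elliptical potential lemma (the $x\le 2\log(1+x)$ telescoping) inline, whereas the paper invokes it as a black box (Lemma 11 of Abbasi-Yadkori et al.); this is a presentational, not substantive, difference.
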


\begin{lemma}\label{lemma:jia-gap-number}
For each $h\in[H]$, $n\in N$, with probability at least $1-(K+1)\delta$, we have
\begin{align}
    &\sum_{k=1}^K  \ind\big[\vvalue_h^*(s_h^k)-\qvalue_h^{\pi_k}(s_h^k,a_h^k)\ge 2^n\text{gap}_{\min}\big] \leq \frac{512C^2_{\btheta}d^2H^4 \log^3(2dT/\delta)}{4^n\text{gap}^2_{\min}} \log \bigg(\frac{512C^2_{\btheta}d^2H^4 \log^3(2dT/\delta)}{4^n\text{gap}^2_{\min}}\bigg).\notag
\end{align}
\end{lemma}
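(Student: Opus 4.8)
The plan is to transcribe the proof of Lemma~\ref{lemma:gap-number} almost verbatim, replacing each model-free ingredient by its linear-mixture analogue: Lemma~\ref{lemma:transition}/\ref{lemma:UCB} become Lemma~\ref{lemma:jia-transition}, Lemma~\ref{lemma:sum} becomes Lemma~\ref{lemma:jia-sum}, the matrices $\Lambda_h^k$ become $\bSigma_h^k$, the feature $\bphi(s,a)$ becomes $\bphi_{\vvalue_{h+1}^k}(s,a)$, and the bonus parameter $\beta$ becomes $\beta_k$. Fix $h\in[H]$ and $n\in N$. Enumerate by $k_1<\cdots<k_{K'}$ the episodes $k$ for which $\vvalue_h^*(s_h^k)-\qvalue_h^{\pi_k}(s_h^k,a_h^k)\ge 2^n\text{gap}_{\min}$, so that $K'=\sum_{k=1}^K\ind[\vvalue_h^*(s_h^k)-\qvalue_h^{\pi_k}(s_h^k,a_h^k)\ge 2^n\text{gap}_{\min}]$ is exactly the count to be bounded. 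As in the model-free case, the idea is to sandwich $S:=\sum_{i=1}^{K'}\big(\qvalue_h^{k_i}(s_h^{k_i},a_h^{k_i})-\qvalue_h^{\pi_{k_i}}(s_h^{k_i},a_h^{k_i})\big)$ between a lower bound proportional to $2^n\text{gap}_{\min}K'$ and an upper bound of order $\sqrt{K'}$, then solve the resulting self-bounding inequality.

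For the lower bound I would use optimism: $a_h^{k_i}$ is greedy for $\qvalue_h^{k_i}$ and, by the second part of Lemma~\ref{lemma:jia-transition}, $\qvalue_h^{k_i}\ge\qvalue_h^*$, hence $\qvalue_h^{k_i}(s_h^{k_i},a_h^{k_i})\ge\qvalue_h^*(s_h^{k_i},\pi^*(s_h^{k_i},h))=\vvalue_h^*(s_h^{k_i})$, so each summand is at least $\vvalue_h^*(s_h^{k_i})-\qvalue_h^{\pi_{k_i}}(s_h^{k_i},a_h^{k_i})\ge 2^n\text{gap}_{\min}$ by the definition of $k_i$, giving $S\ge 2^n\text{gap}_{\min}K'$. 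For the upper bound I would telescope within each of these episodes using the first part of Lemma~\ref{lemma:jia-transition}: for every $h'\ge h$,
\[
\qvalue_{h'}^{k}(s_{h'}^{k},a_{h'}^{k})-\qvalue_{h'}^{\pi_{k}}(s_{h'}^{k},a_{h'}^{k})\le\big[\PP_{h'}(\vvalue_{h'+1}^{k}-\vvalue_{h'+1}^{\pi_{k}})\big](s_{h'}^{k},a_{h'}^{k})+2\beta_{k}\sqrt{\bphi_{\vvalue_{h'+1}^k}(s_{h'}^k,a_{h'}^k)^{\top}(\bSigma_{h'}^{k})^{-1}\bphi_{\vvalue_{h'+1}^k}(s_{h'}^k,a_{h'}^k)}.
\]
Writing $\big[\PP_{h'}(\vvalue_{h'+1}^{k}-\vvalue_{h'+1}^{\pi_{k}})\big](s_{h'}^{k},a_{h'}^{k})=\big(\vvalue_{h'+1}^{k}(s_{h'+1}^k)-\vvalue_{h'+1}^{\pi_k}(s_{h'+1}^k)\big)+\epsilon_{h'}^{k}$ with $\epsilon_{h'}^{k}$ a martingale difference, and using $\vvalue_{h'+1}^k(s_{h'+1}^k)=\qvalue_{h'+1}^k(s_{h'+1}^k,a_{h'+1}^k)$ (greedy action) together with $\vvalue_{h'+1}^{\pi_k}(s_{h'+1}^k)=\qvalue_{h'+1}^{\pi_k}(s_{h'+1}^k,a_{h'+1}^k)$, this becomes a one-step recursion. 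Unrolling it from $h$ to $H$ (boundary terms vanish since $\vvalue_{H+1}^k=\vvalue_{H+1}^{\pi_k}=0$) and summing over $k_1,\dots,k_{K'}$ yields $S\le I_1+I_2$ with $I_1=\sum_{i=1}^{K'}\sum_{h'=h}^{H}\epsilon_{h'}^{k_i}$ and $I_2=\sum_{i=1}^{K'}\sum_{h'=h}^{H}2\beta_{k_i}\sqrt{\bphi_{\vvalue_{h'+1}^{k_i}}(s_{h'}^{k_i},a_{h'}^{k_i})^\top(\bSigma_{h'}^{k_i})^{-1}\bphi_{\vvalue_{h'+1}^{k_i}}(s_{h'}^{k_i},a_{h'}^{k_i})}$.

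It remains to bound the two error terms. For $I_1$ I would apply the Azuma--Hoeffding inequality exactly as in the proof of Lemma~\ref{lemma:gap-number}, together with a union bound over $k\in[K]$, to get $I_1\le\sqrt{2K'H^2\log(1/\delta)}$; for $I_2$ I would use that $\beta_k$ is nondecreasing in $k$ so $\beta_{k_i}\le\beta_K$, then Cauchy--Schwarz over the $K'$ indices and the elliptical-potential bound Lemma~\ref{lemma:jia-sum} (with $C=\{k_1,\dots,k_{K'}\}$, valid since $\|\bphi_{\vvalue_{h'+1}^k}\|_2\le 1$ by Assumption~\ref{assumption-linear-mixture}), obtaining $I_2\le 2H\beta_K\sqrt{2K'd\log(1+K')}$. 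Combining $S\ge 2^n\text{gap}_{\min}K'$ with $S\le I_1+I_2$ gives, on an event of probability at least $1-(K+1)\delta$ (after union-bounding the martingale event over $[K]$ and the events of Lemmas~\ref{lemma: confidence-set} and~\ref{lemma:jia-transition}),
\[
2^n\text{gap}_{\min}K'\le\sqrt{2K'H^2\log(1/\delta)}+2H\beta_K\sqrt{2K'd\log(1+K')}.
\]
Dividing by $\sqrt{K'}$, squaring, substituting $\beta_K=4C_{\btheta}H\sqrt{d\log(1+HK)\log^2((K+1)^2H/\delta)}$ and $T=KH$, and absorbing the lower-order $\sqrt{H^2\log(1/\delta)}$ term into the constant, the bonus term dominates and one arrives at $K'\le\frac{512C_{\btheta}^2d^2H^4\log^3(2dT/\delta)}{4^n\text{gap}_{\min}^2}\log\!\big(\frac{512C_{\btheta}^2d^2H^4\log^3(2dT/\delta)}{4^n\text{gap}_{\min}^2}\big)$, which is the claim.

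The only genuine departure from the model-free proof — and the step requiring care — is the bonus normalization: UCRL-VTR uses $\beta_K=\Theta\big(C_{\btheta}H\sqrt{d}\cdot\mathrm{polylog}\big)$ rather than $\beta=\Theta(dH\sqrt{\log})$, which trades a factor of $d$ for an extra $\log^2$ factor and hence produces the $d^2H^4\log^3$ scaling instead of $d^3H^4\log$. One must verify that $\beta_k$ is monotone in $k$ so the replacement $\beta_{k_i}\le\beta_K$ is legitimate, and that the bonus feature is the value-weighted feature $\bphi_{\vvalue_{h'+1}^{k}}$ whose Euclidean norm is bounded by $1$ under Assumption~\ref{assumption-linear-mixture}, so that Lemma~\ref{lemma:jia-sum} applies; beyond these bookkeeping points the argument is a verbatim transcription of the proof of Lemma~\ref{lemma:gap-number}.
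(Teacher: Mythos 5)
Your proposal is correct and follows essentially the same route as the paper's own proof of this lemma: the same sandwich of $\sum_i(Q_h^{k_i}-Q_h^{\pi_{k_i}})$ between $2^n\text{gap}_{\min}K'$ (via optimism and greediness) and $\sqrt{2K'H^2\log(1/\delta)}+2H\beta_K\sqrt{2K'd\log(K'+1)}$ (via telescoping, Azuma with a union bound over $k\in[K]$, Cauchy--Schwarz, and Lemma~\ref{lemma:jia-sum}), followed by solving the self-bounding inequality. One minor correction: Assumption~\ref{assumption-linear-mixture} only gives $\|\bphi_{\vvalue}\|_2\le 1$ for $\vvalue$ valued in $[0,1]$, so for $\vvalue_{h'+1}^k$ (valued up to $H$) one only has $\|\bphi_{\vvalue_{h'+1}^k}\|_2\le H$; Lemma~\ref{lemma:jia-sum} still applies because the regularization $\lambda=H^2d$ is chosen precisely to absorb this, so the conclusion is unaffected.
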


\begin{lemma}\label{lemma:jia-gap-sum}
For each $h\in[H]$, with probability at least $1-2(K+1)\log(H/\text{gap}_{\min})\delta$, we have
\begin{align}
    \sum_{k=1}^K  \big(\vvalue_h^*(s_h^k)-\qvalue_h^{*}(s_h^k,a_h^k)\big)\notag\leq \frac{2048C^2_{\btheta}d^2H^4 \log^3(2dT/\delta)}{\text{gap}_{\min}} \log \bigg(\frac{512C^2_{\btheta}d^2H^4 \log^3(2dT/\delta)}{\text{gap}^2_{\min}}\bigg).\notag
\end{align}
\end{lemma}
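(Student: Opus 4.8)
The plan is to derive Lemma~\ref{lemma:jia-gap-sum} from Lemma~\ref{lemma:jia-gap-number} by exactly the peeling argument used to pass from Lemma~\ref{lemma:gap-number} to Lemma~\ref{lemma:gap-sum} in Step~2 and Step~3 of the proof of Theorem~\ref{thm:1}. Fix $h\in[H]$. Since $\vvalue_h^*(s_h^k)-\qvalue_h^{*}(s_h^k,a_h^k)=\text{gap}_h(s_h^k,a_h^k)$ is either $0$ or lies in $[\text{gap}_{\min},H]$, I partition $[\text{gap}_{\min},H]$ into the $N=\lceil\log(H/\text{gap}_{\min})\rceil$ dyadic intervals $[2^{i-1}\text{gap}_{\min},2^{i}\text{gap}_{\min})$, $i\in[N]$, and write $\sum_{k=1}^K\big(\vvalue_h^*(s_h^k)-\qvalue_h^{*}(s_h^k,a_h^k)\big)=\sum_{i=1}^N\sum_{k:\,\text{gap}_h(s_h^k,a_h^k)\in[2^{i-1}\text{gap}_{\min},\,2^{i}\text{gap}_{\min})}\text{gap}_h(s_h^k,a_h^k)$.

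Next I bound each inner sum. On the $i$-th interval every summand is at most $2^{i}\text{gap}_{\min}$, while the number of contributing episodes is controlled by Lemma~\ref{lemma:jia-gap-number}: for such $k$ we have $\vvalue_h^*(s_h^k)-\qvalue_h^{\pi_k}(s_h^k,a_h^k)\ge\vvalue_h^*(s_h^k)-\qvalue_h^{*}(s_h^k,a_h^k)=\text{gap}_h(s_h^k,a_h^k)\ge 2^{i-1}\text{gap}_{\min}$ (using $\qvalue_h^{\pi_k}\le\qvalue_h^{*}$), so this count is at most the right-hand side of Lemma~\ref{lemma:jia-gap-number} with $n=i-1$. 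Multiplying the per-episode bound $2^{i}\text{gap}_{\min}$ by this count, and using $2^{i}/4^{i-1}=4\cdot 2^{-i}$ together with $4^{i-1}\ge 1$ (to replace the $4^{i-1}$ inside the logarithm by $1$), the $i$-th interval contributes at most $\frac{2048C_{\btheta}^2 d^2 H^4\log^3(2dT/\delta)}{2^{i}\,\text{gap}_{\min}}\log\!\big(512C_{\btheta}^2 d^2 H^4\log^3(2dT/\delta)/\text{gap}^2_{\min}\big)$.

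Finally I sum over $i\in[N]$: since $\sum_{i=1}^{N}2^{-i}\le\sum_{i=1}^{\infty}2^{-i}=1$, the total is bounded by $\frac{2048C_{\btheta}^2 d^2 H^4\log^3(2dT/\delta)}{\text{gap}_{\min}}\log\!\big(512C_{\btheta}^2 d^2 H^4\log^3(2dT/\delta)/\text{gap}^2_{\min}\big)$, which is the claimed inequality. For the probability, I take a union bound over $i\in[N]$ of the failure event of Lemma~\ref{lemma:jia-gap-number}, each of probability $(K+1)\delta$; using $N=\lceil\log(H/\text{gap}_{\min})\rceil\le 2\log(H/\text{gap}_{\min})$ (valid once $\log(H/\text{gap}_{\min})\ge 1$) this gives total failure probability at most $2(K+1)\log(H/\text{gap}_{\min})\delta$.

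I do not expect a genuine obstacle in this lemma: the substantive work — sandwiching $\sum_i\big(\qvalue_h^{k_i}(s_h^{k_i},a_h^{k_i})-\qvalue_h^{\pi_{k_i}}(s_h^{k_i},a_h^{k_i})\big)$ between a lower bound $2^n\text{gap}_{\min}K'$ and an upper bound assembled from the confidence set (Lemma~\ref{lemma: confidence-set}), optimism (Lemma~\ref{lemma:jia-transition}), the elliptical-potential estimate (Lemma~\ref{lemma:jia-sum}), and an Azuma bound on the martingale term, then solving for $K'$ — is already encapsulated in Lemma~\ref{lemma:jia-gap-number}. The only mildly delicate points here are propagating the absolute constants cleanly through the geometric series and checking that the logarithmic factor is monotone enough in $n$ to be pulled out of the sum at its $i=1$ value.
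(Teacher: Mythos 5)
Your proposal is correct and follows essentially the same route as the paper: peel $[\text{gap}_{\min},H]$ into $N=\lceil\log(H/\text{gap}_{\min})\rceil$ dyadic intervals, bound the count in the $i$-th interval by Lemma~\ref{lemma:jia-gap-number} with $n=i-1$ (using $\qvalue_h^{\pi_k}\le\qvalue_h^*$ to pass to the sub-optimality), pull the logarithmic factor out at its largest value, and sum the geometric series, with a union bound over $i$ for the probability. The constants and the failure probability you obtain match the paper's.
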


\begin{proof}[Proof of Theorem \ref{thm:2}]

\noindent We define the high probability event $\Omega$ as follows:
\begin{align}
    \Omega = \{\text{Lemma \ref{lemma:jia-gap-sum} holds for all }  h\in[H] \text{ and Lemma \ref{lemma: concentration} holds on for }          \tau=\lceil\log (1/\delta)\rceil\}.\notag
\end{align}
According to Lemma \ref{lemma:jia-gap-sum} and Lemma \ref{lemma: concentration}, we have $\Pr[\Omega]\ge 1-2(K+1)H\log(H/\text{gap}_{\min})\delta-\delta\log T  $.
Given the event $\Omega$, we have
\begin{align}
       \text{Regret}(K)
    &\leq 2\sum_{k=1}^K\sum_{h=1}^H\text{gap}_h(s_h^k,a_h^k)+\frac{16H^2\log \delta}{3}+2 \notag\\
    &= 2\sum_{k=1}^K\sum_{h=1}^H\vvalue_h^*(s_h^k)-\qvalue_h^{*}(s_h^k,a_h^k)+\frac{16H^2\log \delta}{3}+2\notag\\
    & \leq \frac{4097C^2_{\btheta}d^2H^4 \log(2dHK/\delta)}{\text{gap}_{\min}}\log \bigg(\frac{512C^2_{\btheta}d^2H^4 \log(2dHK/\delta)}{\text{gap}^2_{\min}}\bigg)+\frac{16H^2\log \delta}{3},\notag
\end{align}
where the first inequality holds due to Lemma \ref{lemma: concentration} and the last inequality holds due to Lemma \ref{lemma:jia-gap-sum}. Thus, we completes the proof.
\end{proof}

\subsection{Proof of Theorem \ref{thm:4}}
\begin{proof}[Proof of Theorem \ref{thm:4}]
To prove the lower bound, we use the same  hard-to-learn MDPs introduced in the proof of the Theorem \ref{thm:3}. By the same analyze, for these MDPs, the minimal sub-optimality gap is $\text{gap}_{\min}=2\Delta$ and Theorem 5.6 also shows that these MDPs can be illustrated as a linear mixture MDP with $C_{\theta}=2$.
For the regret of these hard-to-learn MDPs, Theorem 5.6 \citep{zhou2020nearly} shows that for any algorithm, if $H\ge 3, 0\leq \delta\leq 1/3 $, $3(d-1)\Delta\leq \delta $ and $K\ge d^2/(2\delta)$, then there exist a parameter $\bmu^*=\{\bmu_1^*,..,\bmu_H^*\}$ and the regret of the corresponding hard-to-learn MDP is bounded by
\begin{align}
    \text{Regret(K)}\geq \frac{H^2}{20}\Delta d\big(K-\sqrt{2}K^{3/2}\Delta/\sqrt{\delta}\big).\notag
\end{align}
Furthermore, we can choose the number of episodes as $K=\delta/(32\Delta^2)$ and under this case, the regret is bounded by
\begin{align}
    \text{Regret(K)}\ge \frac{H^2}{20}\frac{(d-1)\delta}{64\Delta}=\Omega\Big(\frac{Hd}{\text{gap}_{\min}}\Big).\notag
\end{align}
Therefore, we finish the proof of the Theorem \ref{thm:4}.
\end{proof}

\section{Proof of Lemmas in Section \ref{section: proof-of-main}}

\subsection{Proof of Lemma \ref{lemma: concentration}}

\begin{proof}[Proof of Lemma \ref{lemma: concentration}]
For a given policy $\pi$ and any state $s_h\in \cS$, we have
\begin{align}
    \vvalue_h^*(s_h)-\vvalue_h^{\pi}(s_h) &=\big(\vvalue_h^*(s_h)-\qvalue_h^*(s_h,a_h)\big)+\big(\qvalue_h^*(s_h,a_h)-\vvalue_h^{\pi_k}(s_h)\big)\notag\\
    & =\text{gap}_h(s_h,a_h)+\EE_{s'\sim \PP_h(\cdot| s_h,a_h)}\big[\vvalue_{h+1}^*(s')-\vvalue_{h+1}^{\pi_k}(s')\big],\label{eq:seperate1}
\end{align}
where $a_h=\pi(s_h,h)$ and  $\text{gap}_h(s,a)=\vvalue_h^*(s)-\qvalue_h^*(s,a)$. Taking expectation on both sides of \eqref{eq:seperate1} with respect to the randomness of state-transition and taking summation over all $h\in[H]$,  for any policy $\pi$ and initial state $s_1^k$, we have
\begin{align}
    \vvalue_1^*(s_1^k)-\vvalue_1^{\pi}(s_1^k)=\EE \bigg[\sum_{h=1}^H \text{gap}_h(s_h,a_h)\bigg],\label{eq:61}
\end{align}
where $s_1=s_1^k$ and $a_h=\pi(s_h,h)$, $s_{h+1} \sim \PP_h(\cdot|s_h,a_h)$. Now, We denote the filtration $\mathcal{F}_k$ contain all randomness before the episode $k$, then policy $\pi_k$ is deterministic with respect to the filtration $\mathcal{F}_k$. Thus, we have
\begin{align}
    \EE \bigg[\sum_{h=1}^H \text{gap}_h(s_h,a_h)|\mathcal{F}_k\bigg]=\vvalue_1^*(s_1^k)-\vvalue_1^{\pi_k}(s_1^k).\label{eq:62}
\end{align}
For simplicity, we denote the random variable $X_k=\sum_{h=1}^H \text{gap}_h(s_h^k,a_h^k)-\big(\vvalue_1^*(s_1^k)-\vvalue_1^{\pi_k}(s_1^k)\big)$, then $\{X_k\}_{k=1}^{K}$ is a martingale difference sequence with respect to the filtration $\mathcal{F}_k$ and for the random variable $X_k$, we have $|X_k|\leq H^2$. Furthermore, for the variance of the random variable $X_k$, we have
\begin{align}
\EE[X_k^2|\mathcal{F}_k]
&\leq\EE\Big[\big(X_k+\vvalue_1^*(s_1^k)-\vvalue_1^{\pi_k}(s_1^k)\big)^2|\mathcal{F}_k\Big]\notag\\
&=\EE\Big[\big(\sum_{h=1}^H \text{gap}_h(s_h^k,a_h^k)        \big)^2|\mathcal{F}_k\Big]\notag\\
&  \leq H^2\EE\Big[\sum_{h=1}^H \text{gap}_h(s_h^k,a_h^k)|\mathcal{F}_k\Big]\notag\\
&=H^2\big(\vvalue_1^*(s_1^k)-\vvalue_1^{\pi_k}(s_1^k)\big),
\end{align}
where the first inequality holds due to the fact that $\EE\Big[\big(X-\EE[X]\big)^2\Big]\leq \EE[X^2]$ and the second inequality holds due to $0\leq\text{gap}_h(s_h^k,a_h^k)\leq H$. Therefore, the total variance of the random variables $\{X_k\}_{k=1}^{K}$ is bounded by
\begin{align}
    V=\sum_{k=1}^K\EE[X_k^2|\mathcal{F}_k]\leq \sum_{k=1}^K H^2\big(\vvalue_1^*(s_1^k)-\vvalue_1^{\pi_k}(s_1^k)\big)=H^2\text{Regret}(K).\notag
\end{align}
However, the upper bound of total variance is a random variable and we cannot use Lemma \ref{lemma:freedman} directly. Therefore, we consider for two different cases and use peeling technique to deal with this problem.

\noindent\textbf{Case 1:} $\text{Regret}(K)\leq 1$: Since $\text{Regret}(K)\leq 1$, we have
\begin{align}
    \sum_{k=1}^K X_k\ge -\sum_{k=1}^K\big(\vvalue_1^*(s_1^k)-\vvalue_1^{\pi_k}(s_1^k)\big) =- \text{Regret}(K) \ge -1,
\end{align}
where the first inequality holds due to $\text{gap}_h(s_h^k,a_h^k)\ge 0$ and the second inequality holds due to the assumption of Case 1.

\noindent\textbf{Case 2:} $\text{Regret}(K)> 1$: For any $\tau>0$ and $m=\lceil \log T \rceil$, by peeling technique, we have
\begin{align}
    &\Pr\bigg[\sum_{k=1}^K X_k\leq -2\sqrt{H^2\tau\text{Regret}(K)}-\frac{2H^2\tau}{3},1<\text{Regret}(K)\bigg]\notag\\
    &=\Pr\bigg[\sum_{k=1}^K X_k\leq -2\sqrt{H^2\tau\text{Regret}(K)}-\frac{2H^2\tau}{3},1< \text{Regret}(K)\leq T,V\leq H^2 \text{Regret}(K)\bigg]\notag\\
    &\leq \sum_{i=1}^m
    \Pr\bigg[\sum_{k=1}^K X_k\leq -2\sqrt{H^2\tau\text{Regret}(K)}-\frac{2H^2\tau}{3},2^{i-1}<\text{Regret}(K)\leq 2^i,V\leq H^2 \text{Regret}(K)\bigg]\notag\\
    &\leq \sum_{i=1}^m
    \Pr\bigg[\sum_{k=1}^K X_k\leq -\sqrt{2^{i+1}H^2\tau}-\frac{2H^2\tau}{3},V\leq 2^iH^2\bigg]\notag\\
    &\leq  \sum_{i=1}^m e^{-\tau}\notag\\
    &=me^{-\tau},\notag
\end{align}
where the first inequality holds due to $m=\lceil \log T \rceil$, the second inequality holds due to peeling technique and the last inequality holds due to Lemma \ref{lemma:freedman}.
Combining these two different cases, with probability at least $1-me^{-\tau},$ we have
\begin{align}
    \sum_{k=1}^K X_k\ge - 2\sqrt{H^2\tau \text{Regret}(K)}-\frac{2H^2\tau}{3}-1,\notag
\end{align}
where $X_k=\sum_{h=1}^H \text{gap}_h(s_h^k,a_h^k)-\big(\vvalue_1^*(s_1^k)-\vvalue_1^{\pi_k}(s_1^k)\big)$ and $\text{Regret}(K)=\sum_{k=1}^K \big(\vvalue_1^*(s_1^k)-\vvalue_1^{\pi_k}(s_1^k)\big)$. Therefore, by the fact that  $x\leq a\sqrt{x}+b$ implies $x\leq a^2+2b$, with probability at least $1-me^{-\tau}$, we have
\begin{align}
    \text{Regret}(K)\leq 2\sum_{k=1}^K\sum_{h=1}^H\text{gap}_h(s_h^k,a_h^k)+\frac{16H^2\tau}{3} +2.\notag
\end{align}
Thus, we finish the proof of Lemma \ref{lemma: concentration}.
\end{proof}

\subsection{Proof of Lemma \ref{lemma:gap-sum}}
\begin{proof}[Proof of Lemma \ref{lemma:gap-sum}]
By the definition of $\text{gap}_{\min}$ in \eqref{definition-gap-min}, for each $h\in[H], k\in[K]$, we have $\vvalue_h^*(s_h^k)-\qvalue_h^{*}(s_h^k,a_h^k)=0$ or  $\vvalue_h^*(s_h^k)-\qvalue_h^{*}(s_h^k,a_h^k)\ge \text{gap}_{\min}$. Thus, we divide the interval       $[\text{gap}_{\min},H]$ to $N=\big\lceil \log(H/ \text{gap}_{\min})\big\rceil$ intervals: $\big[2^{i-1}\text{gap}_{\min},2^i\text{gap}_{\min}\big) \big(i\in[N]\big)$ and with probability at least $ 1-2(K+1)\log(H/\text{gap}_{\min})\delta$, we have
\begin{align}
    \sum_{k=1}^K  \big(\vvalue_h^*(s_h^k)-\qvalue_h^{*}(s_h^k,a_h^k)\big)& \leq \sum_{i=1}^N\sum_{k=1}^K \ind\big[2^{i}\text{gap}_{\min}\ge \vvalue_h^*(s_h^k)-\qvalue_h^*(s_h^k,a_h^k)\ge 2^{i-1}\text{gap}_{\min}\big] \times 2^i\text{gap}_{\min}\notag\\
    & \leq \sum_{i=1}^N\sum_{k=1}^K \ind\big[ \vvalue_h^*(s_h^k)-\qvalue_h^{\pi_k}(s_h^k,a_h^k)\ge 2^{i-1}\text{gap}_{\min}\big]\times 2^i\text{gap}_{\min}\notag\\
    & \leq \sum_{i=1}^N   \frac{4Cd^3H^4 \log(2dHK/\delta)}{2^{i}\text{gap}_{\min}}\times \log \bigg(\frac{Cd^3H^4 \log(2dHK/\delta)}{4^{i-1}\text{gap}^2_{\min}}\bigg)\notag\\
    & \leq \frac{4Cd^3H^4 \log(2dHK/\delta)}{\text{gap}_{\min}}\log \bigg(\frac{Cd^3H^4 \log(2dHK/\delta)}{\text{gap}^2_{\min}}\bigg)\notag,
\end{align}
where the first inequality holds due to $\vvalue_h^*(s_h^k)-\qvalue_h^{*}(s_h^k,a_h^k)\ge \text{gap}_{\min}$ or $\vvalue_h^*(s_h^k)-\qvalue_h^{*}(s_h^k,a_h^k)=0$, the second inequality holds on due to $\vvalue_h^*(s_h^k)-\qvalue_h^{\pi_k}(s_h^k,a_h^k)\ge \vvalue_h^*(s_h^k)-\qvalue_h^*(s_h^k,a_h^k)$, the third inequality holds due to due to Lemma \ref{lemma:gap-number}. Thus, we finish the proof of Lemma \ref{lemma:gap-sum}.
\end{proof}

\subsection{Proof of Lemma \ref{lemma:sum}}
\begin{proof}[Proof of Lemma \ref{lemma:sum}]
For simplicity, we denote 
\begin{align}
\Lambda'_i=\lambda \Ib+\sum_{j=1}^i(\bphi'_i)^{\top}\bphi'_i\notag,
\end{align}
where $\bphi'_i$ is the abbreviation of $\bphi_h^{c_i}(s_h^{c_i},a_h^{c_i})$. Thus, we have
\begin{align}
    \sum_{i=1}^{k} \big(\bphi_h^{c_i}(s_h^{c_i},a_h^{c_i})\big)^{\top}(\Lambda_h^{c_i})^{-1}\big(\bphi_h^{c_i}(s_h^{c_i},a_h^{c_i})\big)
    & \leq \sum_{i=1}^{k} (\bphi'_i)^{\top}(\Lambda'_{i-1})^{-1}\bphi'_i \leq 2\log \bigg[\frac{\det(\Lambda'_k)}{\det(\Lambda'_0)}\bigg] \leq 2d\log\bigg(\frac{\lambda+k}{\lambda}\bigg),\notag
\end{align}
where the first inequality holds due to $\Lambda'_{i-1}\preceq \Lambda_h^{c_i}$, the second inequality holds due to Lemma \ref{Lemma:abba} and the last inequality holds due to $\|\Lambda'_k\|=\|\lambda \Ib+\sum_{i=1}^k(\bphi'_k)^{\top}\bphi'_k\|\leq \lambda+k.$ Thus, we finish the proof of Lemma \ref{lemma:sum}.
\end{proof}

\section{Proof of Lemmas in Appendix \ref{section:jia}}
\subsection{Proof of Lemma \ref{lemma: confidence-set}}
\begin{proof}[Proof of Lemma \ref{lemma: confidence-set}]
For each step $h\in[H]$, by the definition of $\btheta_{k,h}$ in Algorithm \ref{algorithm2}, we have
\begin{align}
    \btheta_{k,h}=\bigg(\lambda \Ib+\sum_{i=1}^{k-1}\bphi_{{\vvalue}_{h+1}^i}(s_h^i,a_h^i)\bphi_{{\vvalue}_{h+1}^i}(s_h^i,a_h^i)^\top\bigg)^{-1}\bigg(\sum_{i=1}^{k-1} \bphi_{{\vvalue}_{h+1}^i}(s_h^i,a_h^i) \vvalue_{h+1}^i(s_{h+1}^{i})\bigg).
\end{align}
Furthermore, for the expectation of the value function $\vvalue_{h+1}^i(s_{h+1}^{i})$, we have
\begin{align}
    [\PP_h\vvalue_{h+1}^i](s_h^i,a_h^i) &= \sum_{s\in\cS}\PP_h(s'|s_h^i,a_h^i)\vvalue_{h+1}^i(s') \notag \\
    &= \sum_{s\in\cS}\la \bphi(s'|s_h^i,a_h^i), \btheta_h^*\ra\vvalue_{h+1}^i(s')\notag \\
    & = \Big\la\sum_{s\in\cS} \bphi(s'|s_h^i,a_h^i)\vvalue_{h+1}^i(s'), \btheta_h^*\Big\ra\notag \\
    & = \la \bphi_{{\vvalue}_{h+1}^i}(s_h^i,a_h^i), \btheta_h^*\ra,
\end{align}
which means the sequence $\big\{\vvalue_{h+1}^i(s_{h+1}^{i})-[\PP_h\vvalue_{h+1}^i](s_h^i,a_h^i)\big\}$ is a martingale sequence. Furthermore, for each random variable $\vvalue_{h+1}^i(s_{h+1}^{i})-[\PP_h\vvalue_{h+1}^i](s_h^i,a_h^i)$ in the martingale sequence, we have $0\leq \vvalue_{h+1}^i(s_{i+1})\leq H$. Thus,  $\vvalue_{h+1}^i(s_{h+1}^{i})-[\PP_h\vvalue_{h+1}^i](s_h^i,a_h^i)$  is a $H$-subgaussian random variable. By the Assumption \ref{assumption-linear-mixture}, for the vector $\bphi_{{\vvalue}_{h+1}^i}(s_h^i,a_h^i)$, we have $\|\bphi_{{\vvalue}_{h+1}^i}(s_h^i,a_h^i)\|_2\leq H$ and for the parameter $\btheta_h^*$, we have $\|\btheta_h^*\|_2\leq C_{\theta}$. Therefore, by Theorem 2 in \citet{abbasi2011improved}, with probability at least $1-\delta/H$, for all episode $k\in[K]$, we have:
\begin{align}
    \btheta_h^*\in \bigg\{\btheta\in \RR^d:\|\btheta_h^* - \btheta_{k,h}\|_{\bSigma_h^k} \leq H\sqrt{d\log\frac{H+kH^3/\lambda}{\delta}} + \sqrt{\lambda}C_{\theta}\bigg\}.\notag
\end{align}
Finally, by the definition of $\beta_k$ and taking an union bound for all step $h\in[H]$, we finish the proof of Lemma \ref{lemma: confidence-set}. 
\end{proof}

\subsection{Proof of Lemma \ref{lemma:jia-transition}}
\begin{proof}[Proof of Lemma \ref{lemma:jia-transition}]
For each step $h\in[H]$ and episode $k\in[K]$, by the definition of the value function $\vvalue_h^k(s_h^k)$, we have
\begin{align}
    \qvalue_h^k(s,a)&=\reward(s,a)+\bphi_{\vvalue_{h+1}^{k}}(s,a)^{\top}\btheta_{k,h}+\beta_{k}\sqrt{\big(\bphi_{\vvalue_{h+1}^{k}}(s,a)\big)^{\top}(\bSigma_{h}^{k})^{-1}\bphi_{\vvalue_{h+1}^k}(s,a)}\notag\\
    &=\reward(s,a)+\bphi_{\vvalue_{h+1}^{k}}(s,a)^{\top}\btheta^*_{h}+\beta_{k}\sqrt{\big(\bphi_{\vvalue_{h+1}^{k}}(s,a)\big)^{\top}(\bSigma_{h}^{k})^{-1}\bphi_{\vvalue_{h+1}^k}(s,a)}+\bphi_{\vvalue_{h+1}^{k}}(s,a)^{\top}(\btheta_{k,h}-\btheta^*_{h})\notag\\
    &=\reward(s,a)+[\PP_h\vvalue_{h+1}^{k}](s,a)+\beta_k \|\bphi_{\vvalue_{h+1}^{k}}\|_{(\bSigma_{h}^{k})^{-1}}+\bphi_{\vvalue_{h+1}^{k}}(s,a)^{\top}(\btheta_{k,h}-\btheta^*_{h}).\label{eq:71}
\end{align}
When the result of Lemma \ref{lemma: confidence-set} holds, we have
\begin{align}
    \big|\bphi_{\vvalue_{h+1}^{k}}(s,a)^{\top}(\btheta_{k,h}-\btheta^*_{h})\big|\leq \|\bphi_{\vvalue_{h+1}^{k}}\|_{(\bSigma_{h}^{k})^{-1}}+\|\btheta_{k,h}-\btheta^*_{h}\|_{(\bSigma_{h}^{k})}\leq \beta_k \|\bphi_{\vvalue_{h+1}^{k}}\|_{(\bSigma_{h}^{k})^{-1}}\label{eq:72},
\end{align}
where the first inequality holds due to Cauchy-Schwarz inequality and the second inequality holds due to Lemma \ref{lemma: confidence-set}. Substituting \eqref{eq:72} into \eqref{eq:71}, we have
\begin{align}
    \reward(s,a)+[\PP_h\vvalue_{h+1}^{k}](s,a)\leq\qvalue_h^k(s,a)\leq \reward(s,a)+[\PP_h\vvalue_{h+1}^{k}](s,a)+2\beta_k \|\bphi_{\vvalue_{h+1}^{k}}\|_{(\bSigma_{h}^{k})^{-1}}.\label{eq:73}
\end{align}
Furthermore, by the Bellman equation, we have $\qvalue_h^{\pi_k}(s,a) = \reward_h(s,a) + [\PP_h\vvalue_{h+1}^{\pi_k}](s,a)$ and \eqref{eq:73} implies
\begin{align}
    \qvalue_h^k(s,a)-\qvalue_h^{\pi_k}(s,a)\leq\big[\PP(\vvalue_{h+1}^{k} -\vvalue_{h+1}^{\pi_k})\big](s,a)+2\beta_k \sqrt{\bphi(s,a)^{\top}(\Lambda_1^k)^{-1}\bphi(s,a)}.\notag
\end{align}
By the Bellman equation for the optimal policy, we have $\qvalue_h^{*}(s,a) = \reward_h(s,a) + [\PP_h\vvalue_{h+1}^{*}](s,a)$ and \eqref{eq:73} also we implies that
\begin{align}
    \qvalue_h^k(s,a)-\qvalue_h^{*}(s,a)\ge  \big[\PP_h(\vvalue_{h+1}^{k} -\vvalue_{h+1}^{*})\big](s,a).\label{eq:74}
\end{align}
\eqref{eq:74} shows that when $\vvalue_{h+1}^{k}(s) -\vvalue_{h+1}^{*}(s)\ge 0$ holds for all state $s\in\cS$ at step $h+1$, then for all $s\in \cS,a\in \cA$, we also have $\qvalue_h^k(s,a)\ge \qvalue_h^*(s,a)$ and $\vvalue_h^k(s)\ge \vvalue_h^*(s)$ at step $h$. 
Since $\vvalue_{H+1}^{*}(s,a)=\vvalue_{H+1}^{k}(s,a)=0$, it is easily to show that for each step $h\in[H]$ and all $s\in \cS,a\in \cA$, we have $\qvalue_h^k(s,a)\ge \qvalue_h^*(s,a)$ and $\vvalue_h^k(s)\ge \vvalue_h^*(s)$. Thus, we finish the proof of Lemma \ref{lemma:jia-transition}.
\end{proof}

\subsection{Proof of Lemma \ref{lemma:jia-sum}}
\begin{proof}[Proof of Lemma \ref{lemma:jia-sum}]
For simplicity, we denote 
\begin{align}
\bSigma'_i=\lambda \Ib+\sum_{j=1}^i(\bphi'_i)^{\top}\bphi'_i,\notag
\end{align}
where $\bphi'_i$ is the abbreviation of $\bphi_{\vvalue_{h+1}^{c_i}}(s_h^{c_i},a_h^{c_i})$. Thus, we have
\begin{align}
    \sum_{i=1}^{k} (\bphi'_i)^{\top}(\bSigma_h^{c_i})^{-1}\bphi'_i&\leq \sum_{i=1}^{k} (\bphi'_i)^{\top}(\bSigma'_{i-1})^{-1}\bphi'_i \leq 2\log \bigg[\frac{\det(\bSigma'_k)}{\det(\bSigma'_0)}\bigg] \leq 2d\log(1+k),\notag
\end{align}
where the first inequality holds due to $\bSigma'_{i-1}\preceq \Lambda_1^{c_i}$, the second inequality holds due to Lemma \ref{Lemma:abba} and the last inequality holds due to $\|\bSigma'_k\|=\|H^2d \Ib+\sum_{i=1}^k(\bphi'_k)^{\top}\bphi'_k\|\leq H^2d+H^2k.$
\end{proof} 

\subsection{Proof of Lemma \ref{lemma:jia-gap-number}}
\begin{proof}[Proof of Lemma \ref{lemma:jia-gap-number}]

\noindent We fix $h$ in this proof. Let $k_0=0$, and for $i\in[N]$, we denote $k_i$ as the minimum index of the episode where the sub-optimality at step $h$ is no less than $2^n\text{gap}_{\text{min}}$:
\begin{align}
    k_i&=\min \big\{k: k>k_{i-1}, \vvalue_h^*(s_h^k)-\qvalue_h^{\pi_k}(s_h^k,a_h^k)\ge 2^n\text{gap}_{\min}\big\}.\label{eq:jia-ti}
\end{align}
For simplicity, we denote by $K'$ the number of episodes such that the sub-optimality of this episode at step $h$ is no less than $2^n\text{gap}_{\text{min}}$. Formally speaking, we have
\begin{align}
    K'=\sum_{k=1}^K  \ind\big[\vvalue_h^*(s_h^k)-\qvalue_h^{\pi_k}(s_h^k,a_h^k)\ge 2^n\text{gap}_{\min}\big].\notag
\end{align}
By the definition of $K'$, we have
\begin{align}
    \sum_{i=1}^{K'}\big(\qvalue_h^{k_i}(s_h^{k_i},a_h^{k_i})-\qvalue_h^{\pi_{k_i}}(s_h^{k_i},a_h^{k_i})\big)& \ge \sum_{i=1}^{K'}\Big(\qvalue_h^{k_i}\big(s_h^{k_i},\pi_h^*(s_h^{k_i},h)\big)-\qvalue_h^{\pi_{k_i}}(s_h^{k_i},a_h^{k_i})\Big)\notag\\
    & \ge \sum_{i=1}^{K'}\Big(\qvalue_h^*\big(s_h^{k_i},\pi_h^*(s_h^{k_i},h)\big)-\qvalue_h^{\pi_{k_i}}(s_h^{k_i},a_h^{k_i})\Big)\notag\\
    & = \sum_{i=1}^{K'}\big(\vvalue_h^*(s_h^{k_i})-\qvalue_h^{\pi_{k_i}}(s_h^{k_i},a_h^{k_i})\big)\notag\\
    & \ge 2^n\text{gap}_{\min} K',\label{eq:jia-lower-Regret}
\end{align}
where the first inequality holds due to the definition of policy $\pi^{k_i}$, the second inequality holds due to Lemma \ref{lemma:jia-transition} and the last inequality hold due to the definition of $k_i$ in \eqref{eq:jia-ti}.

In other hand, for $h'\in [H], k\in [K]$, we have
\begin{align}
    \qvalue_{h'}^k(s_{h'}^k,a_{h'}^k)-\qvalue_{h'}^{\pi_k}(s_{h'}^k,a_{h'}^k)
    & \leq \big[\PP_h(\vvalue_{h'+1}^{k} -\vvalue_{h'+1}^{\pi_{k}})\big](s_{h'}^k,a_{h'}^k) +2\beta_k \sqrt{\bphi(s_{h'}^k,a_{h'}^k)^{\top}(\bSigma_{h'}^k)^{-1}\bphi(s_{h'}^k,a_{h'}^k)}\notag\\
    & = \vvalue_{h'+1}^{k}(s_{h'+1}^k) -\vvalue_{h'+1}^{\pi_{k}}(s_{h'+1}^k)+\epsilon_{h'}^k+2\beta_k \sqrt{\bphi(s_{h'}^k,a_{h'}^k)^{\top}(\bSigma_{h'}^k)^{-1}\bphi(s_{h'}^k,a_{h'}^k)}\notag\\
    &= \qvalue_{h'+1}^k(s_{h'+1}^k,a_{h'+1}^k)-\qvalue_{h'+1}^{\pi_k}(s_{h'+1}^k,a_{h'+1}^k)+\epsilon_{h'}^k\notag\\
    & \quad+2\beta_k \sqrt{\bphi(s_{h'}^k,a_{h'}^k)^{\top}(\bSigma_{h'}^k)^{-1}\bphi(s_{h'}^k,a_{h'}^k)}\label{eq:jia-tele},
\end{align}
where 
\begin{align}
    \epsilon_{h'}^{k}&= \big[\PP_h(\vvalue_{h'+1}^{k} -\vvalue_{h'+1}^{\pi_{k}})\big](s_{h'}^k,a_{h'}^k)-\big(\vvalue_{h'+1}^{k}(s_{h'+1}^k) -\vvalue_{h'+1}^{\pi_{k}}(s_{h'+1}^k)\big),\notag
\end{align} and the inequality holds due to Lemma \ref{lemma:jia-transition}.
Take summation for \eqref{eq:jia-tele} over all $k_i$ and $h\leq h'\leq H$, we have
\begin{align}
    &\sum_{i=1}^{K'}\big(\qvalue_h^{k_i}(s_h^{k_i},a_h^{k_i})-\qvalue_h^{\pi_{k_i}}(s_h^{k_i},a_h^{k_i})\big) \leq \underbrace{\sum_{i=1}^{K'}\sum_{h'=h}^{H}2\beta_k \sqrt{\bphi(s_{h'}^{k_i},a_{h'}^{k_i})^{\top}(\bSigma_{h'}^{k_i})^{-1}\bphi(s_{h'}^{k_i},a_{h'}^{k_i})}}_{I_1} + \underbrace{\sum_{i=1}^{K'}\sum_{h'=h}^{H}\epsilon_{h'}^{k_i}}_{I_2}.\label{eq:jia-upper-Regret}
\end{align}
For term $I_1$, we have
\begin{align}
    I_1&=\sum_{i=1}^{K'}\sum_{h'=h}^{H}2\beta_k \sqrt{\bphi(s_{h'}^{k_i},a_{h'}^{k_i})^{\top}(\bSigma_{h'}^{k_i})^{-1}\bphi(s_{h'}^{k_i},a_{h'}^{k_i})}\notag\\
    &\leq 2\beta_K\sqrt{K'}\sum_{h'=h}^{H} \sqrt{\sum_{i=1}^{K'}\bphi(s_{h'}^{k_i},a_{h'}^{k_i})^{\top}(\bSigma_{h'}^{k_i})^{-1}\bphi(s_{h'}^{k_i},a_{h'}^{k_i}) }\notag\\
    &\leq 2H\beta_K \sqrt{K'}\sqrt{2d\log(K'+1)},\label{eq:jia-I_1}
\end{align}
where the first inequality holds due to Cauchy-Schwarz inequality and the  second inequality holds due to Lemma \ref{lemma:jia-sum}.

For each $k\in [K]$, by Lemma \ref{lemma:azuma},  with probability at least $1-\delta$, we have
\begin{align}
    &\sum_{i=1}^k \sum_{j=h}^{H}\Big(\big[\PP_j(\vvalue^{k_i}_{j+1}-\vvalue^{\pi_{k_i}}_{j+1})\big](s^{k_i}_{j},a^{k_i}_{j})-\big(\vvalue^{k_i}_{j+1}(s^{k_i}_{j+1})-\vvalue^{\pi_{k_i}}_{j+1}(s^{k_i}_{j+1})\big)\Big)\notag \leq \sqrt{2kH^2\log(1/\delta)}.\notag
\end{align}
Thus, taking a union bound for all $k\in[K]$, with probability at least $1-K\delta$, we can bound term $I_2$ as follows:
\begin{align}
    &\sum_{i=1}^{K'} \sum_{j=h}^{H}\Big(\big[\PP_j(\vvalue^{k_i}_{j+1}-\vvalue^{\pi_{k_i}}_{j+1})\big](s^{k_i}_{j},a^{k_i}_{j})-\big(\vvalue^{k_i}_{j+1}(s^{k_i}_{j+1})-\vvalue^{\pi_{k_i}}_{j+1}(s^{k_i}_{j+1})\big)\Big) \leq \sqrt{2K'H^2\log(1/\delta)}.\label{eq:jia-I_3}
\end{align}
Substituting \eqref{eq:jia-I_1} and \eqref{eq:jia-I_3} into \eqref{eq:jia-upper-Regret}, with probability at least $1-(K+1)\delta,$ we have
\begin{align}
     &\sum_{i=1}^{K'}\big(\qvalue_h^{k_i}(s_h^{k_i},a_h^{k_i})-\qvalue_h^{\pi_{k_i}}(s_h^{k_i},a_h^{k_i})\big)\leq  \sqrt{2K'H^2\log(1/\delta)}+ 2H\beta_K \sqrt{K'}\sqrt{2d\log(K'+1)}.\label{eq:jia-Upper}
\end{align}
Combining \eqref{eq:jia-Upper} and \eqref{eq:jia-lower-Regret}, we have
\begin{align}
    2^n\text{gap}_{\min}K'\leq \sqrt{2K'H^2\log(1/\delta)}+ 2H\beta_K \sqrt{2K'd\log(K'+1)},\notag
\end{align}
which implies
\begin{align}
  K'\leq \frac{512C^2_{\btheta}d^2H^4 \log^3(2dHK/\delta)}{4^n\text{gap}^2_{\min}}\log \bigg(\frac{512C^2_{\btheta}d^2H^4 \log^3(2dHK/\delta)}{4^n\text{gap}^2_{\min}}\bigg). \notag
\end{align}
\end{proof}

\subsection{Proof of Lemma \ref{lemma:jia-gap-sum}}
\begin{proof}[Proof of Lemma \ref{lemma:jia-gap-sum}]
By the definition of $\text{gap}_{\min}$ in \eqref{definition-gap-min}, for each $h\in[H], k\in[K]$, we have $\vvalue_h^*(s_h^k)-\qvalue_h^{*}(s_h^k,a_h^k)=0$ or  $\vvalue_h^*(s_h^k)-\qvalue_h^{*}(s_h^k,a_h^k)\ge \text{gap}_{\min}$. Thus, we divide the interval       $[\text{gap}_{\min},H]$ to $N=\big\lceil \log(H/ \text{gap}_{\min})\big\rceil$ intervals: $\big[2^{i-1}\text{gap}_{\min},2^i\text{gap}_{\min}\big) \big(i\in[N]\big)$ and with probability at least $ 1-2(K+1)\log(H/\text{gap}_{\min})\delta$, we have
\begin{align}
    \sum_{k=1}^K  \big(\vvalue_h^*(s_h^k)-\qvalue_h^{*}(s_h^k,a_h^k)\big)\notag &\leq \sum_{i=1}^N\sum_{k=1}^K \ind\big[2^{i}\text{gap}_{\min}\ge \vvalue_h^*(s_h^k)-\qvalue_h^*(s_h^k,a_h^k)\ge 2^{i-1}\text{gap}_{\min}\big]\times 2^i\text{gap}_{\min}\notag\\
    & \leq \sum_{i=1}^N\sum_{k=1}^K \ind\big[ \vvalue_h^*(s_h^k)-\qvalue_h^{\pi_k}(s_h^k,a_h^k)\ge 2^{i-1}\text{gap}_{\min}\big]\times 2^i\text{gap}_{\min}\notag\\
    & \leq \sum_{i=1}^N   \frac{2048C^2_{\btheta}d^2H^4 \log^3(2dHK/\delta)}{2^{i}\text{gap}_{\min}}\log \bigg(\frac{512C^2_{\btheta}d^2H^4 \log^3(2dHK/\delta)}{4^{i-1}\text{gap}^2_{\min}}\bigg)\notag\\
    & \leq \frac{2048C^2_{\btheta}d^2H^4 \log^3(2dHK/\delta)}{\text{gap}_{\min}}\log \bigg(\frac{512C^2_{\btheta}d^2H^4 \log^3(2dHK/\delta)}{\text{gap}^2_{\min}}\bigg)\notag,
\end{align}
where the first inequality holds due to $\vvalue_h^*(s_h^k)-\qvalue_h^{*}(s_h^k,a_h^k)\ge \text{gap}_{\min}$ or $\vvalue_h^*(s_h^k)-\qvalue_h^{*}(s_h^k,a_h^k)=0$, the second inequality holds on due to $\vvalue_h^*(s_h^k)-\qvalue_h^{\pi_k}(s_h^k,a_h^k)\ge \vvalue_h^*(s_h^k)-\qvalue_h^*(s_h^k,a_h^k)$, the third inequality holds due to due to Lemma \ref{lemma:jia-gap-number}. Thus, we finish the proof of Lemma \ref{lemma:jia-gap-sum}.
\end{proof}

\section{Auxiliary Lemmas}

\begin{lemma}[Azuma–Hoeffding inequality \citealt{cesa2006prediction}]\label{lemma:azuma}
Let $\{x_i\}_{i=1}^n$ be a martingale difference sequence with respect to a filtration $\{\cG_{i}\}$ satisfying $|x_i| \leq M$ for some constant $M$, $x_i$ is $\cG_{i+1}$-measurable, $\EE[x_i|\cG_i] = 0$. Then for any $0<\delta<1$, with probability at least $1-\delta$, we have 
\begin{align}
    \sum_{i=1}^n x_i\leq M\sqrt{2n \log (1/\delta)}.\notag
\end{align} 
\end{lemma}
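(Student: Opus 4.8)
The plan is to use the exponential-moment (Chernoff) method together with an iterated-conditioning argument that exploits the martingale-difference structure; this is the textbook route and I do not expect the overall architecture to cause trouble.

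\textbf{Step 1 (Chernoff bound).} Fix $\lambda>0$. Since $\exp(\lambda\sum_{i=1}^n x_i)$ is nonnegative, Markov's inequality gives, for any $t>0$,
\[
    \Pr\Big[\textstyle\sum_{i=1}^n x_i\ge t\Big]\le e^{-\lambda t}\,\EE\Big[\exp\big(\lambda\textstyle\sum_{i=1}^n x_i\big)\Big].
\]
So it suffices to control the moment generating function of $\sum_i x_i$ and then optimize over $\lambda$.

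\textbf{Step 2 (peeling the MGF).} I would bound the MGF by removing one term at a time. Because each $x_i$ is $\cG_{i+1}$-measurable and the filtration is increasing, $\sum_{i=1}^{n-1}x_i$ is $\cG_n$-measurable, so the tower property yields
\[
    \EE\Big[\exp\big(\lambda\textstyle\sum_{i=1}^n x_i\big)\Big]
    =\EE\Big[\exp\big(\lambda\textstyle\sum_{i=1}^{n-1} x_i\big)\,\EE\big[\exp(\lambda x_n)\mid\cG_n\big]\Big].
\]
Hoeffding's lemma, applied to the bounded centered random variable $x_n\in[-M,M]$ with $\EE[x_n\mid\cG_n]=0$, gives $\EE[\exp(\lambda x_n)\mid\cG_n]\le\exp(\lambda^2M^2/2)$. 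Substituting and iterating this bound from $i=n$ down to $i=1$ produces
\[
    \EE\Big[\exp\big(\lambda\textstyle\sum_{i=1}^n x_i\big)\Big]\le\exp\!\big(n\lambda^2M^2/2\big).
\]

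\textbf{Step 3 (optimize and read off the tail).} Combining Steps 1 and 2, $\Pr[\sum_i x_i\ge t]\le\exp(-\lambda t+n\lambda^2M^2/2)$ for every $\lambda>0$. Choosing $\lambda=t/(nM^2)$ to minimize the exponent gives $\Pr[\sum_i x_i\ge t]\le\exp(-t^2/(2nM^2))$. Setting the right-hand side equal to $\delta$ and solving for $t$ yields $t=M\sqrt{2n\log(1/\delta)}$, which is exactly the claimed bound (with probability $1-\delta$ on the complement event).

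The only nontrivial ingredient, and hence the main obstacle, is Hoeffding's lemma for the conditional MGF of a bounded centered random variable. I would establish it by convexity: on $[-M,M]$ the function $y\mapsto e^{\lambda y}$ lies below the chord through its endpoints, so taking conditional expectation and using $\EE[x_n\mid\cG_n]=0$ reduces the task to showing $\log\!\big(\tfrac12 e^{-\lambda M}+\tfrac12 e^{\lambda M}\big)\le\lambda^2M^2/2$; a second-order Taylor expansion in $\lambda M$, whose remainder's second derivative is bounded by $1$, closes this. Everything else is routine bookkeeping with the Chernoff method and the tower property, the one subtlety being to keep the filtration indices straight so that each partial sum $\sum_{i<j}x_i$ is genuinely $\cG_j$-measurable.
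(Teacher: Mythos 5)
Your argument is correct: the Chernoff bound, the tower-property peeling using that $\sum_{i<j}x_i$ is $\cG_j$-measurable, the conditional Hoeffding lemma giving $\EE[e^{\lambda x_i}\mid\cG_i]\le e^{\lambda^2M^2/2}$ for $|x_i|\le M$ and $\EE[x_i\mid\cG_i]=0$, and the choice $\lambda=t/(nM^2)$ all fit together to give exactly the stated tail. The paper does not prove this lemma (it is cited from Cesa-Bianchi and Lugosi as an auxiliary result), and what you have written is the standard textbook proof of it.
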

\begin{lemma}[Lemma 11 in \citealt{abbasi2011improved}]\label{Lemma:abba}
Let $\{X_t\}_{t=1}^{+\infty}$ be a sequence in $\RR^d$, $V$ a $d \times d$ positive definite matrix and define $V_t=V+\sum_{i=1}^{t} X_t^{\top}X_t$. If $\|X_t\|_2\leq L$ and $\lambda_{\min}(V)\ge \max(1,L^2),$ then we  have
\begin{align}
    \sum_{i=1}^{t} X_i^{\top} (V_{i-1})^{-1} X_i\leq 2 \log \bigg(\frac{\det{V_t}}{\det{V}}\bigg).\notag
\end{align}
\end{lemma}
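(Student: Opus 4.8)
The plan is to bound each summand $X_i^\top V_{i-1}^{-1}X_i$ individually by a logarithmic increment of determinants and then telescope. The first step I would carry out is to show that every summand lies in $[0,1]$. Here I treat each $X_i$ as a column vector in $\RR^d$ and read the defining recursion as the rank-one update $V_i = V_{i-1} + X_iX_i^\top$ with $V_0 = V$, so that $V_{i-1} \succeq V$ and hence $\lambda_{\min}(V_{i-1}) \ge \lambda_{\min}(V) \ge \max(1,L^2) \ge L^2$. Consequently,
\begin{align}
  0 \le X_i^\top V_{i-1}^{-1}X_i \le \frac{\|X_i\|_2^2}{\lambda_{\min}(V_{i-1})} \le \frac{L^2}{L^2} = 1. \notag
\end{align}
This is the only place where the hypothesis $\lambda_{\min}(V) \ge \max(1,L^2)$ is used, and it is precisely what licenses the elementary inequality invoked next.

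Next I would use the scalar inequality $z \le 2\log(1+z)$, valid for all $z \in [0,1]$ — one checks that $g(z) := 2\log(1+z) - z$ satisfies $g(0)=0$ and $g'(z) = (1-z)/(1+z) \ge 0$ on $[0,1]$ — applied with $z = X_i^\top V_{i-1}^{-1}X_i$. Then I would identify the argument $1 + X_i^\top V_{i-1}^{-1}X_i$ with a determinant ratio: applying the matrix determinant lemma $\det(A + uu^\top) = \det(A)\,(1 + u^\top A^{-1} u)$ to the rank-one update $V_i = V_{i-1} + X_iX_i^\top$ gives $\det(V_i) = \det(V_{i-1})\,(1 + X_i^\top V_{i-1}^{-1}X_i)$, i.e. $1 + X_i^\top V_{i-1}^{-1}X_i = \det(V_i)/\det(V_{i-1})$. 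Combining the two observations yields, for each $i$,
\begin{align}
  X_i^\top V_{i-1}^{-1}X_i \le 2\log\!\bigg(\frac{\det(V_i)}{\det(V_{i-1})}\bigg). \notag
\end{align}

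Finally I would sum over $i = 1,\dots,t$ and telescope, using $V_0 = V$:
\begin{align}
  \sum_{i=1}^{t} X_i^\top V_{i-1}^{-1}X_i \le 2\sum_{i=1}^{t}\log\!\bigg(\frac{\det(V_i)}{\det(V_{i-1})}\bigg) = 2\log\!\bigg(\frac{\det(V_t)}{\det(V)}\bigg), \notag
\end{align}
which is the claimed bound. There is no genuine obstacle here: the argument is entirely elementary linear algebra, and the only subtlety is ensuring the per-term quantities lie in $[0,1]$ before invoking $z \le 2\log(1+z)$. If one only had $\lambda_{\min}(V) \ge 1$, the same telescoping would still give the bound up to a multiplicative constant depending on $L$, since $z/\log(1+z)$ is increasing and hence $z \le \big(L^2/\log(1+L^2)\big)\log(1+z)$ on $[0,L^2]$.
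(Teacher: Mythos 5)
Your proof is correct and is essentially the standard argument from Lemma 11 of \citet{abbasi2011improved}, which the paper cites without reproducing: bound each summand by $1$ using $\lambda_{\min}(V)\ge\max(1,L^2)$, apply $z\le 2\log(1+z)$ on $[0,1]$, convert $1+X_i^\top V_{i-1}^{-1}X_i$ into a determinant ratio via the matrix determinant lemma, and telescope. You also correctly read the statement's $X_t^\top X_t$ as the rank-one outer product $X_tX_t^\top$, which is the intended meaning.
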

\begin{lemma}[Freedman inequality, \citealt{cesa2006prediction}]\label{lemma:freedman}
Let $\{x_i\}_{i=1}^n$ be a martingale difference sequence with respect to a filtration $\{\cG_{i}\}$ satisfying $|x_i| \leq M$ for some constant $M$, $x_i$ is $\cG_{i+1}$-measurable, $\EE[x_i|\cG_i] = 0$ and define that $V=\sum_{i=1}^n \EE(x_i^2|\cG_{i}).$
 Then for any $a>0,v>0$, we have
\begin{align}
\Pr\Big(\sum_{i=1}^n x_i \ge a,V\leq v\Big)\leq  \exp\Big(\frac{-a^2}{2v+2aM/3}\Big).\notag 
\end{align}
\end{lemma}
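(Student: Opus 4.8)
The plan is to prove Freedman's inequality by the classical exponential--supermartingale (Chernoff) argument. Fix $\lambda\ge 0$ and let $\phi(u)=(e^u-1-u)/u^2$, extended continuously by $\phi(0)=1/2$; an elementary computation of $\phi'$ shows that $\phi$ is non-decreasing on $\RR$. For any real $y\le \lambda M$ we then have $e^{y}-1-y=y^2\phi(y)\le y^2\phi(\lambda M)$, so applying this with $y=\lambda x_i$ (which satisfies $\lambda x_i\le\lambda M$ since $|x_i|\le M$) gives the pointwise bound $e^{\lambda x_i}\le 1+\lambda x_i+\lambda^2\phi(\lambda M)\,x_i^2$. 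Taking conditional expectation with respect to $\cG_i$ and using $\EE[x_i\mid\cG_i]=0$ yields $\EE[e^{\lambda x_i}\mid\cG_i]\le 1+\lambda^2\phi(\lambda M)\,\EE[x_i^2\mid\cG_i]\le\exp\!\bigl(\lambda^2\phi(\lambda M)\,\EE[x_i^2\mid\cG_i]\bigr)$, where the last step uses $1+t\le e^t$.

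Next I would set $S_t=\sum_{i=1}^t x_i$, $W_t=\sum_{i=1}^t\EE[x_i^2\mid\cG_i]$, and $L_t=\exp\!\bigl(\lambda S_t-\lambda^2\phi(\lambda M)W_t\bigr)$. Since $W_t$ is $\cG_t$-measurable and $L_t=L_{t-1}\,e^{\lambda x_t}\,e^{-\lambda^2\phi(\lambda M)\EE[x_t^2\mid\cG_t]}$, the previous display gives $\EE[L_t\mid\cG_t]\le L_{t-1}$, so $\{L_t\}$ is a non-negative supermartingale with $\EE[L_n]\le L_0=1$. On the event $\{S_n\ge a,\,W_n\le v\}$ one has $L_n\ge\exp\!\bigl(\lambda a-\lambda^2\phi(\lambda M)v\bigr)$, and hence Markov's inequality yields $\Pr\bigl(S_n\ge a,\ W_n\le v\bigr)\le\exp\!\bigl(-\lambda a+\lambda^2\phi(\lambda M)v\bigr)$ for every $\lambda\ge 0$.

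Finally I would bound $\phi$ and optimize in $\lambda$. From $\phi(u)=\sum_{k\ge 0}u^k/(k+2)!$ and $(k+2)!\ge 2\cdot 3^k$ one gets $\phi(u)\le\frac12\sum_{k\ge0}(u/3)^k=\frac1{2(1-u/3)}$ for $0\le u<3$. Restricting to $\lambda\in[0,3/M)$ and choosing $\lambda=a/(v+aM/3)$, so that $1-\lambda M/3=v/(v+aM/3)$, makes the exponent at most $-\frac{a^2}{v+aM/3}+\frac{a^2}{2(v+aM/3)}=-\frac{a^2}{2v+2aM/3}$, which is exactly the claimed bound. There is no real obstacle here---this is a textbook inequality---but the points needing care are the monotonicity of $\phi$ (so that only the one-sided bound $x_i\le M$ is actually used), the device of intersecting with $\{W_n\le v\}$ that lets the supermartingale argument go through even though $W_n$ is random, and the explicit choice of $\lambda$ that realizes the optimum of the Bernstein-type exponent.
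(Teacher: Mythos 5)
Your proof is correct, and it is the standard exponential--supermartingale argument for Freedman's inequality. Note that the paper itself offers no proof of this statement: it is an auxiliary lemma imported verbatim from the cited reference (Cesa-Bianchi and Lugosi, 2006), so there is nothing to compare against; your argument is essentially the one found in that textbook. All the delicate points are handled properly --- the monotonicity of $\phi(u)=(e^u-1-u)/u^2$ so that only the one-sided bound $x_i\le M$ is used, the $\cG_t$-measurability of $W_{t}$ needed for the supermartingale step, the restriction of the event to $\{W_n\le v\}$ so that Markov's inequality applies despite $W_n$ being random, and the choice $\lambda=a/(v+aM/3)$ that produces exactly the exponent $-a^2/(2v+2aM/3)$.
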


\bibliographystyle{ims}
\bibliography{reference}

\end{document}